%% file: main.tex
\documentclass{article}

\usepackage[preprint]{neurips_2026}

\usepackage[utf8]{inputenc} 
\usepackage[T1]{fontenc}    
\usepackage{hyperref}       
\usepackage{url}            
\usepackage{booktabs}       
\usepackage{amsfonts}       
\usepackage{nicefrac}       
\usepackage{microtype}      
\usepackage[table]{xcolor}
\usepackage{subcaption}
\usepackage{wrapfig}
\usepackage{multirow}
\usepackage{booktabs}
\usepackage{array}
\usepackage{enumitem}

\usepackage{amsthm}
\usepackage{amsmath,amssymb,mathtools}
\usepackage{algorithm}
\usepackage{algpseudocode}

\usepackage{booktabs}
\usepackage{multirow}
\usepackage{makecell}
\usepackage{graphicx}

\title{
Queryable LoRA: Instruction-Regularized Routing Over Shared Low-Rank Update Atoms
}

\newtheorem{theorem}{Theorem}[section]
\newtheorem{corollary}{Corollary}[theorem]
\newtheorem{lemma}[theorem]{Lemma}
\newtheorem{assumption}[theorem]{Assumption}

\newtheorem{proposition}[theorem]{Proposition}

\newcommand{\std}[1]{{\scriptsize $\pm$#1}}

\author{%
  Omatharv Vaidya
  \\
  Department of Computer Science\\
  University of Texas at Austin\\
  \texttt{vomatharv@texas.edu} \\
  \And
  Connor T. Jerzak \\
  Department of Government \\
  University of Texas at Austin\\
  \texttt{connor.jerzak@austin.utexas.edu} \\
  \And
  Nhat Ho \\
  Department of Statistics and Data Sciences \\
  University of Texas at Austin \\
  \texttt{minhnhat@utexas.edu} \\
  \And
  Chandrajit Bajaj \\
  Department of Computer Science \\
  University of Texas at Austin \\
  \texttt{bajaj@cs.utexas.edu} \\
}

\begin{document}

\maketitle

\input{Chapters/0_Abstract}
\input{Chapters/1_Introduction}
\input{Chapters/2_Problem_setup}
\input{Chapters/3_Approach}

\input{Chapters/4_Empirical_study}
\input{Chapters/5_Theoretical_results}

\input{Chapters/6_Conclusion}

\clearpage 

\bibliographystyle{plainnat} 
\bibliography{references}    

\input{Chapters/7_Appendix}




\end{document}

%% file: Chapters/0_Abstract.tex
\begin{abstract}
We present a data-adaptive method for parameter-efficient fine-tuning of large neural networks. Standard low-rank adaptation methods improve efficiency by restricting each layer update to a fixed low-rank form, but this static parameterization can be too rigid when the appropriate correction depends on the input and on the evolving depth-wise computation of the network. Our approach replaces a purely layer-local adapter with a shared queryable memory of low-rank update atoms. For each block of layers, the model forms a query from the current low-rank state and a running summary of previous blocks, uses this query to retrieve a content-dependent combination of shared update components via attention, and applies the resulting routed operator within the low-rank bottleneck. In this way, the method retains the efficiency and scalability of low-rank adaptation while allowing the effective update to vary across inputs and to share reusable structure across layers. The resulting architecture provides a principled middle ground between static LoRA-style updates and fully generated parameter updates: it remains compact and parameter-efficient while supporting dynamic, context-sensitive adaptation. Further, we incorporate instruction-regularization by augmenting routing logits with a language-induced prior over update atoms, thereby biasing the selection of low-rank transformations toward semantically relevant directions without generating unconstrained parameter updates. Experiments on noisy non-linear regression tasks and LLM fine-tuning suggest that this queryable update-memory formulation can improve final test performance and training stability compared to standard low-rank adaptation, while using a comparable number of trainable parameters.
\end{abstract}

%% file: Chapters/1_Introduction.tex
\section{Introduction}

Large language models (LLMs) are now widely adapted to downstream tasks through fine-tuning. Standard low-rank adaptation methods for LLMs---most prominently, LoRA \citep{hu2022lora}---achieve parameter efficiency by restricting each layer update to a fixed, layer-local low-dimensional subspace. This approach can be effective; however, it introduces some key structural limitations. For instance, the same adapter is reused for every input, even though the optimal low-rank correction might vary substantially across examples and stages of computation. Additionally, LoRA fragments trainable capacity across layers by assigning each layer an independent update; consequently, recurring adaptation patterns must be relearned independently at multiple depths.
In many cases, the appropriate correction may depend not only on the current hidden representation, but also on information accumulated from earlier layers \citep{ShareLoRA2024, VBLoRA2024, AttentionResiduals2026}.

Our goal, then, is to retain the efficient, low-rank bottleneck of LoRA and the reusable structure that makes LoRA attractive, while allowing the effective correction to vary with the current example and the depth-wise state of the computation. We therefore introduce a shared memory of small-rank space-update atoms and a blockwise router that selects a sparse mixture of these atoms from the model's current low-rank state and its running depth summary.

We introduce a queryable global memory of rank-space update atoms and a lightweight blockwise router that assembles an example-dependent operator inside the LoRA bottleneck. The method learns a set of reusable transformations in the low-rank space and combines them using routing weights that depend on the current low-rank representation and a running summary of earlier blocks. The approach keeps the main advantages of low-rank adaptation from LoRA; efficiency and scalability \citep{hu2022lora}, and additionally, makes the adapter significantly more flexible because the effective update can vary with the input. Layers that need similar kinds of corrections can share the same update components, and information from earlier parts of the network can help determine which update directions to use later. 

Our contributions are:

\begin{itemize}[leftmargin=*]
    \item \textbf{A queryable update memory:} We introduce a globally shared memory bank of low-rank update atoms. This setup replaces static layer modifications with an adaptive mechanism that configures the parameter space based on the current context. To retrieve these updates, the model uses a routing mechanism that evaluates the current low-rank representation and a running depth summary of earlier blocks.
    
    \item \textbf{Instruction regularization:} We regularize the selection of update atoms using language instructions as a semantic prior \citep{TextToLoRA2025, DocToLoRA2026}. This method guides low-rank transformations toward semantically relevant updates without allowing unconstrained parameter changes.
    
    \item \textbf{Empirical gains in stability and accuracy:} We provide empirical evidence that the queryable update-memory formulation can improve held-out performance and optimization stability on noisy nonlinear regression tasks and several LLM fine-tuning benchmarks. Our experiments show these improvements hold across both noisy non-linear regression tasks and LLM benchmarks. Furthermore, the network achieves these gains while using a number of trainable parameters comparable to standard low-rank adaptation.
    
    \item \textbf{Theoretical guarantees for bounded updates:} We prove that our dynamic updates remain bounded and norm-controlled. Because the model forms the effective update from a convex mixture of shared atoms, the adapter gains flexibility without sacrificing the norm control that makes standard LoRA-style fine-tuning reliable. We also establish that the routing weights solve a principled optimization problem, ensuring language priors guide the model without arbitrarily overriding its internal state signal.
\end{itemize}

%% file: Chapters/2_Problem_setup.tex
\section{Problem Setup, Notation \& Related Work}

Formally, in fine-tuning, we seek to adapt a frozen pretrained network $f_{\theta_0}$ for a new downstream task. Let the baseline network contain $L$ layers with pre-trained weight matrices $\boldsymbol{W}^0_{\ell}$. Standard low-rank adaptation, such as LoRA, injects trainable rank-decomposition matrices $\boldsymbol{A}_{\ell}$ and $\boldsymbol{B}_{\ell}$ into each adapted layer $\ell$. Rather than updating the full model, LoRA applies a low-rank update to the weight matrix, $\Delta \boldsymbol{W}_{\ell} = \frac{\alpha}{r} \boldsymbol{B}_{\ell} \boldsymbol{A}_{\ell}$, where the bottleneck rank $r$ restricts the adapter's capacity. During the forward pass, the network updates its hidden representation $\boldsymbol{h}_{\ell}$ via $\boldsymbol{h}_{\ell+1} = \phi(\boldsymbol{W}^0_{\ell}\boldsymbol{h}_{\ell} + \Delta \boldsymbol{W}_{\ell}\boldsymbol{h}_{\ell})$.

As noted earlier, LoRA restricts fine-tuning updates to a fixed, layer-local low-dimensional subspace. While this makes LoRA highly scalable, this static formulation poses key structural limitations. For instance, the network must reuse the exact same adapter for every input, even when specific examples or depth-wise stages require different corrections. Furthermore, LoRA fragments its trainable capacity across independent layers. As a result, the model must independently relearn useful adaptation patterns that recur throughout the network. 

Several methods have attempted to modify LoRA to address some of these limitations. For example, magnitude-direction variants restructure the low-rank matrices to improve optimization dynamics \citep{liu2024dora}. Sharing-based variants reduce redundant local structure across layers \citep{ShareLoRA2024,Lily2024,VBLoRA2024}. While these methods improve training efficiency, their final adapters remain static. They do not allow the low-rank transformation to adapt jointly to the current hidden state and the preceding depth-wise computation inside the forward pass. Alternatively, mixture-of-experts approaches like MixLoRA dynamically construct input-tailored low-rank matrices to mitigate task interference, though they typically focus on multimodal settings rather than depth-wise state tracking \citep{MixLoRA2024, luo2024moelora}. 

Furthermore, other methods synthesize adapter parameters dynamically using external information. Text-to-LoRA-based architectures generate dense parameter updates directly from language instructions \citep{TextToLoRA2025, DocToLoRA2026}. Although generating hypernetwork weights directly from text embeddings provides additional flexibility, this unconstrained approach incurs quadratic scaling in parameter count due to the need to produce dense $d\times d$ weight matrices \citep{HyRA2025}.  Our method avoids this instability by using language as a semantic prior for retrieving fixed, reusable rank-space primitives. This preserves the efficiency of the low-rank bottleneck and yields norm-bounded updates while still supporting dynamic, context-sensitive adaptation.

\begin{figure}[h]
\centering
\fbox{\parbox{0.95\linewidth}{
\textbf{Intuition:} \textit{Instruction-regularized queryable LoRA updates.}
\vspace{2pt}
\begin{itemize}[leftmargin=12pt, itemsep=1pt, parsep=0pt, topsep=0pt, partopsep=0pt]
  \item \textbf{Frozen backbone} $f_{\theta_0}$: pretrained Transformer kept fixed during adaptation.
  \item \textbf{LoRA adapter} $\Delta W_\ell = B_\ell A_\ell$: low-rank update inserted into layer/projection $\ell$ instead of full fine-tuning.
  \item \textbf{Instruction} $s$: task or user directive that describes what behavior the adapter should induce.
  \item \textbf{Block query} $q_b$: a state and depth-dependent query formed from the layer prior, the current rank-space state, the accumulated depth summary, and optionally the instruction embedding.
  \item \textbf{Queryable rank-space memory}: a shared bank of atoms $\{C_m\}_{m=1}^M$ with keys $\{k_m\}_{m=1}^M$; the router selects a sparse convex mixture $S_b(c)=\sum_m \alpha_{b,m}(c)C_m$ inside the LoRA bottleneck.
  \item \textbf{Instruction regularization}: a language-induced prior over atoms is added to the routing logits, biasing retrieval toward semantically relevant rank-space atoms.
  \item \textbf{Global sharing}: the same $G_\phi$ amortizes adaptation across tasks, layers, and projections, giving LoRA more transfer than independent per-task adapters.
\end{itemize}
}}
\caption{Overview of the proposed method. The backbone model remains frozen while LoRA provides efficient low-rank adaptation. A query mechanism then retrieves shared update atoms based on the current computation, and language instructions regularize this retrieval process toward semantically meaningful adapter updates.}
\label{fig:iqgtu-intuition}
\end{figure}

In contrast to static LoRA, parameter-heavy hypernetworks, and methods that synthesize weights directly from text, our approach dynamically assembles updates from a globally shared memory of rank-space atoms. As Figure \ref{fig:iqgtu-intuition} outlines, we replace the rigid low-rank bottleneck with a queryable operator. During the forward pass, the model evaluates its current internal state alongside an attention-based summary of preceding depth-wise activations. This handling of the computational trajectory allows the model to selectively retrieve and reuse structural updates across both layers and tasks. By using depth-aware hidden states for routing and language for regularization, the model assembles context-sensitive, highly reusable corrections without the explosive parameter costs or instability of unconstrained generation. \S\ref{s:ApproachMain}  formalizes this architecture.

%% file: Chapters/3_Approach.tex

\section{Approach: Instruction Queryable Memory for Data-Adaptive PEFT}\label{s:ApproachMain}

This section formalizes the proposed instruction-regularized queryable update memory. Recall that standard LoRA restricts the fine-tuning update to a fixed, layer-specific low-rank bottleneck, $\Delta \boldsymbol{W}_\ell = \frac{\alpha_L}{r} \boldsymbol{B}_\ell \boldsymbol{A}_\ell$, where $\alpha_L$ denotes the LoRA scaling factor. To support dynamic, context-sensitive adaptation without the explosive parameter cost of hypernetworks, we replace this rigid transformation with a queryable operator $S_b(c)$ routed inside the bottleneck. The resulting input- and instruction-dependent update becomes:
\begin{equation}
\Delta \boldsymbol{W}_{\ell}(\boldsymbol{h}_{\ell}; c) = \frac{\alpha}{r}\boldsymbol{B}_{\ell} \bigl(\boldsymbol{I}_r + g_{\ell}S_b(c)\bigr) \boldsymbol{A}_{\ell},
\end{equation}
where $g_\ell = \sigma(\eta_\ell) \in (0,1)$ is a learned scalar gate. Because $S_b(c) \in \mathbb{R}^{r \times r}$ mixes the coordinates purely within the rank space, it is expressive enough to rotate and scale the adapter direction, yet compact enough to be drawn from a globally shared memory bank of atoms $\mathcal{C} = \{c_m\}_{m=1}^M$ paired with keys $\{\boldsymbol{k}_m\}_{m=1}^M$. This global sharing strategy is grounded in prior research showing that adaptation patterns frequently recur across network depths \citep{ShareLoRA2024}, but unlike ShareLoRA, which ties weights to form a static, albeit shared, layer-agnostic adapter, our architecture treats these shared components as a dynamically queryable vocabulary. By maintaining a global bank of rank-space atoms rather than fixed shared matrices, the network can flexibly retrieve and recombine these fundamental structural building blocks on the fly. This allows the model to construct an input- and depth-dependent transformation, breaking the rigidity of static parameter sharing while preserving its parameter efficiency.

\paragraph{Blockwise Routing and State Summarization.}
To amortize routing and encourage consistent structural adaptations, layers are partitioned into continuous blocks $\{\mathcal{B}_b\}_{b=1}^B$ \citep{AttentionResiduals2026}. The operator $S_b(c)$ is computed once per block using a router conditioned on the layer prior, the current block-entry state, the depth summary of earlier blocks, and, optionally, the external language instruction $c$.

Let $\boldsymbol{s}_{\ell_b}^{\mathrm{entry}} = \boldsymbol{A}_{\ell_b}\mathcal{D}_{\ell_b}(\boldsymbol{h}_{\ell_b})$ be the rank-space state at the entry of block $b$, and $e(c) \in \mathbb{R}^{d_c}$ be a frozen text embedding of the instruction. We first construct an instruction-conditioned pre-query:
\begin{equation}
\boldsymbol{q}_b^{(0)} = \boldsymbol{w}_{\ell_b} + \boldsymbol{Q}_{\mathrm{cur}} \boldsymbol{s}^{\mathrm{entry}}_{\ell_b} + \lambda_{\mathrm{ctx}} \boldsymbol{Q}_{\mathrm{ctx}} e(c),
\end{equation}
where $\boldsymbol{w}_{\ell_b}$ is the stable layer prior. Rather than discarding past computation, we allow blocks to conditionally retrieve from earlier layers via an attention-based depth summary. Defining the block average $\bar{\boldsymbol{s}}_i = \frac{1}{|\mathcal{B}_i|}\sum_{\ell \in \mathcal{B}_i} \boldsymbol{s}_{\ell}$, the running summary is $\boldsymbol{u}_{b-1}^{\mathrm{att}} = \sum_{i=1}^{b-1} \beta_{i\to b}\,\bar{\boldsymbol{s}}_i$. Here, the attention weights $\beta_{i\to b}$ are proportional to $\exp\bigl(\langle \widehat{\boldsymbol{Q}}_{\mathrm{dep},q}\boldsymbol{q}_b^{(0)}, \widehat{\boldsymbol{Q}}_{\mathrm{dep},k}\bar{\boldsymbol{s}}_i \rangle / (\sqrt{d_k}T_{\mathrm{dep}})\bigr)$, where $\widehat{\boldsymbol{x}} = \operatorname{RMSNorm}(\boldsymbol{x})$ \citep{zhang2019root, vaswani2017attention}. The final state query integrates this historical context: $\boldsymbol{q}_b = \boldsymbol{q}_b^{(0)} + \boldsymbol{Q}_{\mathrm{dep}}\boldsymbol{u}_{b-1}^{\mathrm{att}}$.

\paragraph{Instruction Regularization.}
Unlike Text-to-LoRA methods that generate dense parameter updates solely from language \citep{TextToLoRA2025}, this approach uses language as an optional semantic prior to retrieve fixed, reusable rank-space primitives. We compute a language prior distribution over the atoms $p_m(c) \propto \exp\bigl(\langle \widehat{\boldsymbol{R}}_{\mathrm{ctx}}e(c), \widehat{\boldsymbol{k}}_m \rangle / (\sqrt{d_k}T_{\mathrm{lang}})\bigr)$. We then regularize the state-dependent routing logits $\zeta_{b,m} = \langle \widehat{\boldsymbol{q}}_b, \widehat{\boldsymbol{k}}_m \rangle / (\sqrt{d_k}T_{\mathrm{attn}})$ with this instruction prior:
\begin{equation}
\tilde{\zeta}_{b,m}(c) = \zeta_{b,m} + \tau_{\mathrm{lang}}\log p_m(c),
\end{equation}
where $\tau_{\mathrm{lang}}$ controls the strength of the language conditioning. The routed operator $S_b(c)$ is finally formed via a sparse top-$k$ convex combination: $S_b(c) = \sum_{m \in I_b} \alpha_{b,m}^{(\mathrm{top}\,k)} c_m$, where $\boldsymbol{\alpha}_b$ is the softmax over the $k$ largest values of $\tilde{\zeta}_{b,m}(c)$. Setting $\tau_{\mathrm{lang}}=0$ and $\lambda_{\mathrm{ctx}}=0$ recovers a purely state-dependent dynamic adapter, while setting $g_\ell=0$ recovers standard LoRA, as we show below. 

Importantly, external instructions are not required for routing. The memory is already queryable from the model state through the layer prior, the current rank-space activation, and the accumulated depth summary. Language does not generate adapter weights. Instead, it influences routing through two controlled channels: the term $\lambda_{\mathrm{ctx}}Q_{\mathrm{ctx}}e(c)$ lets the instruction shape the block query and therefore the depth/state-dependent route, while $\tau_{\mathrm{lang}}\log p_m(c)$ provides an explicit atom-level prior. Both mechanisms bias retrieval from the same fixed memory bank of reusable rank-space atoms. Figure \ref{fig:VizAbstractQuery} provides an architectural overview; further details can be found in \S\ref{s:MethodDetails}.

\begin{figure}[h]
 \begin{center}  \includegraphics[width=0.8\linewidth]{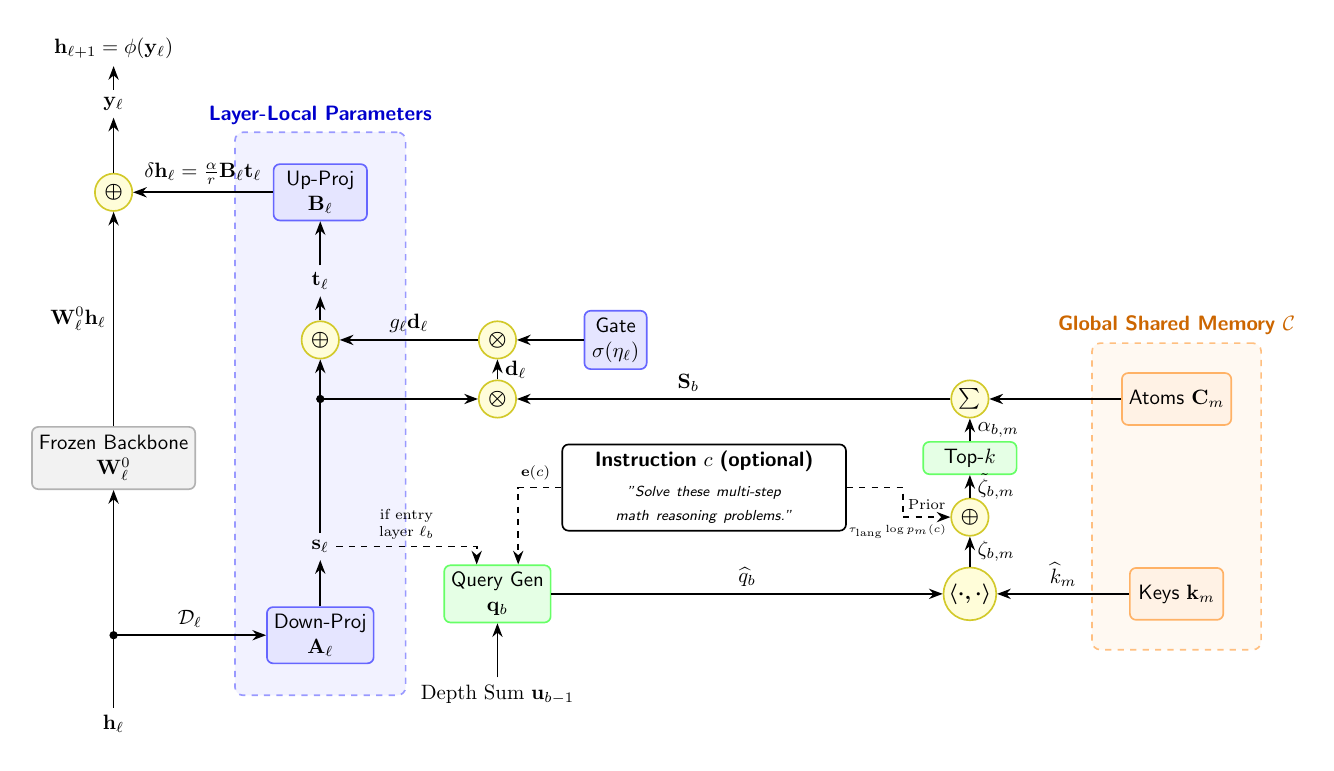}
\caption{Instruction-regularized global atomic updates of LoRA. Standard LoRA would only have the down- and up-projection without incorporating information from global memory or from instructions.}\label{fig:VizAbstractQuery}
\end{center}
\end{figure}

%% file: Chapters/4_Empirical_study.tex
\section{Empirical Evidence}\label{s:EmpiricsMain}

The approach above shows how shared queryable memory can inject dynamic capacity into a low-rank bottleneck. This architectural flexibility, however, introduces routing complexity that could disrupt optimization if the network fails to learn meaningful update atoms. We therefore designed empirical evaluation to test whether state-dependent routing translates to measurable generalization gains. Because isolating this routing mechanism directly within a large language model is difficult, we structure our analysis in two stages. We first evaluate the core queryable memory on synthetic, highly non-convex regression tasks, which isolates its capacity to adapt to shifting local structures. 

\subsection{Experiments on Synthetic 2-D Non-Convex Functions}

In this synthetic experiment, we evaluate the queryable adapter on nine two-dimensional stochastic non-convex benchmark functions drawn from \cite{simulationlib} and compare it against representative PEFT baselines.

Table \ref{table:NonConvexTrain} presents the post-training loss results for state-of-the-art LoRA-based PEFT methods, as compared against full fine-tuning with SOAP \citep{vyas2024soap}. Similarly, Table \ref{table:NonConvexTest} presents the test loss results. The number of epochs for pre-training was 300 and for post-training 5000, on dataset sizes of 3,000 and 1,200, respectively. The learning rate was 3\!$\times$\!10$^{-3}$ / 5\!$\times$\!10$^{-4}$ for pre-/post-training, respectively; the neural backbone was a standard 8-depth, 256-hidden-dimension Transformer \citep{vaswani2017attention}; we use AdamW \citep{loshchilov2017decoupled} as the optimizer. In the pre-training regime, data are generated from the distribution of noisy two-dimensional regression samples used to learn the frozen backbone; in the post-training regime, data are freshly sampled from the target stochastic non-convex function with shifted parameters (in particular, we vary the coefficients on the non-linear terms in the functions and rotate the outputs). The queryable approach keeps the number of trainable parameters within 10\% of LoRA across all runs.

\begin{table}[htbp]
    \renewcommand{\arraystretch}{1.2} 
    \centering
    \scriptsize
    \caption{Summary -- \textbf{MSE Training Loss}. Comparative post-training performance for modeling stochastic non-convex functions over $5$ independent runs. The values following ``$\pm$'' denote the standard deviation across independent runs. \textbf{Bold} = best overall result.
    }
    \label{table:NonConvexTrain}
    \resizebox{0.98\textwidth}{!}{%
    \begin{tabular}{lcccccc}
        \toprule
        \textbf{Function} & \textbf{LoRA} \citep{hu2022lora} & \textbf{DoRA} \citep{liu2024dora} & \textbf{HyRA} \citep{HyRA2025} & \textbf{RepLoRA} \citep{truong2025replora} & \textbf{DoRAN} \citep{diep2025doran} & \textbf{Ours} \\
        \midrule
       Ackley           & 0.0754\std{0.0225} & 0.1544\std{0.0295} & 40.974\std{75.620} & 0.0632\std{0.0278} & 0.1159\std{0.0236} & \textbf{0.0559\std{0.0159}} \\
        Dropwave          & 24.0509 \std{42.8973} & 266.2681 \std{577.2793} & 1437.6514 \std{193.1731} & 1.3918 \std{1.3943} & 259.8996 \std{580.7419} & \textbf{0.2527 \std{0.1777}} \\
        Langermann      & 2.2579 \std{2.6938} & 2.4252 \std{2.2126} & 3.7290 \std{2.2653} & 2.2295 \std{2.7166} & \textbf{1.5239 \std{2.1633}} & 1.8061 \std{2.3035} \\
        Levy              &  0.0124 \std{0.0053} & 0.01388 \std{0.0065} & 248.3561 \std{154.4848} & 0.0027 \std{0.0010} & 0.0073 \std{0.0034} & \textbf{0.0009 \std{0.0003}} \\
        Matyas            & 0.0034\std{0.0004} & 0.0061\std{0.0006} & 1500.0727\std{83.0140} & 0.0041 \std{0.0003} & 0.0125 \std{0.0025} & \textbf{0.0032 \std{0.0003}} \\
        Michalewicz       & 0.0590 \std{0.0231} & 0.0745 \std{0.0250} & 0.1417 \std{0.0171} & 0.0465 \std{0.0368} & 0.0454 \std{0.0408} & \textbf{0.0380 \std{0.0202}} \\
        Rastrigin & 22.4812 \std{4.5454} & 25.4298 \std{7.0738} & 998.8924 \std{1676.4226} & 9.0337 \std{5.2511} & 5.5401 \std{1.4612} & \textbf{3.3663 \std{2.6077}} \\
        Sin-Cos           & 0.0002\std{0.0001} & 0.0003\std{0.0001} & 0.7841\std{0.2493} & 0.0001\std{0.0001} & 0.0003\std{0.0001} & \textbf{0.0001\std{0.0000}} \\
        Styblinski-Tang   & 2.0728 \std{0.5664} & 2.2513 \std{0.4360} & 29677.6550 \std{43197.9384} & 0.6690 \std{0.1545} & 2.5213 \std{0.7855} & \textbf{0.3862 \std{0.0965}} \\
        \bottomrule
    \end{tabular}
}
\end{table}

\begin{table}[htbp]
    \renewcommand{\arraystretch}{1.2} 
    \centering
    \scriptsize
    \caption{Summary -- \textbf{MSE Test Loss}. Comparative post-training performance for modeling stochastic non-convex functions over $5$ independent runs. The values following the ``$\pm$'' show the standard deviation of the mean across runs. \textbf{Bold} = best overall result.
    }
    \label{table:NonConvexTest}
    \resizebox{0.98\textwidth}{!}{%
    \begin{tabular}{lcccccc}
        \toprule
        \textbf{Function} & \textbf{LoRA} \citep{hu2022lora} & \textbf{DoRA} \citep{liu2024dora} & \textbf{HyRA} \citep{HyRA2025} & \textbf{RepLoRA} \citep{truong2025replora} & \textbf{DoRAN} \citep{diep2025doran} & \textbf{Ours} \\
        \midrule
        Ackley           & 0.3734\std{0.0458} & 0.3716\std{0.0475} & 39.4747\std{73.3841} & \textbf{0.3696\std{0.0482}} & 0.3753\std{0.0433} & 0.3726\std{0.0495} \\
        Dropwave          & 64.5079 \std{57.4368} & 374.8006 \std{786.5186} & 1677.5055 \std{242.8430} & 12.7792 \std{10.4422} & 357.0902 \std{795.7947} & \textbf{2.2263 \std{1.5457}} \\
        Langermann & 2.2596 \std{2.3986} & 2.5835 \std{1.9773} & 3.7396 \std{2.3199} & 2.2306 \std{2.4152} & \textbf{1.6682 \std{2.0511}} & 1.9344 \std{2.1697} \\
        Levy              & 1.3273 \std{0.2269} & 1.3395 \std{0.2296} & 219.8848 \std{113.1329} & 1.5735 \std{0.1907} & 1.4383 \std{0.2535} & \textbf{1.3028 \std{0.2163}} \\
        Matyas            & 0.0255 \std{0.0051} & 0.0231 \std{0.0018} & 1534.4436 \std{57.6542} & 0.0233 \std{0.0026} & 0.0258 \std{0.0034} & \textbf{0.0229 \std{0.0035}} \\
        Michalewicz   & 0.1301 \std{0.0076} & 0.1299 \std{0.0074} & 0.1321 \std{0.0081} & 0.1297 \std{0.0072} & \textbf{0.1277 \std{0.0087}} & 0.1288 \std{0.0076} \\
        Rastrigin & \textbf{98.7647 \std{4.4966}} & 98.8097 \std{3.6418} & 1048.7748 \std{1762.0819} & 99.2913 \std{4.0321} & 101.5798 \std{1.5538} & 99.6905 \std{4.3598} \\
        Sin-Cos           & 0.0039\std{0.0004} & 0.0040\std{0.0003} & 0.8615\std{0.2558} & 0.0039\std{0.0004} & 0.0046\std{0.0008} & \textbf{0.0039\std{0.0004}} \\
        Styblinski-Tang  &  67.2361 \std{21.6650} & 75.5779 \std{28.1219} & 30586.1356 \std{42757.6886} & 72.1448 \std{27.7898} & 70.1036 \std{21.2876} & \textbf{62.2928 \std{17.7273}} \\
        \bottomrule
    \end{tabular}
}
\end{table}

Here, inf refers to the case in which the MSE Loss was above $100000$. Overall, performance gains here do not appear to be uniformly explained by the degree of non-convexity alone. Instead, the queryable adapter appears most beneficial on targets with pronounced local heterogeneity, especially Dropwave, where a single static low-rank correction is likely too rigid. On smoother targets or highly regular repeated landscapes such as Sin-Cos, Matyas, Ackley, and Rastrigin, the advantage is smaller, suggesting that static PEFT might already capture the dominant structure. Overall, these results indicate that global queryable rank-space atoms can improve adaptation when the target function contains heterogeneous local structure.

\S\ref{s:AdditionalEmpiricalResults} shows even stronger relative performance in a deep, narrow architecture with $32$ layers and width $32$, suggesting that the queryable atomic updates may provide an optimization benefit in deeper networks as well: because the routed operator is applied as a residual transformation inside the low-rank bottleneck and its atoms are shared across blocks, gradient information from many depths can reinforce the same reusable rank-space primitives rather than being confined to isolated layer-specific adapters. 
We will return to this point below. 


\subsection{LLM Fine-Tuning Results}

We next compare our approach with representative static, routed, and generated PEFT baselines for language-model fine-tuning. The following table compares performance. The number of post-training epochs is $25$; the learning rate is $0.0002$. To ensure consistent evaluation, we selected the final checkpoint for each method based on the highest accuracy achieved on the training set. See \S\ref{s:Licenses} for more information about datasets used here.

In \textsc{General} evaluation tasks, we find that instruction-queryable routing yields consistent held-out gains: despite near-saturated training accuracy across methods, it outperforms LoRA on every benchmark and is the strongest method on six of seven tasks. This suggests that the language prior improves generalization and routing stability rather than simply adding memorization capacity.

\begin{table*}[htb]
    \centering
    \caption{Comparative post-training performance for \textsc{General} tasks}
    \label{table:4}
    \scriptsize
    \setlength{\tabcolsep}{5pt}
    \resizebox{0.8\textwidth}{!}{%
    \begin{tabular}{llccccccccc}
        \toprule
        \multirow{2}{*}{\textbf{Method}} & \multirow{2}{*}{\textbf{Type}} 
        &\multicolumn{1}{c} {\textbf{GPQA-Diamond} }
        & \multicolumn{1}{c}{\textbf{MBPP}} 
        & \multicolumn{1}{c}{\textbf{ARC}}
        &\multicolumn{1}{c}  {\textbf{Super-GLUE}}
        &\multicolumn{1}{c}  {\textbf{OpenBookQA}}
        &\multicolumn{1}{c}  {\textbf{RACE}}
        &\multicolumn{1}{c}  {\textbf{HellaSwag}}\\
        \cmidrule(lr){3-3} \cmidrule(lr){4-4} \cmidrule(l){5-5} \cmidrule(l){6-6} \cmidrule(l){7-7} \cmidrule(l){8-8} \cmidrule(l){9-9}
        & 
        & \makecell{\textbf{Qwen0.5B}}
        & \makecell{\textbf{Qwen0.5B}}
        & \makecell{\textbf{Qwen0.5B}}
        & \makecell{\textbf{Qwen0.5B}}
        & \makecell{\textbf{Qwen0.5B}}
        & \makecell{\textbf{Qwen0.5B}}
        & \makecell{\textbf{Qwen0.5B}}\\
        \midrule
        
        \multirow{1}{*}{LoRA \citep{hu2022lora}}
            & Test Accuracy  & 0.253 & 0.290 & 0.595 & 0.793 & 0.700 & 0.595 & 0.617 \\
        \midrule
        
        \multirow{1}{*}{DoRA \citep{liu2024dora}}
            & Test Accuracy  & 0.273 & 0.290 & 0.545 & 0.680 & 0.592 & 0.502 & 0.514  \\
        \midrule

        \multirow{1}{*}{RepLoRA \citep{truong2025replora}}
            & Test Accuracy & 0.273 &  0.295 & 0.585 & 0.789 & 0.696 & 0.585 &  \textbf{0.627} \\
        \midrule
        
        \multirow{1}{*}{Queryable LoRA}
            & Test Accuracy  & 0.293 & 0.290 & \textbf{0.656} & 0.795 & 0.696 & 0.585 & 0.602 \\
        \midrule
        
        \multirow{1}{*}{Instruction-Queryable LoRA}
            & Test Accuracy  & \textbf{0.323} & \textbf{0.300} & 0.651 & \textbf{0.797} & \textbf{0.708} & \textbf{0.599} & 0.623\\
        \bottomrule
    \end{tabular}
    }
\end{table*}

In \textsc{Mathematics}-related tasks, we find a somewhat more heterogeneous but still supportive pattern: queryable and instruction-queryable variants improve or match the strongest test accuracy on several reasoning benchmarks, especially GSM8K and Numina-Math. This pattern is consistent with the view that shared routed atoms are most useful when reasoning tasks benefit from reusable depth-wise adaptation: the largest gains appear on some multi-step math benchmarks.

\begin{table*}[htb]
    \centering
    \caption{Comparative post-training performance for \textsc{Mathematics} reasoning tasks.}
    \label{s:PostTrainingMath}
    \scriptsize
    \setlength{\tabcolsep}{5pt}
    \resizebox{0.8\textwidth}{!}{
    \begin{tabular}{llccccccc}
        \toprule
        \multirow{2}{*}{\textbf{Method}} & \multirow{2}{*}{\textbf{Type}} 
        &\multicolumn{2}{c}{\textbf{gsm8k}}
        & \multicolumn{2}{c}{\textbf{MATH}} 
        &\multicolumn{2}{c}{\textbf{Orca-Math}}
        & \multicolumn{1}{c}{\textbf{Numina-Math}} \\
        \cmidrule(lr){3-4} \cmidrule(lr){5-6} \cmidrule(lr){7-8} \cmidrule(l){9-9}
        & 
        & \makecell{\textbf{Qwen0.5B}}
        & \makecell{\textbf{Mistral7B}}
        & \makecell{\textbf{Qwen0.5B}}
        & \makecell{\textbf{Mistral7B}}
        & \makecell{\textbf{Qwen0.5B}}
        & \makecell{\textbf{Mistral7B}}
        & \makecell{\textbf{Qwen0.5B}}\\
        \midrule
        
        \multirow{1}{*}{LoRA \citep{hu2022lora}}
            & Test Accuracy  & 0.312& \textbf{0.375} & 0.133& 0.153  & 0.304& 0.320 & 0.203 \\
        \midrule
        
        \multirow{1}{*}{DoRA \citep{liu2024dora}}
            & Test Accuracy  & 0.312 & \textbf{0.375} & 0.078 & 0.148 & 0.234& 0.203 & 0.203\\
        \midrule

        \multirow{1}{*}{RepLoRA \citep{truong2025replora}}
            & Test Accuracy & 0.285& 0.359  & 0.141& 0.156 & 0.304& 0.289& 0.170\\
        \midrule
        
        \multirow{1}{*}{Queryable LoRA}
            & Test Accuracy  & 0.292& \textbf{0.375} & \textbf{0.156}& 0.141 & \textbf{0.344}& \textbf{0.352}& 0.219\\
        \midrule
        
        \multirow{1}{*}{Instruction-Queryable LoRA}
            & Test Accuracy  & \textbf{0.324}& \textbf{0.375} & 0.141& \textbf{0.158}  & \textbf{0.344} & 0.328& \textbf{0.234}\\
        \bottomrule
    \end{tabular}
    }
\end{table*}

For additional results on a broader battery of $\leq 1$B models, including Qwen3, LiquidAI LFM2.5 variants, AMD ReasonLite variants, IBM Granite-4.0-350M, and HuggingFaceTB SmolLM2-360M-Instruct, see Table \ref{tab:model-battery-accuracy-full-matrix}. Here, checkpoints are selected with lowest validation fine-tuning loss. Performance there is similar (equaling or exceeding LoRA in 34 out of 39 cases), indicating the queryable approach performs well relative to the canonical PEFT method. 

To unpack some of the dynamics found in these overall accuracy scores, we also examine whether instruction-regularized queryable updates improve the adapter's optimization dynamics. Figure \ref{fig:grad-norm-layer} reports the per-layer adapter gradient norms after post-training, comparing LoRA, queryable routing without instruction regularization, and the full instruction-queryable method. The instruction-queryable adapter consistently receives stronger gradient signals across a wider range of layers, especially in middle and late layers, where static LoRA and the non-instruction queryable variant often receive weaker signals. These observed gradient profiles suggest that instruction-conditioned routing keeps the dynamic adapter pathway active across depth (see Figure \ref{fig:grad-concentration}).

\begin{figure}[htb]
    \centering
    \begin{minipage}[t]{0.38\textwidth}
        \centering
        \vspace{0pt}
        \includegraphics[width=\linewidth]{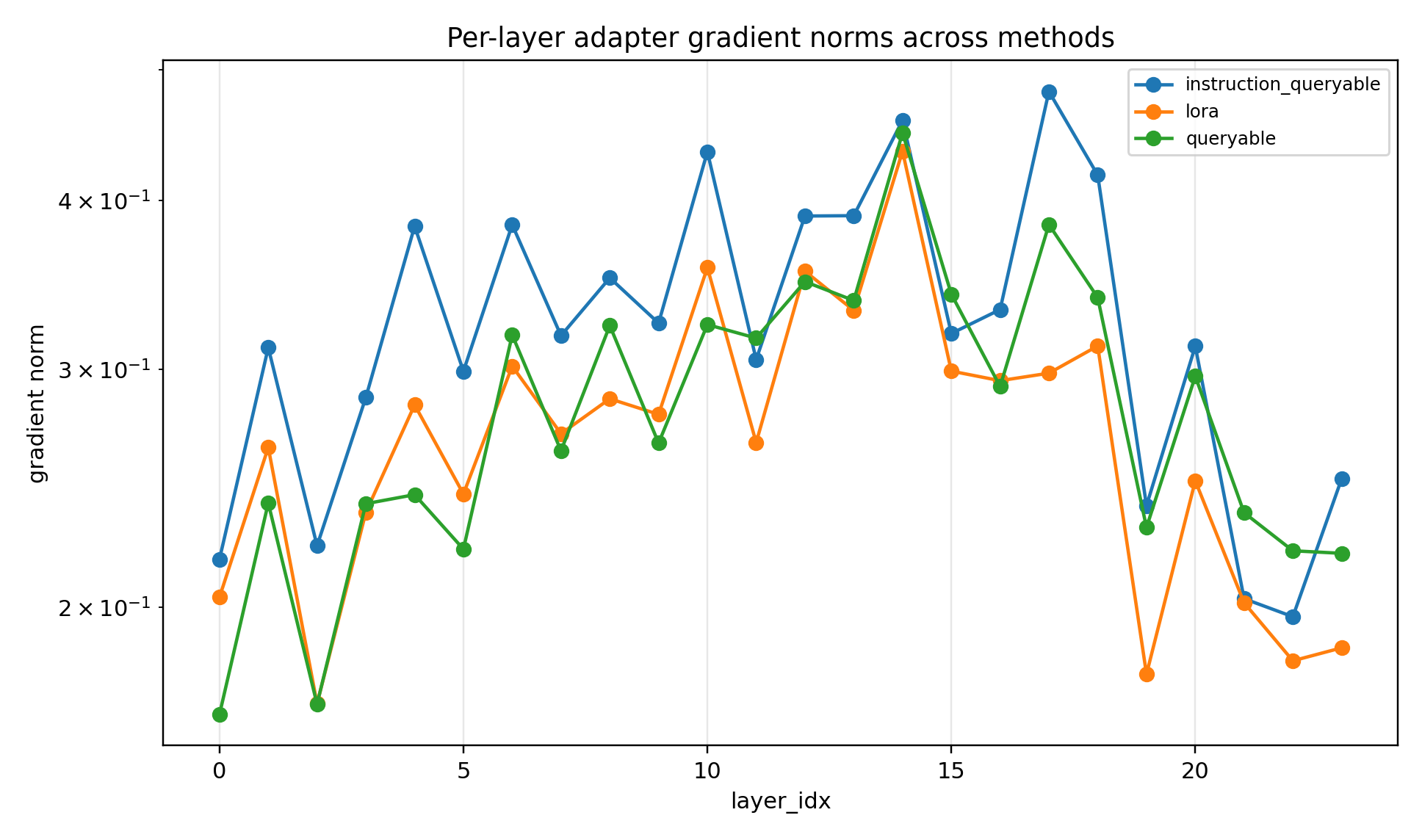}
        \caption{Per-layer adapter gradient norms across methods. The instruction-queryable method maintains stronger gradient flow across many layers.}
        \label{fig:grad-norm-layer}
    \end{minipage}%
    \hfill
    \begin{minipage}[t]{0.58\textwidth}
        \centering
        \vspace{0pt}
        \captionof{table}{Inference-time efficiency of trained adapters. Lower is better for forward latency, latency overhead, and FLOP overhead; higher is better for generation throughput. Timings exclude tokenization and dataloader construction. The proposed queryable variants are not faster than the static LoRA reference, but they are substantially faster than the more expressive RepLoRA, HyRA, and DoRAN baselines while adding only negligible adapter-side arithmetic.}
        \label{tab:inference_efficiency}
        \small
        \setlength{\tabcolsep}{2.5pt}
        \resizebox{\linewidth}{!}{%
        \begin{tabular}{lrrrrrrr}
        \toprule
        \textbf{Measurement} & \textbf{LoRA} & \textbf{DoRA} & \textbf{RepLoRA} & \textbf{HyRA} & \textbf{DoRAN} & \textbf{Queryable} & \textbf{Instr.-Queryable} \\
        \midrule
        Forward/prefill lat. (ms) $\downarrow$ & 33.1 & 30.3 & 53.0 & 83.3 & 51.0 & \textbf{42.3} & 46.5 \\
        Fwd. overhead vs LoRA (\%) $\downarrow$ & 0.0 & -8.4 & 60.2 & 151.7 & 54.1 & \textbf{27.9} & 40.5 \\
        Throughput (new toks/s) $\uparrow$ & 31.8 & 30.3 & 18.0 & 11.3 & 18.7 & \textbf{22.0} & 20.1 \\
        FLOP overhead vs LoRA (\%) $\downarrow$ & 0.0 & 0.0 & 0.0 & 0.0 & 0.0 & 0.6 & 0.6 \\
        \bottomrule
        \end{tabular}%
        }
    \end{minipage}
\end{figure}

To further explore our method's capacity for structured parameter reuse and stable adaptation across shifting domains, we evaluated the queryable updates in a sequential continual-learning setting without resetting the adapter's components between tasks. A detailed discussion of these dynamics—illustrating how the model maintains evaluation performance and controls atom route drift across consecutive benchmarks—can be found in \S\ref{s:Routing}. Overall, those results show that the instruction-queryable adapter successfully balances knowledge retention with flexibility. The model maintains a sparse, non-uniform memory access pattern to preserve reusable structures, while adapting to new tasks through localized, concentrated route drift rather than arbitrarily overwriting past atoms.

\paragraph{Inference-time Analysis.} We also measure the inference-time cost of the optimized trained adapters under matched model, target-module, rank, batch-size, sequence-length, and decoding settings. This diagnostic is included to separate two questions: plain LoRA is expected to remain the fastest static adapter, while the relevant systems question is whether the proposed dynamic routing is competitive with more expressive PEFT baselines. Table~\ref{tab:inference_efficiency} shows that the optimized queryable adapter adds moderate latency relative to LoRA, but is faster than RepLoRA, HyRA, and DoRAN in both forward/prefill latency and autoregressive generation throughput. The adapter-side arithmetic overhead remains negligible, indicating that the remaining cost comes primarily from dynamic routing and kernel dispatch rather than from the low-rank operator itself.

\paragraph{Ablations.} In \ref{s:Ablation_studies}, we vary the instruction-queryable adapter rank $r$, global memory size $M$, and routing sparsity $k$, and report the resulting accuracy-latency and accuracy-parameter Pareto frontiers. 

%% file: Chapters/5_Theoretical_results.tex
\section{Theoretical Results}\label{s:Theory}

This section establishes the structural guarantees behind the instruction-regularized queryable update mechanism. Theorem~\ref{thm:variational_instruction_retrieval} first shows that the instruction-regularized router is a principled mechanism rather than an ad hoc text-to-weight formulation: its routing weights uniquely solve a variational problem that balances state utility against a KL penalty toward the instruction-induced semantic prior. Thus, the router selects update atoms that are useful for the current hidden state while keeping retrieval close to the semantic preference encoded by the instruction. This separation showcases that language biases a stable retrieval process over reusable learned update components. 

We use the update, routed operator, block query, language prior, and joint routing logits from \S\ref{s:ApproachMain}. For a block \(b\), let \(I=I_b\subseteq[M]\) denote the active atom set used by the router; for dense softmax, \(I=[M]\). We write \(\boldsymbol{\alpha}_{b,I}(c)\in\Delta(I)\) for the routing distribution restricted to \(I\), where \(\Delta(I)\) is the probability simplex over \(I\). Since the router uses logits \(\widetilde{\zeta}_{b,m}(c)=\zeta_{b,m}+\tau_{\rm lang}\log p_m(c)\), define the corresponding tempered instruction prior on \(I\) by:  $\boldsymbol{\pi}^{(\tau)}_{I,m}(c)
=
\frac{p_m(c)^{\tau_{\rm lang}}}{\sum_{j\in I}p_j(c)^{\tau_{\rm lang}}}$ for $m \in I$. Here \(\boldsymbol{\zeta}_{b,I}\) collects the state logits on \(I\), \({\rm KL}(\cdot\|\cdot)\) is KL divergence on \(\Delta(I)\), and \(H(\boldsymbol{a})=-\sum_{m\in I}a_m\log a_m\). All norms stated are operator norms unless explicitly marked by \(\|\cdot\|_F\). The norm statement assumes bounded atoms and factors, \(\|\mathbf{C}_m\|\le R_C\), \(\|\mathbf{A}_\ell\|\le R_A\), \(\|\mathbf{B}_\ell\|\le R_B\), and bounded block summaries \(\|\bar{\mathbf{s}}_i\|\le R_s\). For the gradient identities, during a fixed forward pass we write \(\mathbf{d}_\ell=\mathbf{S}_b(c)\mathbf{s}_\ell\), \(\mathbf{t}_\ell=\mathbf{s}_\ell+g_\ell\mathbf{d}_\ell\), and \(\mathbf{r}_\ell=\nabla_{\mathbf{t}_\ell}\mathcal{L}\).

\begin{theorem}[Variational characterization of instruction-regularized retrieval]
\label{thm:variational_instruction_retrieval}
Fix a block $b$, instruction $c$, and active set $I$. The routing distribution $\boldsymbol{\alpha}_{b,I}(c)$ defined in~\eqref{eq:app_active_router} is the unique solution of
\begin{equation}
\boldsymbol{\alpha}_{b,I}(c) =
\arg\max_{\boldsymbol{a}\in\Delta(I)} \left\{ \langle \boldsymbol{a},\boldsymbol{\zeta}_{b,I}\rangle -
\mathrm{KL}\!\left(\boldsymbol{a}\,\middle\|\,\boldsymbol{\pi}^{(\tau)}_{I}(c)\right) \right\}
\label{eq:variational_router_exact}
\end{equation}
where $\Delta(I)$ is the probability simplex on $I$. Equivalently,
\begin{equation}
\boldsymbol{\alpha}_{b,I}(c) = \arg\max_{\boldsymbol{a}\in\Delta(I)}
\left\{ \langle \boldsymbol{a},\boldsymbol{\zeta}_{b,I}+\tau_{\mathrm{lang}}\log \mathbf{p}_{I}(c)\rangle +H(\boldsymbol{a}) \right\}
\label{eq:entropy_router_exact}
\end{equation}
where $H(\boldsymbol{a})=-\sum_{m\in I}a_m\log a_m$.
\end{theorem}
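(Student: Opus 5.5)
The plan is to treat this as an instance of the Gibbs variational principle (Donsker--Varadhan duality on a finite set). First I pin down what \eqref{eq:app_active_router} must say. Since the router takes a softmax of the joint logits $\widetilde{\zeta}_{b,m}(c)$ over $I$, we have
$\alpha_{b,I,m}(c)=\exp(\widetilde{\zeta}_{b,m})/\sum_{j\in I}\exp(\widetilde{\zeta}_{b,j})$. Substituting $\widetilde{\zeta}_{b,m}=\zeta_{b,m}+\tau_{\rm lang}\log p_m(c)$ and multiplying numerator and denominator by $(\sum_{j\in I}p_j^{\tau_{\rm lang}})^{-1}$ gives
$\alpha_{b,I,m}(c)=\pi^{(\tau)}_{I,m}(c)\,e^{\zeta_{b,m}}/Z$, where $Z=\sum_{j\in I}\pi^{(\tau)}_{I,j}(c)e^{\zeta_{b,j}}$. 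So the router is exactly the exponential tilt of the tempered prior by the state logits. This uses $p_m(c)>0$, which holds because $p$ is a softmax, so every logarithm is finite and $\pi^{(\tau)}_I$ has full support on $I$.

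For \eqref{eq:variational_router_exact}, the key step is an exact decomposition valid for every $\boldsymbol{a}\in\Delta(I)$:
\[
\langle\boldsymbol{a},\boldsymbol{\zeta}_{b,I}\rangle-\mathrm{KL}(\boldsymbol{a}\,\|\,\boldsymbol{\pi}^{(\tau)}_I)=\log Z-\mathrm{KL}(\boldsymbol{a}\,\|\,\boldsymbol{\alpha}_{b,I}(c)).
\]
To verify it, expand $\mathrm{KL}(\boldsymbol{a}\|\boldsymbol{\alpha})=\sum_m a_m\log a_m-\sum_m a_m(\log\pi_m+\zeta_{b,m}-\log Z)$ and use $\sum_m a_m=1$. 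The convention $0\log 0=0$ handles boundary points of the simplex. Gibbs' inequality then gives $\mathrm{KL}(\boldsymbol{a}\|\boldsymbol{\alpha})\ge 0$, with equality if and only if $\boldsymbol{a}=\boldsymbol{\alpha}$. Hence $\boldsymbol{\alpha}_{b,I}(c)$ is the unique maximizer, and the optimal value is $\log Z$.

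For the equivalence with \eqref{eq:entropy_router_exact}, expand $\mathrm{KL}(\boldsymbol{a}\|\boldsymbol{\pi}^{(\tau)}_I)=-H(\boldsymbol{a})-\tau_{\rm lang}\langle\boldsymbol{a},\log\mathbf{p}_I\rangle+\log\sum_{j\in I}p_j^{\tau_{\rm lang}}$. The last term does not depend on $\boldsymbol{a}$. The two objectives therefore differ by a constant and share the same unique argmax. Alternatively, one can apply the same Gibbs argument with the uniform prior on $I$ and logits $\widetilde{\zeta}$.

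None of these steps is hard. The step that needs the most care is making sure the top-$k$ case fits the framework. The active set $I_b$ is itself chosen from the joint logits, so I state the result conditionally on $I$ being fixed, as the theorem does. Within $I$ the router is an ordinary softmax, so the argument above applies verbatim. I also need uniqueness to survive boundary points where some $a_m=0$. This follows from strict convexity of $\mathrm{KL}(\cdot\|\boldsymbol{\alpha})$ on the simplex, or directly from the equality case of Gibbs' inequality, which holds on the whole simplex including its boundary.
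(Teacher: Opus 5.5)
Your proof is correct, but it takes a different route from the paper's.

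\textbf{The paper's route.} The paper expands the KL term and observes that $\Phi(\boldsymbol a)=\langle\boldsymbol a,\boldsymbol\zeta_{b,I}\rangle-\mathrm{KL}(\boldsymbol a\|\boldsymbol\pi^{(\tau)}_I(c))$ is strictly concave (entropy plus linear terms), so it has at most one maximizer. It then introduces a Lagrange multiplier for the single constraint $\sum_{m\in I}a_m=1$. The stationarity condition $\zeta_{b,m}-\log a_m+\log\pi^{(\tau)}_{I,m}(c)-1+\lambda=0$ gives $a_m\propto\pi^{(\tau)}_{I,m}(c)e^{\zeta_{b,m}}$, which it identifies with the router. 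The entropy form comes from the same constant-shift identity you use.

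\textbf{Comparison.} The paper's argument is constructive: it derives the tilted-softmax form rather than verifying a known candidate. As written, however, it drops the constraints $a_m\ge 0$ and leaves two points implicit: that a maximizer exists, and that the interior stationary point is the global maximizer over the closed simplex. The second needs concavity plus stationarity, or the observation that $\partial_{a_m}(-a_m\log a_m)\to+\infty$ as $a_m\to 0$. Your identity $\Phi(\boldsymbol a)=\log Z-\mathrm{KL}(\boldsymbol a\|\boldsymbol\alpha_{b,I}(c))$ avoids calculus entirely. It gives existence, global optimality and uniqueness in one step, and it covers boundary points of the simplex through $0\log 0=0$ and the equality case of Gibbs' inequality. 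It also yields the optimal value $\log Z$. The only cost is that you must already know the answer, which the theorem statement supplies.

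\textbf{A by-product.} That optimal value gives Corollary~\ref{cor:state_prior_tradeoff_main} directly:
$\langle\boldsymbol\alpha_{b,I}(c),\boldsymbol\zeta_{b,I}\rangle\ge\Phi(\boldsymbol\alpha_{b,I}(c))=\log Z\ge\log\bigl(\pi^{(\tau)}_{I,m_b^\star}(c)\,e^{\zeta_{b,m_b^\star}}\bigr)$.
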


The state-only router may prefer one update atom because it best matches the current hidden representation, while the language prior may prefer a different atom because it better matches the instruction. Corollary \ref{cor:state_prior_tradeoff_main} demonstrates that this tension is controlled, i.e., if the instruction prior gives reasonable support to the state-preferred atom, then the instruction-regularized router cannot lose much state utility. Language can guide the adapter toward semantically meaningful updates without arbitrarily overriding the model’s state signal.

\begin{corollary}[State-prior tradeoff bound]
\label{cor:state_prior_tradeoff_main}
Let $m_b^{\star}\in\arg\max_{m\in I}\zeta_{b,m}$. Under the hypotheses of ~\ref{thm:variational_instruction_retrieval},
\begin{equation} 0 \le \max_{m\in I}\zeta_{b,m} - \langle \boldsymbol{\alpha}_{b,I}(c),\boldsymbol{\zeta}_{b,I}\rangle \le \log\frac{1}{\pi^{(\tau)}_{I,m_b^{\star}}(c)}
\label{eq:state_prior_tradeoff_main}
\end{equation}
\end{corollary}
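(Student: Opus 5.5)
The plan is to use the variational characterization in Theorem~\ref{thm:variational_instruction_retrieval} directly, comparing the optimizer $\boldsymbol{\alpha}_{b,I}(c)$ with a single feasible competitor: the point mass $\boldsymbol{e}_{m_b^\star}\in\Delta(I)$ on the state-preferred atom.

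\textbf{Lower bound.} This is immediate. Since $\boldsymbol{\alpha}_{b,I}(c)\in\Delta(I)$, the quantity $\langle \boldsymbol{\alpha}_{b,I}(c),\boldsymbol{\zeta}_{b,I}\rangle$ is a convex combination of the entries $\zeta_{b,m}$, $m\in I$. It therefore cannot exceed $\max_{m\in I}\zeta_{b,m}$.

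\textbf{Upper bound.} Write $F(\boldsymbol{a})=\langle \boldsymbol{a},\boldsymbol{\zeta}_{b,I}\rangle-\mathrm{KL}(\boldsymbol{a}\,\|\,\boldsymbol{\pi}^{(\tau)}_I(c))$. Optimality of $\boldsymbol{\alpha}_{b,I}(c)$ in \eqref{eq:variational_router_exact} gives $F(\boldsymbol{\alpha}_{b,I}(c))\ge F(\boldsymbol{e}_{m_b^\star})$. Evaluating the competitor gives
\[
F(\boldsymbol{e}_{m_b^\star})=\zeta_{b,m_b^\star}-\log\frac{1}{\pi^{(\tau)}_{I,m_b^\star}(c)},
\]
since the KL divergence of a point mass from $\boldsymbol{\pi}$ equals $-\log \pi_{m_b^\star}$. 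Because $\mathrm{KL}\ge 0$, we also have $\langle \boldsymbol{\alpha}_{b,I}(c),\boldsymbol{\zeta}_{b,I}\rangle \ge F(\boldsymbol{\alpha}_{b,I}(c))$. Chaining the two inequalities and rearranging yields
\[
\max_{m\in I}\zeta_{b,m}-\langle \boldsymbol{\alpha}_{b,I}(c),\boldsymbol{\zeta}_{b,I}\rangle\le \log\frac{1}{\pi^{(\tau)}_{I,m_b^\star}(c)}.
\]

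\textbf{Main obstacle.} The only subtle point is well-definedness. The bound requires $\pi^{(\tau)}_{I,m_b^\star}(c)>0$. This holds because $p_m(c)$ is a softmax and hence strictly positive, so the KL term at the point mass is finite. For the degenerate case $\tau_{\rm lang}=0$, the prior is uniform on $I$, the bound reads $\log|I|$, and no separate argument is needed. I would note this positivity explicitly and otherwise rely entirely on Theorem~\ref{thm:variational_instruction_retrieval}.
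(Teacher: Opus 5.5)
Your proposal is correct and follows essentially the same route as the paper. The lower bound comes from $\boldsymbol{\alpha}_{b,I}(c)$ being a probability vector, and the upper bound compares the variational objective at the optimizer with its value at the vertex $\mathbf{e}_{m_b^\star}$, using $\mathrm{KL}(\mathbf{e}_{m_b^\star}\|\boldsymbol{\pi}^{(\tau)}_I(c))=\log(1/\pi^{(\tau)}_{I,m_b^\star}(c))$ and nonnegativity of KL; your explicit check that $\pi^{(\tau)}_{I,m_b^\star}(c)>0$ (since $p_m(c)$ is a softmax) is a useful detail the paper leaves implicit.
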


\ref{thm:norm_controlled_updates} gives the main stability guarantee for the approach. Although the adapter is dynamic and varies across inputs, blocks, and instructions, the effective update remains bounded because it is composed of a convex combination of bounded shared atoms. The result supports the claim that the method gains flexibility without sacrificing the norm control that makes LoRA-style fine-tuning reliable. It also justifies the depth summary mechanism of attention, i.e., the attention over previous blocks can selectively reuse earlier computation without causing uncontrolled growth in the routed update.

\begin{theorem}[Norm-controlled dynamic updates and bounded depth summaries]
\label{thm:norm_controlled_updates}
Under assumption \ref{ass:app_bounded_atoms_factors}, the following statements hold for every instruction $c$ and block $b$.
\begin{enumerate}
    \item The routed operator belongs to the convex hull of the atom bank: $  \mathbf{S}_b(c)\in\operatorname{conv}\{\mathbf{C}_1,\dots,\mathbf{C}_M\}$and its operator norm is uniformly bounded: $\|\mathbf{S}_b(c)\| \le R_C$.
    \item For every adapted layer $\ell\in B_b$,
\begin{equation}
    \|\Delta\mathbf{W}^{\ell}(\mathbf{h}_{\ell};b,c)\| \le \frac{\alpha_{\mathrm{L}}}{r}\, \|\mathbf{B}_{\ell}\|\ (1+g_{\ell}R_C)\,\|\mathbf{A}_{\ell}\| \le \frac{\alpha_{\mathrm{L}}}{r}\,R_B(1+R_C)R_A
    \label{eq:update_norm_bound_main}
    \end{equation}
    Consequently, for every vector $\mathbf{x}$,
    \begin{equation}
    \|\Delta\mathbf{W}^{\ell}(\mathbf{h}_{\ell};b,c)\mathbf{x}\| \le \frac{\alpha_{\mathrm{L}}}{r}\,R_B(1+R_C)R_A\,\|\mathbf{x}\|
    \label{eq:update_vector_bound_main}
    \end{equation}
    \item If the attention-style depth summary is given by: $  \mathbf{u}^{\mathrm{att}}_{b-1} = \sum_{i=1}^{b-1}\beta_{i\to b}\bar{\mathbf{s}}_i$, where, $ \beta_{i\to b} \ge 0$, and $   \sum_{i=1}^{b-1} \beta_{i\to b} = 1$. Then
    \begin{equation}
    \|\mathbf{u}^{\mathrm{att}}_{b-1}\| \le \max_{1\le i\le b-1}\|\bar{\mathbf{s}}_i\| \le R_s
    \label{eq:u_att_bound_main}
    \end{equation}
\end{enumerate}
\end{theorem}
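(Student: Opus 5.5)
The plan is to verify the three parts in order, since each depends only on convexity and submultiplicativity of the operator norm.

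\textbf{Part 1.} Recall that $\mathbf{S}_b(c)=\sum_{m\in I_b}\alpha^{(\mathrm{top}\,k)}_{b,m}\mathbf{C}_m$, where the weights are a softmax over the active set $I_b$. Hence the weights are nonnegative and sum to one. Extending them by zero outside $I_b$ gives a point of the simplex $\Delta([M])$, so $\mathbf{S}_b(c)\in\operatorname{conv}\{\mathbf{C}_1,\dots,\mathbf{C}_M\}$. This holds for dense routing as well, since that is just the case $I=[M]$. The triangle inequality and homogeneity of the operator norm then give
\[
\|\mathbf{S}_b(c)\|\le\sum_m\alpha_{b,m}\|\mathbf{C}_m\|\le R_C\sum_m\alpha_{b,m}=R_C .
\]
The instruction enters only through the logits $\tilde\zeta_{b,m}(c)$, and the bound holds for any choice of weights. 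It is therefore uniform in $c$, $b$, $\tau_{\mathrm{lang}}$, and the hidden state.

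\textbf{Part 2.} Start from $\Delta\mathbf{W}^{\ell}=\frac{\alpha_{\mathrm L}}{r}\mathbf{B}_\ell(\mathbf{I}_r+g_\ell\mathbf{S}_b(c))\mathbf{A}_\ell$ and apply submultiplicativity. This reduces the claim to bounding $\|\mathbf{I}_r+g_\ell\mathbf{S}_b(c)\|$. By the triangle inequality, $\|\mathbf{I}_r\|=1$, and Part 1, this factor is at most $1+g_\ell R_C$, which gives the first inequality in \eqref{eq:update_norm_bound_main}. For the second inequality, use $g_\ell=\sigma(\eta_\ell)\in(0,1)$, so $1+g_\ell R_C\le 1+R_C$, together with the assumed bounds $\|\mathbf{A}_\ell\|\le R_A$ and $\|\mathbf{B}_\ell\|\le R_B$. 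The vector bound \eqref{eq:update_vector_bound_main} is then immediate from the definition of the operator norm, $\|\mathbf{M}\mathbf{x}\|\le\|\mathbf{M}\|\|\mathbf{x}\|$.

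The one point needing care is notational. The approach section writes the scale as $\alpha/r$, while the theorem writes $\alpha_{\mathrm L}/r$. These must be identified as the same LoRA scale and kept distinct from the routing weights $\alpha_{b,m}$. I also assume the scale is nonnegative, so that it passes through the norm unchanged.

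\textbf{Part 3.} The weights $\beta_{i\to b}$ are a softmax and hence lie in the simplex. Applying the triangle inequality,
\[
\|\mathbf{u}^{\mathrm{att}}_{b-1}\|\le\sum_i\beta_{i\to b}\|\bar{\mathbf{s}}_i\|\le\max_i\|\bar{\mathbf{s}}_i\|,
\]
and the assumed bound $\|\bar{\mathbf{s}}_i\|\le R_s$ finishes the claim. This statement concerns vectors, so "norm" here means the Euclidean norm; the argument works for any norm.

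This part is only meaningful for $b\ge2$. For $b=1$ the sum is empty, and I would adopt the convention $\mathbf{u}^{\mathrm{att}}_0=0$, for which the bound holds trivially. If one instead wants to bound $\bar{\mathbf{s}}_i$ itself from layer-level bounds, note that a block average of vectors with $\|\mathbf{s}_\ell\|\le R_s$ also has norm at most $R_s$, by the same convexity argument.

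I do not expect a genuine obstacle. The only step needing care is checking that top-$k$ truncation followed by renormalization still produces a probability vector. It does, because the softmax is taken over the active set $I_b$, so Part 1 holds no matter how $I_b$ is selected.
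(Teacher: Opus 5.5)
Your proposal is correct and follows the paper's argument essentially step for step. Part 1 uses nonnegative, sum-to-one (top-$k$-renormalized) weights plus the triangle inequality; Part 2 uses submultiplicativity with $\|\mathbf{I}_r+g_\ell\mathbf{S}_b(c)\|\le 1+g_\ell R_C$ and $g_\ell\in(0,1)$; Part 3 uses the convex-combination bound with the convention $\mathbf{u}^{\mathrm{att}}_0=\mathbf{0}$ for $b=1$. As you implicitly note, the final bound $\le R_s$ in Part 3 needs the paper's separate bounded-states hypothesis rather than Assumption~\ref{ass:app_bounded_atoms_factors}, and the paper's proof invokes it the same way.
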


\ref{thm:exact_blockwise_gradient_main} specifies that every atom receives a clean gradient signal through the block in which it is retrieved, and that this signal decomposes into simple low rank contributions from the layers using that block’s routed operator. The global memory receives structured supervision from many layers and can learn reusable update directions that are useful across depth, inputs, and instructions.

\begin{theorem}[Exact blockwise gradient factorization]
\label{thm:exact_blockwise_gradient_main}
Consider $\mathbf{S}_b(c)$ as the blockwise routed operator used by all layers in $B_b$. Then
\begin{equation}
\nabla_{\mathbf{S}_b(c)}\mathcal{L} = \sum_{\ell\in B_b}g_{\ell}\mathbf{r}_{\ell}\mathbf{s}_{\ell}^{\top}
\label{eq:exact_grad_S_main}
\end{equation}
Consequently, if the atom values $\{\mathbf{C}_m\}_{m=1}^{M}$ are independent of the routing keys, then the direct value-path gradients are
\begin{equation}
\nabla_{\mathbf{C}_m}\mathcal{L} = \alpha_{b,m}(c)\nabla_{\mathbf{S}_b(c)}\mathcal{L} \text{ and } \frac{\partial \mathcal{L}}{\partial \alpha_{b,m}(c)} = \left\langle \nabla_{\mathbf{S}_b(c)}\mathcal{L},\mathbf{C}_m\right\rangle_F
\label{eq:exact_grad_C_main}
\end{equation}
$\forall \ m=1,2,3\dots,M$. The gate parameter satisfies
\begin{equation}
\frac{\partial \mathcal{L}}{\partial \eta_{\ell}} = \sigma(\eta_{\ell})(1-\sigma(\eta_{\ell})) \left\langle \mathbf{r}_{\ell},\mathbf{d}_{\ell}\right\rangle
\label{eq:exact_grad_gate_main}
\end{equation}
for $\ell\in B_b$.
\end{theorem}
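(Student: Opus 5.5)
This is a chain-rule computation in a fixed forward pass; we treat $\mathbf{s}_\ell$ as given and differentiate only through the paths where $\mathbf{S}_b(c)$, $\mathbf{C}_m$ and $\eta_\ell$ enter.

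\textbf{Gradient with respect to $\mathbf{S}_b(c)$.} By the definition of $\Delta\mathbf{W}_\ell$, the rank-space signal for each layer $\ell\in B_b$ is
\[
\mathbf{t}_\ell=(\mathbf{I}_r+g_\ell\mathbf{S}_b(c))\mathbf{s}_\ell=\mathbf{s}_\ell+g_\ell\mathbf{d}_\ell .
\]
Here $\mathbf{s}_\ell$ is the rank-space input to the bottleneck. The output then passes through $\mathbf{B}_\ell$ and the rest of the network. The operator $\mathbf{S}_b(c)$ is shared by all layers in $B_b$, so the multivariable chain rule gives the total derivative as a sum of per-layer contributions. For a single layer, perturb $\mathbf{S}\mapsto\mathbf{S}+\mathbf{E}$. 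The first-order change is $\delta\mathbf{t}_\ell=g_\ell\mathbf{E}\mathbf{s}_\ell$, so
\[
\delta\mathcal{L}=\langle\mathbf{r}_\ell,g_\ell\mathbf{E}\mathbf{s}_\ell\rangle=\langle g_\ell\mathbf{r}_\ell\mathbf{s}_\ell^\top,\mathbf{E}\rangle_F .
\]
Summing over $\ell\in B_b$ yields \eqref{eq:exact_grad_S_main}.

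\textbf{Point requiring care.} Interpreting $\mathbf{r}_\ell$ as a partial gradient is the one delicate step. The $\mathbf{t}_\ell$ for different $\ell$ are coupled: a later layer's $\mathbf{s}_{\ell'}$ depends on earlier outputs, and the depth summary and the query depend on the $\mathbf{s}$'s. I would therefore define $\mathbf{r}_\ell=\nabla_{\mathbf{t}_\ell}\mathcal{L}$ as the total backpropagated adjoint at $\mathbf{t}_\ell$, with the $\mathbf{s}_\ell$ and the routing weights held as the fixed forward-pass values. This is the "fixed forward pass" convention stated before the theorem. Under this convention $\mathcal{L}$ depends on $\mathbf{S}_b(c)$ only through the explicit occurrences in the $\mathbf{t}_\ell$, and the sum formula is exact. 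I would note this convention explicitly.

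\textbf{Atom and routing-weight gradients.} Next I would use the linearity $\mathbf{S}_b(c)=\sum_m\alpha_{b,m}\mathbf{C}_m$. Atoms outside the top-$k$ set have $\alpha_{b,m}=0$. If the $\mathbf{C}_m$ do not influence the keys, and hence not the $\alpha$'s, then $\partial\mathbf{S}/\partial\mathbf{C}_m$ is multiplication by $\alpha_{b,m}$. The chain rule $\langle\nabla_{\mathbf{S}}\mathcal{L},\alpha_{b,m}\mathbf{E}\rangle_F$ then gives $\nabla_{\mathbf{C}_m}\mathcal{L}=\alpha_{b,m}\nabla_{\mathbf{S}}\mathcal{L}$. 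In the same way, $\partial\mathbf{S}/\partial\alpha_{b,m}=\mathbf{C}_m$ gives the Frobenius inner-product formula in \eqref{eq:exact_grad_C_main}.

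\textbf{Gate gradient.} Since $g_\ell=\sigma(\eta_\ell)$ enters only through $\mathbf{t}_\ell$, we have $\partial\mathbf{t}_\ell/\partial g_\ell=\mathbf{d}_\ell$, and hence $\partial\mathcal{L}/\partial g_\ell=\langle\mathbf{r}_\ell,\mathbf{d}_\ell\rangle$. Multiplying by $\sigma'(\eta_\ell)=\sigma(\eta_\ell)(1-\sigma(\eta_\ell))$ gives \eqref{eq:exact_grad_gate_main}.
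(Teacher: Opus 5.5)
Your proposal is correct and follows the paper's proof essentially line by line. Both arguments use the same four steps:
\begin{itemize}
\item a first-order perturbation of $\mathbf{S}_b(c)$, giving $\mathrm{d}\mathbf{t}_\ell=g_\ell(\mathrm{d}\mathbf{S}_b)\mathbf{s}_\ell$;
\item the identity $\mathbf{r}_\ell^\top(\mathrm{d}\mathbf{S}_b)\mathbf{s}_\ell=\langle\mathbf{r}_\ell\mathbf{s}_\ell^\top,\mathrm{d}\mathbf{S}_b\rangle_F$, summed over the block;
\item linearity of $\mathbf{S}_b(c)=\sum_m\alpha_{b,m}(c)\mathbf{C}_m$, separately in the atoms and in the weights;
\item the sigmoid chain rule for the gate.
\end{itemize}
Your remark that $\mathbf{r}_\ell$ must be the total backpropagated adjoint is a useful precision that the paper leaves implicit when it writes $\mathrm{d}\mathcal{L}=\sum_\ell\mathrm{d}\mathcal{L}_\ell$. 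Read ``$\mathbf{s}_\ell$ held fixed'' only as evaluating the local Jacobian at forward-pass values. The dependence of later $\mathbf{s}_{\ell'}$ on $\mathbf{S}_b(c)$ is not severed; it is exactly what the total adjoints of the earlier $\mathbf{t}_\ell$ absorb.
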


A useful consequence of the blockwise gradient factorization is that it gives the router a direct learning signal. Corollary~\ref{cor:exact_logit_gradients_main} makes this signal explicit: gradient descent pushes an atom’s routing probability upward when its update direction aligns better with the block gradient than the currently retrieved average update, and pushes it downward when its alignment is worse. The router therefore learns through direct comparisons among atoms, while the shared memory is encouraged to store update directions that remain useful across contexts.

\begin{corollary}[Exact logit gradients on a fixed active set]
\label{cor:exact_logit_gradients_main}
Fix an active set $I$ and define $\psi_{b,m} := \left\langle \nabla_{\mathbf{S}_b(c)}\mathcal{L},\mathbf{C}_m\right\rangle_F$ and $\bar{\psi}_b := \sum_{j\in I}\alpha_{b,j}(c)\psi_{b,j}$ . Let $ z_{b,m}(c):=\zeta_{b,m}+\tau_{\mathrm{lang}}\log p_m(c), \ \forall \ m \in I$. Then,
\begin{equation}
\frac{\partial \mathcal{L}}{\partial z_{b,m}(c)} = \alpha_{b,m}(c)\bigl(\psi_{b,m}-\bar{\psi}_b\bigr)
\end{equation}
Equivalently, writing $p_m(c)=\operatorname{softmax}(\boldsymbol{\rho}(c))_m$, the gradients with respect to the state and raw language logits satisfy:
\begin{equation}
\frac{\partial \mathcal{L}}{\partial \zeta_{b,m}} = \alpha_{b,m}(c)\bigl(\psi_{b,m}-\bar{\psi}_b\bigr) \ \text{and} \
\frac{\partial \mathcal{L}}{\partial \rho_m(c)} = \tau_{\mathrm{lang}}\alpha_{b,m}(c)\bigl(\psi_{b,m}-\bar{\psi}_b\bigr).
\label{eq:exact_state_language_logit_grad_main}
\end{equation}
\end{corollary}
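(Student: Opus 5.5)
The corollary follows from the chain rule applied to the softmax on the fixed active set $I$, combined with the value-path identity of Theorem~\ref{thm:exact_blockwise_gradient_main}. We treat $I$ as fixed, so the top-$k$ selection is locally constant and contributes no gradient. On $I$ we have $\alpha_{b,m}(c)=\exp(z_{b,m})/\sum_{j\in I}\exp(z_{b,j})$ and $\mathbf{S}_b(c)=\sum_{j\in I}\alpha_{b,j}(c)\mathbf{C}_j$.

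\textbf{Step 1.} By Theorem~\ref{thm:exact_blockwise_gradient_main}, $\partial\mathcal{L}/\partial\alpha_{b,j}(c)=\langle\nabla_{\mathbf{S}_b(c)}\mathcal{L},\mathbf{C}_j\rangle_F=\psi_{b,j}$. This holds because $\mathbf{S}_b(c)$ is linear in the weights $\alpha$ and $\mathcal{L}$ depends on $\alpha$ only through $\mathbf{S}_b(c)$ on this path.

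\textbf{Step 2.} The softmax Jacobian is $\partial\alpha_{b,j}/\partial z_{b,m}=\alpha_{b,j}(\delta_{jm}-\alpha_{b,m})$. Contracting it with $\psi_{b,j}$ gives
\[
\sum_{j\in I}\psi_{b,j}\,\alpha_{b,j}(\delta_{jm}-\alpha_{b,m})=\alpha_{b,m}\psi_{b,m}-\alpha_{b,m}\bar\psi_b ,
\]
which is the first claim.

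\textbf{Step 3.} Since $z_{b,m}=\zeta_{b,m}+\tau_{\rm lang}\log p_m(c)$ with $\partial z_{b,m}/\partial\zeta_{b,m'}=\delta_{mm'}$, the state-logit identity follows immediately.

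\textbf{Step 4: the raw language logits, which is the main subtlety.} Here $\log p_m(c)=\rho_m-\log\sum_{j}e^{\rho_j}$, so $\partial z_{b,m}/\partial\rho_{m'}=\tau_{\rm lang}(\delta_{mm'}-p_{m'})$. By the chain rule,
\[
\frac{\partial\mathcal{L}}{\partial\rho_{m'}}=\tau_{\rm lang}\Bigl(g_{m'}-p_{m'}\sum_{m\in I}g_m\Bigr),\qquad g_m:=\frac{\partial\mathcal{L}}{\partial z_{b,m}}.
\]
The key observation is that the Step 2 gradients sum to zero, since $\sum_{m\in I}\alpha_{b,m}(\psi_{b,m}-\bar\psi_b)=\bar\psi_b-\bar\psi_b=0$. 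Hence the normalization correction vanishes. This reflects the shift invariance of the softmax: the log-partition term adds the same constant to every $z_{b,m}$, $m\in I$.

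We get $\partial\mathcal{L}/\partial\rho_{m'}=\tau_{\rm lang}\alpha_{b,m'}(\psi_{b,m'}-\bar\psi_b)$ for $m'\in I$. For $m'\notin I$ the gradient is $0$, so the stated formula should be read on $I$. The same holds if $p$ is normalized over all of $[M]$, because the shift-invariance argument is unaffected.

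We expect this cancellation to be the only nonroutine point. The proof will also note explicitly that the gradient flows through the fixed $I$ and this block only, and that the routing keys and atom values are treated as separate paths, as in Theorem~\ref{thm:exact_blockwise_gradient_main}.
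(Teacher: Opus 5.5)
Your proposal is correct and follows the paper's proof: it combines the value-path identity $\partial\mathcal{L}/\partial\alpha_{b,j}(c)=\psi_{b,j}$ from Theorem~\ref{thm:exact_blockwise_gradient_main}, the softmax Jacobian $\alpha_{b,j}(\delta_{jm}-\alpha_{b,m})$, and the chain rule. The only difference is the raw-language-logit step, and it rests on the same shift invariance: the paper uses Lemma~\ref{lem:app_equivalent_raw_logit_fusion} to replace $z_{b,m}(c)$ by the shift-equivalent $\zeta_{b,m}+\tau_{\mathrm{lang}}\rho_m(c)$ and reads off $\partial z_{b,m}/\partial\rho_m=\tau_{\mathrm{lang}}$, while you keep the statement's $\log p_m(c)$ and show the log-partition correction $-\tau_{\mathrm{lang}}p_{m'}\sum_{m\in I}\partial\mathcal{L}/\partial z_{b,m}$ vanishes because the logit gradients sum to zero (which also makes the zero gradient for $m'\notin I$ explicit).
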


Taken together, these results explain the inner workings of our attention-based fine-tuning methodology. The variational characterization shows that language enters as a semantic prior over retrieval rather than as direct weight generation; the norm bound shows that the routed update remains inside a controlled convex hull of shared atoms; and the gradient identities show how those atoms receive blockwise supervision from the loss. Thus, within each fixed active set, the router learns to compare candidate update directions by their usefulness for the current computation, yielding a stable and interpretable mechanism for data-adaptive PEFT.

%% file: Chapters/6_Conclusion.tex

\section{Limitations \& Conclusion}\label{s:Limitations}

In conclusion, we introduced a queryable memory of rank-space atoms that lets low-rank adapters retrieve input- and depth-conditioned updates during the forward pass. This dynamic routing preserves the efficiency and stability of parameter-efficient fine-tuning while improving optimization and generalization, with optional language guidance providing an interpretable semantic prior.


But there are some limitations. The advantage of the queryable approach is not uniform across all benchmarks. The blockwise routing mechanism increases the forward-pass computational complexity compared to a static baseline, though empirical profiling shows it remains substantially faster than comparable dynamic methods like HyRA and RepLoRA (see Table \ref{tab:inference_efficiency}). Future work should refine atom retrieval in novel feature environments and reduce routing overhead. Finally, by lowering the computational barrier to effective fine-tuning, queryable memory architectures could also make harmful model adaptation easier. This motivates cautious release practices in high-stakes settings, where context-dependent routing may complicate exhaustive safety certification. \hfill $\blacksquare$

%% file: Chapters/7_Appendix.tex
\appendix

\input{Chapters_Appendix/7a_}

\input{Chapters_Appendix/7c_Experimental_details}
\input{Chapters_Appendix/7d_diagnostics}

\input{Chapters_Appendix/7b_Additional_Theoretical_Analysis}

\input{Chapters_Appendix/7x_other}

%% file: Chapters_Appendix/7a_.tex
\section{Additional Methodological Details}\label{s:MethodDetails}

\subsection{Attention Updates}
\label{sec:attention_updates}

Consider an attention block $\ell$ with token matrix $\boldsymbol{H}_{\ell}\in\mathbb{R}^{T\times d_{\ell}}$. Let the frozen attention projections be $\{ \boldsymbol{W}_{\ell}^{0,Q}, \boldsymbol{W}_{\ell}^{0,K},
\boldsymbol{W}_{\ell}^{0,V}, \boldsymbol{W}_{\ell}^{0,O} \}$. For each projection type $p\in\{Q,K,V,O\}$, the LoRA factors are \citep{hu2022lora}: $\boldsymbol{A}_{\ell}^{p}\in
\mathbb{R}^{r\times d_{\ell}^{\mathrm{in},p}}$and $\boldsymbol{B}_{\ell}^{p}\in
\mathbb{R}^{d_{\ell}^{\mathrm{out},p}\times r}$. For the query, key, and value projections, the projection input is $\boldsymbol{H}_{\ell}$. We define the corresponding token-level rank-state by $\boldsymbol{R}_{\ell}^{p}
=
\boldsymbol{H}_{\ell}
(\boldsymbol{A}_{\ell}^{p})^{\top}
\in \mathbb{R}^{T\times r}
$ for $p\in\{Q,K,V\}$. Its token average is
\begin{equation}
\bar{\boldsymbol{r}}_{\ell}^{p}
=
\frac{1}{T}\mathbf{1}_{T}^{\top}
\boldsymbol{R}_{\ell}^{p}
\end{equation}
For attention blocks, we use a single shared router state obtained by averaging the query, key, and value summaries:
\begin{equation}
\bar{\boldsymbol{r}}_{\ell}^{\mathrm{attn}} = \frac{1}{3} \left( \bar{\boldsymbol{r}}_{\ell}^{Q} + \bar{\boldsymbol{r}}_{\ell}^{K} + \bar{\boldsymbol{r}}_{\ell}^{V} \right)
\end{equation}
This vector is used in place of the block-entry rank-state $\boldsymbol{s}_{\ell_b}^{\mathrm{entry}}$ in Eq.~\eqref{eq:app_prequery} for attention blocks. Thus, the router receives a compact summary of the low-rank attention state, while avoiding a circular dependence on the output projection. Given the routed rank-space operator $\boldsymbol{S}_b(\boldsymbol{c})$,
the adapted update for each projection type $p\in\{Q,K,V,O\}$ is
\begin{align}
\Delta \boldsymbol{W}_{\ell}^{p}(\boldsymbol{H}_{\ell};b,\boldsymbol{c}) &= \frac{\alpha}{r} \boldsymbol{B}_{\ell}^{p} \left( \boldsymbol{I}_r + g_{\ell}^{p}\boldsymbol{S}_b(\boldsymbol{c}) \right) \boldsymbol{A}_{\ell}^{p}\\ g_{\ell}^{p} &= \sigma(\eta_{\ell}^{p})
\end{align}
Here, $g_{\ell}^{p} \in(0,1)$. The adapted attention projections are then:
\begin{align}
\boldsymbol{Q}_{\ell} &=
\boldsymbol{H}_{\ell} \left( \boldsymbol{W}_{\ell}^{0,Q} + \Delta \boldsymbol{W}_{\ell}^{Q} \right)^{\top} \\ \boldsymbol{K}_{\ell} &= \boldsymbol{H}_{\ell} \left( \boldsymbol{W}_{\ell}^{0,K} + \Delta \boldsymbol{W}_{\ell}^{K} \right)^{\top} \\
\boldsymbol{V}_{\ell} &= \boldsymbol{H}_{\ell} \left( \boldsymbol{W}_{\ell}^{0,V} + \Delta \boldsymbol{W}_{\ell}^{V} \right)^{\top} \\
\boldsymbol{O}_{\ell} &= \operatorname{Attention} (\boldsymbol{Q}_{\ell},\boldsymbol{K}_{\ell},\boldsymbol{V}_{\ell}) \left( \boldsymbol{W}_{\ell}^{0,O} + \Delta \boldsymbol{W}_{\ell}^{O} \right)^{\top}
\end{align}
The same routed operator $\boldsymbol{S}_b(\boldsymbol{c})$ is reused across $Q,K,V,O$ within the block, whereas the local LoRA factors $\boldsymbol{A}_{\ell}^{p},\boldsymbol{B}_{\ell}^{p}$ and gates $g_{\ell}^{p}$ remain projection-specific. Hence, the memory provides a global reusable rank-space transformation, while the projection-specific factors preserve the inductive bias of each attention projection.

For algorithmic details, see Algorithm  \ref{alg:instruction_queryable_compact} and \ref{alg:6}.

\begin{algorithm}[htb]
\caption{Instruction-regularized queryable low-rank adaptation; see Algorithm \ref{alg:6} for details.}
\label{alg:instruction_queryable_compact}
\begin{algorithmic}[1]
\Require Frozen weights $\{\boldsymbol W_\ell^0\}_{\ell=1}^L$; low-rank factors
$\{\boldsymbol A_\ell,\boldsymbol B_\ell,\eta_\ell\}_{\ell=1}^L$; shared atoms
$\{\boldsymbol C_m,\boldsymbol k_m\}_{m=1}^M$; routing parameters
$\{\boldsymbol w_\ell\}_{\ell=1}^L,\boldsymbol Q_{\rm cur},\boldsymbol Q_{\rm dep},
\boldsymbol Q_{\rm ctx},\boldsymbol R_{\rm ctx}$; depth-summary projections
$\boldsymbol Q_{{\rm dep},q},\boldsymbol Q_{{\rm dep},k}$; blocks
$\{\mathcal B_b\}_{b=1}^B$; instruction embedding $\boldsymbol e(\boldsymbol c)$;
input $\boldsymbol h_1=\boldsymbol x$.
\Ensure Output $\widehat{\boldsymbol y}$.
\State Initialize completed block summaries $\mathcal U\leftarrow[\,]$.
\For{$b=1,\dots,B$}
    \State Let $\ell_b$ be the first layer in $\mathcal B_b$ and compute
    $\boldsymbol s_{\ell_b}^{\rm entry}
    \leftarrow \boldsymbol A_{\ell_b}\mathcal D_{\ell_b}(\boldsymbol h_{\ell_b})$.
    \State Form the instruction-conditioned pre-query
    \[
    \boldsymbol q_b^{(0)}
    \leftarrow
    \boldsymbol w_{\ell_b}
    +\boldsymbol Q_{\rm cur}\boldsymbol s_{\ell_b}^{\rm entry}
    +\lambda_{\rm ctx}\boldsymbol Q_{\rm ctx}\boldsymbol e(\boldsymbol c).
    \]
    \State Retrieve a depth summary from previous blocks:
    \[
    \boldsymbol u_{b-1}^{\rm att}
    \leftarrow
    \begin{cases}
    \boldsymbol 0, & b=1,\\[0.5ex]
    \sum_{i<b}\beta_{i\to b}\bar{\boldsymbol s}_i, & b>1,
    \end{cases}
    \qquad
    \beta_{i\to b}
    \propto
    \exp\!\left(
    \frac{
    \langle
    \operatorname{RMSNorm}(\boldsymbol Q_{{\rm dep},q}\boldsymbol q_b^{(0)}),
    \operatorname{RMSNorm}(\boldsymbol Q_{{\rm dep},k}\bar{\boldsymbol s}_i)
    \rangle
    }{\sqrt{d_k}T_{\rm dep}}
    \right).
    \]
    \State Set the final block query
    $\boldsymbol q_b
    \leftarrow
    \boldsymbol q_b^{(0)}
    +\boldsymbol Q_{\rm dep}\boldsymbol u_{b-1}^{\rm att}$.
    \State Compute state-routing logits and language-prior logits:
    \[
    \zeta_{b,m}
    \leftarrow
    \frac{
    \langle
    \operatorname{RMSNorm}(\boldsymbol q_b),
    \operatorname{RMSNorm}(\boldsymbol k_m)
    \rangle
    }{\sqrt{d_k}T_{\rm attn}},
    \qquad
    \rho_m(\boldsymbol c)
    \leftarrow
    \frac{
    \langle
    \operatorname{RMSNorm}(\boldsymbol R_{\rm ctx}\boldsymbol e(\boldsymbol c)),
    \operatorname{RMSNorm}(\boldsymbol k_m)
    \rangle
    }{\sqrt{d_k}T_{\rm lang}} .
    \]
    \State Convert $\rho(\boldsymbol c)$ to a prior
    $p(\boldsymbol c)=\operatorname{softmax}(\rho(\boldsymbol c))$ and form
    \[
    \widetilde\zeta_{b,m}(\boldsymbol c)
    \leftarrow
    \zeta_{b,m}
    +\tau_{\rm lang}\log p_m(\boldsymbol c).
    \]
    \State Route over the shared atoms:
    \[
    \alpha_b(\boldsymbol c)
    \leftarrow
    \operatorname{TopKSoftmax}(\widetilde\zeta_b(\boldsymbol c)),
    \qquad
    \boldsymbol S_b(\boldsymbol c)
    \leftarrow
    \sum_{m=1}^M \alpha_{b,m}(\boldsymbol c)\boldsymbol C_m .
    \]
    \For{$\ell\in\mathcal B_b$}
        \State Compute
        $\boldsymbol s_\ell\leftarrow
        \boldsymbol A_\ell\mathcal D_\ell(\boldsymbol h_\ell)$,
        reusing $\boldsymbol s_{\ell_b}^{\rm entry}$ when $\ell=\ell_b$.
        \State Apply the routed rank-space adapter
        \[
        \boldsymbol t_\ell
        \leftarrow
        \bigl(\boldsymbol I_r+\sigma(\eta_\ell)\boldsymbol S_b(\boldsymbol c)\bigr)
        \boldsymbol s_\ell .
        \]
        \If{$\ell$ is a standard hidden layer}
            \State Update
            \[
            \boldsymbol h_{\ell+1}
            \leftarrow
            \phi\!\left(
            \boldsymbol W_\ell^0\boldsymbol h_\ell
            +\frac{\alpha}{r}\boldsymbol B_\ell\boldsymbol t_\ell
            \right).
            \]
        \Else
            \State For each attention projection
            $p\in\{Q,K,V,O\}$, use
            \[
            \Delta\boldsymbol W_\ell^p(\boldsymbol c)
            \leftarrow
            \frac{\alpha}{r}
            \boldsymbol B_\ell^p
            \bigl(\boldsymbol I_r+g_\ell^p\boldsymbol S_b(\boldsymbol c)\bigr)
            \boldsymbol A_\ell^p,
            \]
            and continue the Transformer forward pass with the adapted projections.
        \EndIf
    \EndFor
    \State Store the block summary
    \[
    \bar{\boldsymbol s}_b
    \leftarrow
    \frac{1}{|\mathcal B_b|}
    \sum_{\ell\in\mathcal B_b}\boldsymbol s_\ell,
    \qquad
    \mathcal U\leftarrow\mathcal U\cup\{\bar{\boldsymbol s}_b\}.
    \]
\EndFor
\State \Return $\widehat{\boldsymbol y}\leftarrow \operatorname{Head}(\boldsymbol h_{L+1})$.
\end{algorithmic}
\end{algorithm}

\begin{algorithm}[htb]
\caption{Instruction-regularized global queryable update}
\label{alg:6}
\begin{algorithmic}[1]
\Require Frozen hidden-layer weights $\{\boldsymbol{W}_\ell^0\}_{\ell=1}^{L}$; trainable factors $\{\boldsymbol{A}_\ell,\boldsymbol{B}_\ell,\eta_\ell\}_{\ell=1}^{L}$; shared atoms $\{\boldsymbol{C}_m,\boldsymbol{k}_m\}_{m=1}^{M}$; routing parameters $\{\boldsymbol{w}_\ell\}_{\ell=1}^{L},\boldsymbol{Q}_{\mathrm{cur}},\boldsymbol{Q}_{\mathrm{dep}}$; block partition $\{\boldsymbol{B}_b\}_{b=1}^{B}$; input representation $\boldsymbol{h}_1 = \boldsymbol{x}$, external instruction $\boldsymbol{c}$ with embedding $\boldsymbol{e}(\boldsymbol{c})$, $Q_{\mathrm{cur}}$, $Q_{\mathrm{dep}}$, $Q_{\mathrm{ctx}}$, $R_{\mathrm{ctx}}$; temperatures $T_{\mathrm{attn}},T_{\mathrm{lang}},T_{\mathrm{dep}}$; language-prior strength $\tau_{\mathrm{lang}}$; language-query weight $\lambda_{\mathrm{ctx}}$.
\Ensure Model output $\widehat{\boldsymbol{y}}$
\State Initialize the list of completed block summaries $\mathcal U \leftarrow [\,]$.
\State $\boldsymbol{u}_0 \gets \mathrm{None}$
\For{$b=1,2,\dots,B$}
    \State $\ell_b \gets$ first index in block $B_b$
    \State $\boldsymbol{s}_{\ell_b}^{\mathrm{entry}} \gets \boldsymbol{A}_{\ell_b} \mathcal{D}_{\ell_b}(\boldsymbol{h}_{\ell_b})$
    \State Form the query $\boldsymbol{q}_b^{(0)} \leftarrow \boldsymbol{w}_{\ell_b} + \boldsymbol{Q}_{\mathrm{cur}} \boldsymbol{s}^{\mathrm{entry}}_{\ell_b}+\lambda_{\mathrm{ctx}} \boldsymbol{Q}_{\mathrm{ctx}} \boldsymbol{e}(\boldsymbol{c})$
    \If{$b = 1$}
        \State $\boldsymbol{u}_{b-1}^{\mathrm{att}} \leftarrow \boldsymbol{0}$
    \Else
        \For{each completed block summary $\bar{\boldsymbol{s}}_i \in \mathcal U$, $i=1,\dots,b-1$}
            \State Compute the depth-summary key $\widehat{\boldsymbol{\kappa}}_i \leftarrow \operatorname{RMSNorm}(\boldsymbol{Q}_{\mathrm{dep},k} \ \bar{\boldsymbol{s}}_i)$
            \State Compute the depth-summary logit $ \xi_{i \to b} \leftarrow \frac{\langle \operatorname{RMSNorm}(\boldsymbol{Q}_{\mathrm{dep},q} \ \boldsymbol{q}_b^{(0)}), \ \widehat{\boldsymbol{\kappa}}_i\rangle}{\sqrt{d_k} T_{\mathrm{dep}}}$
        \EndFor
        \State Convert $\{\xi_{i\to b}\}_{i=1}^{b-1}$ to softmax weights $\{\beta_{i\to b}\}_{i=1}^{b-1}$.
        \State Complete the block summary $\boldsymbol{u}_{b-1}^{\mathrm{att}} \leftarrow \sum_{i=1}^{b-1} \beta_{i\to b} \bar{\boldsymbol{s}}_i$
    \EndIf
    \State Compute the final block summary $\boldsymbol{q}_b \leftarrow \boldsymbol{q}_b^{(0)} + \boldsymbol{Q}_{\mathrm{dep}} \boldsymbol{u}_{b-1}^{\mathrm{att}}$
    \For{$m=1,\dots,M$}
        \State Compute the routing logit  $\zeta_{b,m} \leftarrow \frac{\langle \operatorname{RMSNorm}(\boldsymbol{q}_b),\ \operatorname{RMSNorm}(\boldsymbol{k}_m)\rangle}{\sqrt{d_k} T_{\mathrm{attn}}}$
        \State Compute the language prior logit $ \rho_m(c) \leftarrow \frac{\langle \operatorname{RMSNorm}(\boldsymbol{R}_{\mathrm{ctx}} \boldsymbol{e}(\boldsymbol{c})),\ \operatorname{RMSNorm}(\boldsymbol{k}_m)\rangle}{\sqrt{d_k} T_{\mathrm{lang}}}$
    \EndFor
    \State Convert $\{\rho_m(c)\}_{m=1}^M$ to the language prior $p(\boldsymbol{c})$
    \State Form full joint routing logits $\widetilde \zeta_{b,m}(\boldsymbol{c}) \leftarrow \zeta_{b,m} + \tau_{\mathrm{lang}}\log p_m(\boldsymbol{c})$ $ \forall \ m = 1,2, 3, ..., M$
    \State Apply top-$k$ softmax to obtain routing weights $\alpha_b(\boldsymbol{c})=\{\alpha_{b,m}(\boldsymbol{c})\}_{m=1}^M$
    \State Construct the routing operator $ \boldsymbol{S}_b(\boldsymbol{c}) \leftarrow \sum_{m=1}^M \alpha_{b,m}(\boldsymbol{c}) \boldsymbol{C}_m$
    \For{each $\ell\in \boldsymbol{B}_b$ in order}
        \If{$\ell=\ell_b$}
            \State $\boldsymbol{s}_\ell \gets \boldsymbol{s}_{\ell_b}^{\mathrm{entry}}$
        \Else
            \State $\boldsymbol{s}_\ell \gets \boldsymbol{A}_\ell \mathcal{D}_\ell (\boldsymbol{h}_\ell)$
        \EndIf
        \State $\boldsymbol{d}_\ell \gets \boldsymbol{S}_b \boldsymbol{s}_\ell$
        \State $g_\ell \gets \text{sigmoid}(\eta_\ell)$
        \State $\boldsymbol{t}_\ell \gets \boldsymbol{s}_\ell + g_\ell \boldsymbol{d}_\ell$
        \If{layer $\ell$ is a standard hidden layer}
            \State $\delta \boldsymbol{h}_\ell \gets \frac{\alpha}{r} \boldsymbol{B}_\ell \boldsymbol{t}_\ell$
            \State $\boldsymbol{y}_\ell \gets \boldsymbol{W}_\ell^0 \boldsymbol{h}_\ell + \delta \boldsymbol{h}_\ell$
            \State $\boldsymbol{h}_{\ell+1} \gets \phi(\boldsymbol{y}_\ell)$
        \Else
            \State $\Delta \boldsymbol{W}_{\ell}^{p} (\boldsymbol{H}_{\ell};b,\boldsymbol{c}) \leftarrow \frac{\alpha}{r} \boldsymbol{B}_{\ell}^{p}\bigl(\boldsymbol{I}_r + g_{\ell}^{p} \boldsymbol{S}_b(\boldsymbol{c})\bigr) \boldsymbol{A}_{\ell}^{p}$ , $\forall \ p\in\{\boldsymbol{Q},\boldsymbol{K},\boldsymbol{V},\boldsymbol{O}\} $.
            \State Form the adapted attention projections and continue the Transformer forward pass.
        \EndIf
        \State $\boldsymbol{z}_b \gets \boldsymbol{z}_b + \boldsymbol{s}_\ell$ and $n_b \gets n_b + 1$
    \EndFor
    \State $\bar{\boldsymbol{s}}_b \gets \frac{\boldsymbol{z}_b}{n_b}$
    \State Append $\bar{\boldsymbol{s}}_b$ to $\mathcal U$.
\EndFor
\State \Return $\widehat{\boldsymbol{y}} \gets \mathrm{Head}(\boldsymbol{h}_{L+1})$
\end{algorithmic}
\end{algorithm}

%% file: Chapters_Appendix/7c_Experimental_details.tex
\section{Additional Experimental Results}\label{s:AdditionalEmpiricalResults}

Figure \ref{fig:grad-concentration} summarizes Figure \ref{fig:grad-norm-layer} results through the gradient concentration index, defined as the ratio between the maximum and mean per-layer adapter gradient norm. Lower values indicate that optimization is less dominated by a small number of layers. Across epochs, the instruction-queryable model has the lowest or near-lowest concentration index, while LoRA remains more concentrated. The language-regularized router distributes learning more evenly across the adapted network; hence, the shared atom memory should be trained with many blockwise queries.

\begin{figure}[htb]
    \centering
    \includegraphics[width=0.52\linewidth]{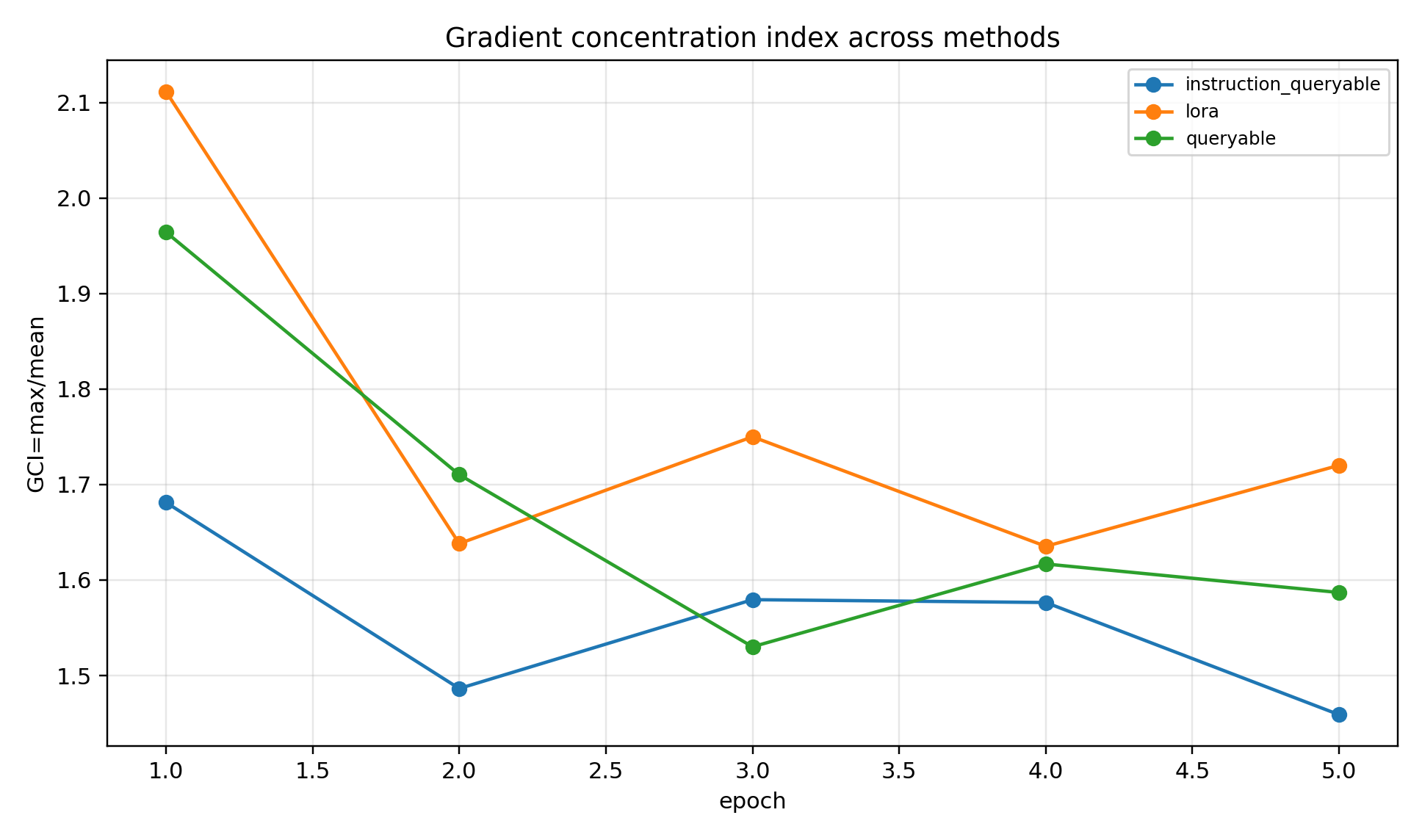}
    \caption{
    Gradient concentration index across epochs. The instruction-queryable method exhibits lower gradient concentration than LoRA, suggesting less layerwise gradient collapse and more distributed adapter learning.
    }
    \label{fig:grad-concentration}
\end{figure}

\subsection{Adapter Performance Comparisons}

Tables~\ref{tab:adapter_design_comparison} and~\ref{tab:dynamic_adapter_accuracy} compare adapter design choices with downstream generalization on AllenAI ARC using Qwen2.5-0.5B-Instruct as a frozen backbone. All methods post-train only adapter or update-memory parameters under the same broad reasoning instruction and are averaged over three independent seeds.  The run adapts the attention and MLP projection modules with rank-$8$ updates for ten epochs. For the queryable variants, the router selects a sparse top-$4$ subset from $16$ shared update atoms; the instruction-conditioned variant additionally encodes the task instruction as an external routing prior.

Table~\ref{tab:adapter_design_comparison} highlights that the main difference is not raw parameter count.  LoRA and DoRA use static layer-local updates, while Text-to-LoRA and Doc-to-LoRA use generated adapters with roughly $145$-$148$M trainable parameters.  MoE-LoRA introduces routing but increases the trainable budget to $18.58$M.  In contrast, \textsc{Queryable LoRA} and \textsc{Instruction-Queryable LoRA} use only $4.456$M trainable parameters, adding about $1.3\%$ overhead relative to LoRA while introducing input-dependent update selection, and in the instruction-conditioned case, an explicit language prior.  Thus, the queryable methods test whether structured routing over a compact shared update memory can improve generalization without relying on parameter-heavy adapter generation. Table~\ref{tab:dynamic_adapter_accuracy} shows that all evaluated methods reach $1.0000$ train accuracy, so the meaningful distinction is evaluation accuracy rather than train-set fitting.  The generated Text-to-LoRA and Doc-to-LoRA baselines obtain $0.5599 \pm 0.0495$ evaluation accuracy despite using the largest parameter budgets. The routed methods perform substantially better: \textsc{Queryable LoRA} achieves the highest mean accuracy, $0.6576 \pm 0.0176$, while \textsc{Instruction-Queryable LoRA} remains competitive at $0.6510 \pm 0.0113$.  MoE-LoRA and Lily also benefit from routing, reaching $0.6458 \pm 0.0254$ and $0.6406 \pm 0.0103$, respectively, but neither exceeds the compact queryable update bank in mean performance.

Overall, in this controlled Qwen-0.5B ARC study, generalization improves more from structured, input-dependent retrieval over reusable update atoms than from simply increasing adapter-generator capacity.  The evidence is strongest for the separation between routed/queryable methods and the high-parameter generated baselines.

\begin{table}[t]
\centering
\tiny
\setlength{\tabcolsep}{7pt}
\renewcommand{\arraystretch}{1.15}
\caption{Qualitative comparison of static, routed, and instruction-conditioned LoRA-style adapters.}
\label{tab:adapter_design_comparison}
\begin{tabular}{p{0.38\linewidth}ccc}
\toprule
\textbf{Method} & \textbf{Trainable params} & \textbf{Routing?} & \textbf{Instruction?} \\
\midrule
LoRA \citep{hu2022lora} & 4,399,104& No& No\\
DoRA \citep{liu2024dora} & 4,644,864& No& No\\
RepLoRA \citep{truong2025replora} & 145,872,000& Shared static& No\\
HyRA \citep{HyRA2025} & 147,922,752& Generated&No\\
DoRAN \citep{diep2025doran} & 145,270,056& Generated&No\\
MoE-LoRA \citep{luo2024moelora} & 18,579,456& Yes& No\\
Text-to-LoRA \citep{TextToLoRA2025} & 148,127,232& Generated& Yes\\
Lily \citep{Lily2024} & 2,159,872& Yes&No\\
Doc-to-LoRA \citep{DocToLoRA2026} & 148,127,232& Generated&Yes\\
Queryable LoRA & 4,456,936& Yes& No\\
Instruction-Queryable LoRA& 4,456,936& Yes& Yes\\
\bottomrule
\end{tabular}
\end{table}
\begin{table}[t]
\centering
\tiny
\setlength{\tabcolsep}{7pt}
\renewcommand{\arraystretch}{1.15}
\caption{Best train and evaluation accuracy across three independent runs for dynamic adapter-based approaches. We report mean $\pm$ standard deviation.}
\label{tab:dynamic_adapter_accuracy}
\begin{tabular}{lcc}
\toprule
\textbf{Method} 
& \textbf{Train Accuracy}& \textbf{Eval Accuracy}\\
\midrule
Text-to-LoRA \citep{TextToLoRA2025}
& $1.0000 \pm 0.0000$ 
& $0.5599 \pm 0.0495$ \\

Instruction-Queryable LoRA& $1.0000 \pm 0.0000$ 
& $\boldsymbol{0.6510} \pm \boldsymbol{0.0113}$ \\

MoE-LoRA \citep{luo2024moelora}
& $1.0000 \pm 0.0000$ 
& $0.6458 \pm 0.0254$ \\

Lily \citep{Lily2024}
& $1.0000 \pm 0.0000$ 
& $0.6406 \pm 0.0103$ \\

Doc-to-LoRA \citep{DocToLoRA2026}
& $1.0000 \pm 0.0000$ 
& $0.5599 \pm 0.0495$ \\
\bottomrule
\end{tabular}
\end{table}

\subsection{Ablation Studies} \label{s:Ablation_studies}

The ablation results in Table~\ref{tab:instruction_router_ablation} were obtained by fine-tuning
Qwen2.5-0.5B-Instruct on the GPQA-Diamond task for $10$ epochs, using the same instruction-queryable adapter configuration across all variants. Each row keeps the trainable parameter count fixed at $4.46$M and changes only the source or structure of the routing signal: no instruction, a generic instruction, the correct task instruction, shuffled or adversarial instructions, a random embedding with matched norm, an instruction-only router without the state query, and a state-query-only router without the instruction prior. We selected the optimal checkpoint for each setting based on early stopping on the highest training accuracy;  in the event of $1.0$ training accuracy across multiple epochs, we selected the epoch with the lowest training loss. The reported train and evaluation accuracies correspond to this specific checkpoint, while the timing columns report the mean evaluation time and the corresponding per-example evaluation latency. Overall, the table isolates whether performance gains come from meaningful semantic conditioning, from the model's internal state query, or merely from added routing capacity.

\begin{table}[htb]
\centering
\small
\setlength{\tabcolsep}{4pt}
\renewcommand{\arraystretch}{1.15}
\caption{Instruction-router ablation results. Each ablation uses the same number of trainable parameters; the comparison isolates the effect of the instruction prior, state query, and semantic alignment.}
\label{tab:instruction_router_ablation}
\resizebox{\textwidth}{!}{
\begin{tabular}{llrrr}
\toprule
\textbf{Ablation} & \textbf{Our interpretation} 
& \textbf{Best eval acc.} 
& \textbf{Mean eval sec.} 
& \textbf{Latency / sample} \\
\midrule
 No instruction& base dynamic routing& 0.2420& 1.6362&0.0739\\
Generic instruction 
& Generic non-task-specific language control
& 0.2500 & 1.5253 & 0.0953 \\

Correct instruction 
& Semantic prior benefit
& 0.3125 & 1.5571 & 0.0973 \\

Shuffled instruction 
& Tests semantic mismatch
& 0.1875 & 1.4997 & 0.0937 \\

Adversarial wrong instruction 
& Tests robustness
& 0.2500 & 1.5432 & 0.0964 \\

Random embedding, same norm 
& Tests capacity versus semantics
& 0.1875 & 1.5392 & 0.0962 \\

State query only, no prior 
& Tests whether instruction helps
& 0.2500 & 1.5169 & 0.0948 \\
\bottomrule
\end{tabular}
}
\end{table}

\begin{figure}[htb]
    \centering
    \begin{subfigure}[t]{0.49\linewidth}
        \centering
        \includegraphics[width=\linewidth]{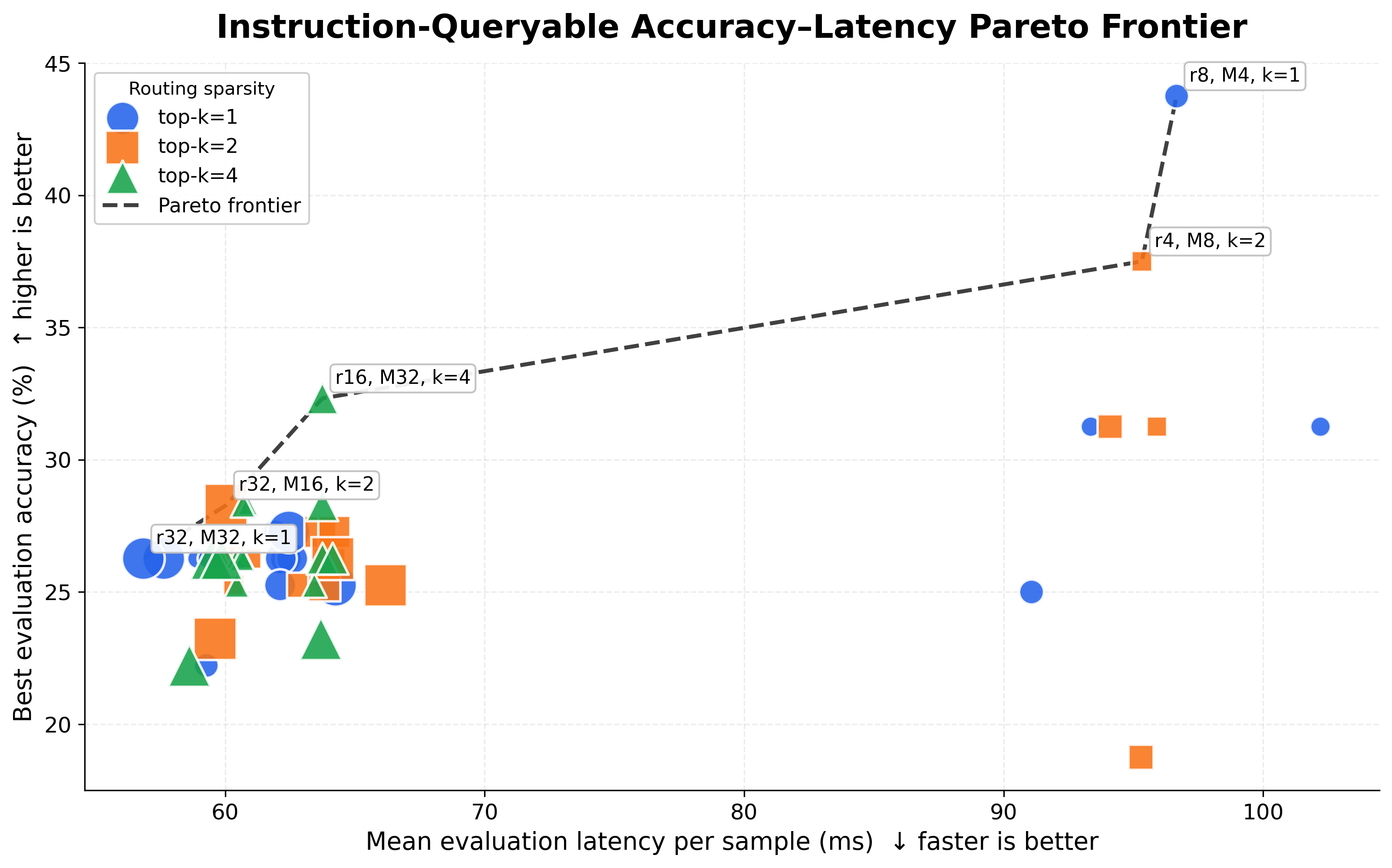}
        \caption{Accuracy--latency frontier.}
        \label{fig:iq_gpqa_pareto_latency}
    \end{subfigure}
    \hfill
    \begin{subfigure}[t]{0.49\linewidth}
        \centering
        \includegraphics[width=\linewidth]{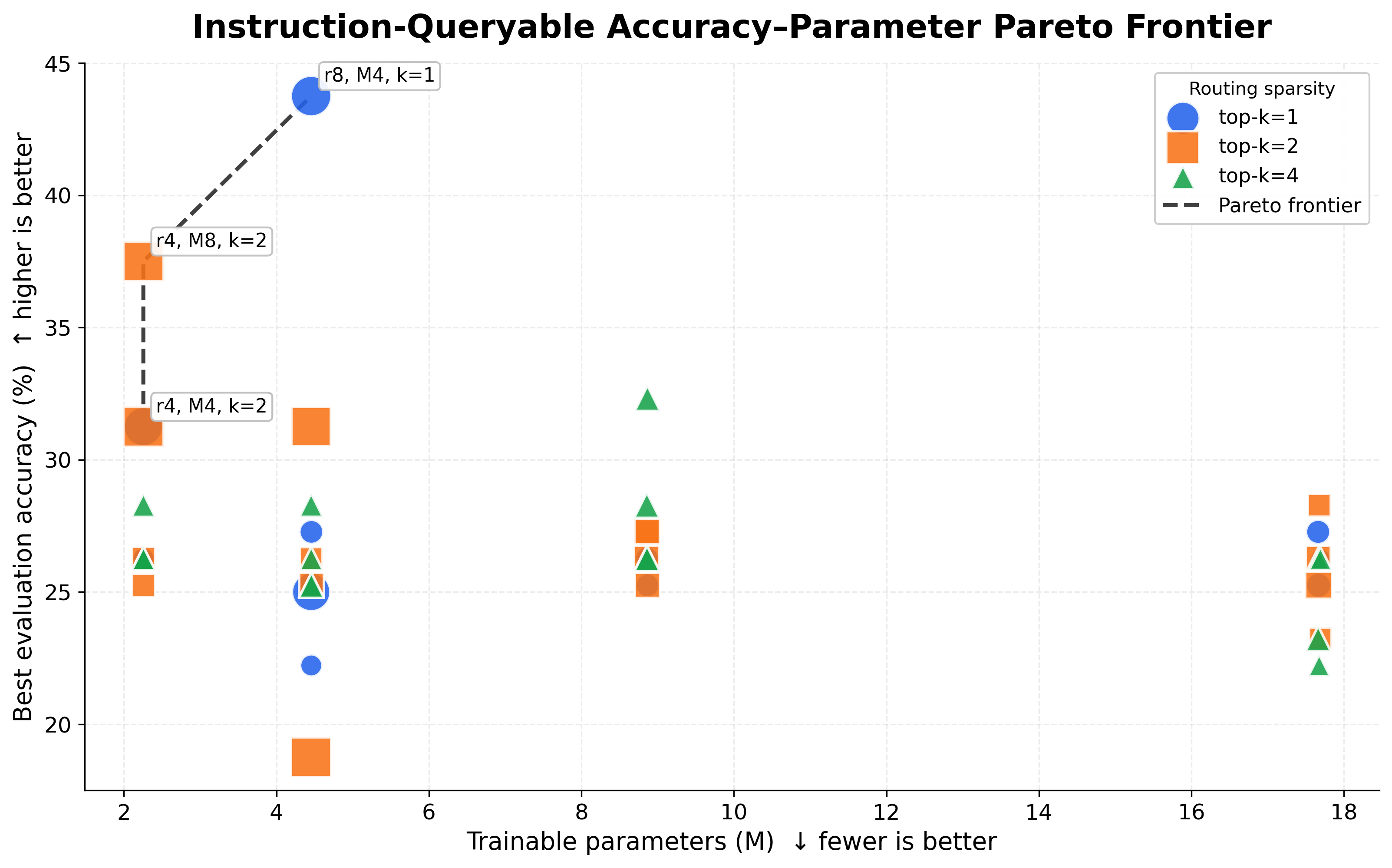}
        \caption{Accuracy--parameter frontier.}
        \label{fig:iq_gpqa_pareto_params}
    \end{subfigure}
    \caption{\textbf{Instruction-queryable Pareto tradeoffs on GPQA-Diamond.}
    We sweep the instruction-queryable adapter design over rank $r$, number of global update atoms $M$, and routing sparsity $k$, and report the best evaluation accuracy against two efficiency costs: mean evaluation latency per sample and number of trainable parameters. Higher accuracy is better, while lower latency and fewer trainable parameters are better. Markers indicate the routing sparsity $k$, marker size reflects trainable parameter count, and the dashed curve shows the non-dominated Pareto frontier.}
    \label{fig:iq_gpqa_pareto}
\end{figure}

\paragraph{Accuracy-efficiency tradeoffs on GPQA-Diamond.}
We evaluate the instruction-queryable adapter on GPQA-Diamond by sweeping the adapter rank $r \in \{4,8,16,32\}$, the number of shared update atoms $M \in \{4,8,16,32\}$, and the routing sparsity $k \in \{1,2,4,8\}$, excluding invalid settings with $k>M$, and with the total number of epochs not exceeding $10$. All configurations use the same frozen Qwen2.5-0.5B-Instruct backbone, task instruction, training procedure, and evaluation protocol; only the instruction-queryable hyperparameters are varied. For each setting, we select the final checkpoint based on the highest observed training accuracy and report its corresponding held-out evaluation accuracy. The dashed curves in Figure~\ref{fig:iq_gpqa_pareto} denote the non-dominated Pareto frontier, where a configuration is retained only if no other setting achieves both higher accuracy and lower cost. The results show that most configurations cluster in a low-latency regime with modest accuracy, while a few sparse routed configurations achieve substantially better accuracy without requiring the largest parameter budgets. In particular, the best Pareto points arise from compact or moderately sized memories with sparse top-$k$ routing, rather than from uniformly increasing rank, atom count, or active atoms per block. The dominated high-parameter and high-latency settings indicate that additional adapter capacity alone is insufficient on GPQA-Diamond; effective atom selection and sparse reuse of the global update memory are critical. Overall, the sweep supports the main design principle of instruction-queryable adaptation: a small shared memory of reusable rank-space update atoms, biased by the task instruction and selected by the router, can yield a stronger accuracy--cost tradeoff than simply enlarging the adapter.

\subsection{Optimization Stress Test}

Tables~\ref{table:2} and~\ref{table:3} report an additional stress test designed to isolate optimization behavior in a deeper and narrower regression model.  This experiment uses the same source-to-target non-convex regression protocol as the main synthetic study, but changes the regressor to a GELU multilayer perceptron with 32 hidden layers of width 32 and a scalar output head.  The point of this configuration is to create a difficult adaptation regime in which useful update directions must propagate through many narrow layers.  This is typically the setting where static, layer-local low-rank updates may be brittle, since each adapted layer receives only its own local correction, whereas the queryable adapter can reuse a shared memory of rank-space atoms across blocks.

For each function, source and target examples are generated independently from noisy two-dimensional regression tasks. The source stage uses $3,000$ samples, split into $80\%$ source-train and $20\%$ source-test, to pretrain the base regressor for 300 epochs.  The target stage then samples $1,200$ fresh examples from a shifted and perturbed target version of the function; $400$ examples are used for adaptation and the remaining $800$ examples are held out for testing.  Inputs are sampled uniformly over the same box used by the experiment driver, and labels include additive Gaussian noise with standard deviation $0.05$.  Adapter methods are trained on the target split for $5,000$ epochs with batch size $64$, adapter learning rate $5\times 10^{-4}$, rank $8$, scaling $16$, and no adapter dropout.  The queryable method uses $8$ shared rank-space atoms, top-$2$ routing, and $4$ blocks.  The PEFT baselines update only their adapter parameters under the default adapter-only setting; the column marked SOAP$^\ast$ is the full-fine-tuning baseline and updates the whole pretrained regressor with the SOAP optimizer \cite{vyas2024soap}.

The tables report post-training mean-squared error.  Table~\ref{table:2} records the best target-training MSE attained during adaptation, while Table~\ref{table:3} records the best held-out target-test MSE checkpoint.  The main pattern is that the queryable adapter achieves low-loss target solutions across nearly all functions, despite its deep, narrow architecture.  In contrast, several static or hypernetwork-style baselines remain close to high-loss plateaus or become numerically unstable.  This is particularly visible on Ackley, Langermann, Levy, Sin-Cos, and Styblinski-Tang, where SOAP$^\ast$ and most PEFT baselines remain far from the best attained MSE while the queryable model reaches a substantially lower value.  Dropwave is also informative, since, even though full fine-tuning fits the target training split well, the queryable adapter yields the strongest held-out test result, suggesting that the shared routed atom memory improves the adapted solution in this hard local-heterogeneity regime. Across the table, the queryable method is the only adapter that consistently avoids both plateau behavior and catastrophic instability, and it obtains the best PEFT result on almost every reported row.  We observe that depth makes isolated layer-wise adaptation difficult. Sharing routed update atoms across blocks can preserve useful descent directions and make the adaptation problem easier to optimize.

\begin{table}[htbp]
    \centering
    \caption{Summary -- \textbf{MSE Loss}. Comparative post-training performance for modeling stochastic non-convex functions (\cite{simulationlib}). 
    \textbf{Bold} = best overall result; \textit{Italic} = best PEFT result (excluding full fine-tuning). 
    $^*$SOAP column shows full fine-tuning results.
These results use a narrow but deep architecture with 32 depth and 32 width (unlike the main text, which uses a wide but shallower design). 
    }
    \label{table:2}
    
    \begin{tabular}{lccccccc}
        \toprule
        \textbf{Function} & \textbf{SOAP*} & \textbf{LoRA} & \textbf{DoRA} & \textbf{HyRA} & \textbf{RepLoRA} & \textbf{DoRAN} & \textbf{Ours} \\
        \midrule
        Ackley& 13.124  & 13.124  & 13.123  & 13.108  & 13.123  & 13.119  & \textit{\textbf{0.113}} \\
        Dropwave          & \textbf{0.582} & 223.643 & 234.136 & 1829.286 & 1828.609 & 1805.232 & \textit{1.291} \\
        Langermann        & 5.502   & 5.502   & 5.502   & 5.499   & 5.502   & 5.502   & \textit{\textbf{0.021}} \\
        Levy              & 324.969 & 324.956 & 324.971 & 324.447 & 324.966 & 324.875 & \textit{\textbf{7.173}} \\
        Matyas            & \textbf{0.017} & 0.135 & 753.808 & 753.813 & 753.459 & 753.748 & \textit{0.087} \\
        Michalewicz       & 0.125   & 0.125   & 0.125   & 0.125   & 0.125   & 0.125   & \textit{\textbf{0.054}} \\
        Rastrigin         & \textbf{1.270} & 49.880 & 77.443 & 4482.271 & 79.764 & 4197.973 & \textit{22.812} \\
        Sin-Cos           & 1.011   & 1.011   & 1.011   & 1.011   & 1.011   & 1.011   & \textbf{\textit{0.161}} \\
        Styblinski-Tang   & 21.271  & 11.843  & 24.708  & inf     & 20.744  & 385.618 & \textit{\textbf{10.898}} \\
        \bottomrule
    \end{tabular}
\end{table}

\begin{table}[htbp]
    \centering
    \caption{Summary -- \textbf{MSE Loss}. Comparative post-training \textbf{test} time performance for modeling stochastic non-convex functions (\cite{simulationlib}). 
    \textbf{Bold} = best overall result; \textit{Italic} = best PEFT result (excluding full fine-tuning). 
    $^*$SOAP column shows full fine-tuning results.
    These results use a narrow but deep architecture of 32 depth, 32 width (unlike in the main text, which uses a wide but shallower design). 
    }
    \label{table:3}
    
    \begin{tabular}{lccccccc}
        \toprule
        \textbf{Function} & \textbf{SOAP*} & \textbf{LoRA} & \textbf{DoRA} & \textbf{HyRA} & \textbf{RepLoRA} & \textbf{DoRAN} & \textbf{Ours} \\
        \midrule
        Ackley& 13.230  & 13.227  & 13.220  & 13.215  & 13.222  & 13.215  & \textit{\textbf{0.673}} \\
        Dropwave          & 25.096  & 193.122 & 246.421 & 1684.105 & 1683.752 & 1608.287 & \textit{\textbf{9.828}} \\
        Langermann        & 5.815   & 5.815   & 5.815   & 5.811   & 5.814   & 5.815   & \textit{\textbf{0.195}} \\
        Levy              & 337.855 & 337.855 & 337.855 & 337.855 & 337.855 & 337.855 & \textit{\textbf{136.888}} \\
        Matyas            & \textbf{0.068} & 0.870 & 749.881 & 749.881 & 749.881 & 749.881 & \textit{0.345} \\
        Michalewicz       & 0.136   & 0.136   & 0.136   & 0.136   & 0.136   & 0.136   & \textit{\textbf{0.135}} \\
        Rastrigin         & 156.795 & 113.054 & 1172.995 & 4772.434 & \textit{\textbf{108.480}} & 4127.224 & 127.405 \\
        Sin-Cos           & 0.924   & 0.924   & 0.924   & 0.923   & 0.924   & 0.923   & \textbf{\textit{0.920}} \\
        Styblinski-Tang   & 14.063  & 11.736  & 24.460  & inf     & 21.886  & 384.235 & \textit{\textbf{8.953}} \\
        \bottomrule
    \end{tabular}
\end{table}

\subsection{LLM Batch Experimentation}

\begin{table*}[htbp]
\centering
\caption{
Test accuracy for a large-scale battery of $\leq$ 1B models. Higher is better; bold indicates strategies with or tied for the best available accuracy for that task/model.
}
\label{tab:model-battery-accuracy-full-matrix}
\scriptsize
\setlength{\tabcolsep}{3pt}
\begin{tabular}{llrrr}
\toprule
\textbf{Task} & \textbf{Model} & \textbf{LoRA} & \textbf{Queryable} & \textbf{Instruction Queryable} \\
\midrule
 Orca-Math & LiquidAI/LFM2-700M & 25.0\% & 21.9\% & \textbf{26.6\%} \\
 & LiquidAI/LFM2.5-350M & 6.2\% & \textbf{9.4\%} & 6.2\% \\
 & Qwen/Qwen2.5-Coder-0.5B-Instruct & 9.4\% & \textbf{14.1\%} & 9.4\% \\
 & Qwen/Qwen3-0.6B & 25.0\% & 20.3\% & \textbf{34.4\%} \\
 & amd/ReasonLite-0.6B & 20.3\% & 20.3\% & \textbf{21.9\%} \\
 & amd/ReasonLite-0.6B-Turbo & 23.4\% & \textbf{28.1\%} & \textbf{28.1\%} \\
 & ibm-granite/granite-4.0-350m & 7.8\% & \textbf{17.2\%} & 10.9\% \\
\midrule
GSM8K & HuggingFaceTB/SmolLM2-360M-Instruct & 10.9\% & 7.8\% & \textbf{17.2\%} \\
 & LiquidAI/LFM2-700M & 42.2\% & \textbf{45.3\%} & 37.5\% \\
 & LiquidAI/LFM2.5-350M & \textbf{23.4\%} & 6.2\% & 15.6\% \\
 & Qwen/Qwen2.5-0.5B-Instruct & 21.9\% & 21.9\% & \textbf{23.4\%} \\
 & Qwen/Qwen2.5-Coder-0.5B-Instruct & 23.4\% & 25.0\% & \textbf{28.1\%} \\
 & Qwen/Qwen3-0.6B & 34.4\% & 39.1\% & \textbf{42.2\%} \\
 & amd/ReasonLite-0.6B & 29.7\% & 29.7\% & \textbf{34.4\%} \\
 & ibm-granite/granite-4.0-350m & 25.0\% & 17.2\% & \textbf{26.6\%} \\
\midrule
MATH & HuggingFaceTB/SmolLM2-360M-Instruct & 4.7\% & \textbf{9.4\%} & 3.1\% \\
 & LiquidAI/LFM2-700M & 23.4\% & \textbf{25.0\%} & \textbf{25.0\%} \\
 & LiquidAI/LFM2.5-350M & \textbf{7.8\%} & 6.2\% & \textbf{7.8\%} \\
 & Qwen/Qwen2.5-Coder-0.5B-Instruct & \textbf{10.9\%} & 9.4\% & 4.7\% \\
 & Qwen/Qwen3-0.6B & 10.9\% & 17.2\% & \textbf{20.3\%} \\
 & amd/ReasonLite-0.6B & 26.6\% & 17.2\% & \textbf{28.1\%} \\
 & ibm-granite/granite-4.0-350m & 7.8\% & \textbf{14.1\%} & 9.4\% \\
\midrule
BoolQ & HuggingFaceTB/SmolLM2-360M-Instruct & 62.5\% & \textbf{78.1\%} & \textbf{78.1\%} \\
 & LiquidAI/LFM2-700M & 81.2\% & \textbf{84.4\%} & 76.6\% \\
 & LiquidAI/LFM2.5-350M & 64.1\% & 59.4\% & \textbf{68.8\%} \\
 & Qwen/Qwen2.5-0.5B-Instruct & \textbf{70.3\%} & \textbf{70.3\%} & \textbf{70.3\%} \\
 & Qwen/Qwen2.5-Coder-0.5B-Instruct & \textbf{65.6\%} & 59.4\% & \textbf{65.6\%} \\
 & Qwen/Qwen3-0.6B & 78.1\% & \textbf{84.4\%} & 78.1\% \\
 & amd/ReasonLite-0.6B & \textbf{59.4\%} & \textbf{59.4\%} & 56.2\% \\
 & amd/ReasonLite-0.6B-Turbo & 57.8\% & 62.5\% & \textbf{70.3\%} \\
 & ibm-granite/granite-4.0-350m & 71.9\% & 65.6\% & \textbf{76.6\%} \\
\midrule
ARC-Challenge & LiquidAI/LFM2-700M & \textbf{68.8\%} & 67.2\% & 62.5\% \\
 & LiquidAI/LFM2.5-350M & 48.4\% & \textbf{62.5\%} & 57.8\% \\
 & Qwen/Qwen2.5-0.5B-Instruct & 50.0\% & 51.6\% & \textbf{53.1\%} \\
 & Qwen/Qwen2.5-Coder-0.5B-Instruct & \textbf{42.2\%} & 32.8\% & 39.1\% \\
 & Qwen/Qwen3-0.6B & 60.9\% & 57.8\% & \textbf{65.6\%} \\
 & amd/ReasonLite-0.6B & 23.4\% & 29.7\% & \textbf{32.8\%} \\
 & amd/ReasonLite-0.6B-Turbo & 21.9\% & \textbf{25.0\%} & 18.8\% \\
 & ibm-granite/granite-4.0-350m & \textbf{48.4\%} & 37.5\% & 43.8\% \\
\bottomrule
\end{tabular}
\end{table*}

\subsection{Datasets}

We use the following datasets for post-training and evaluation: GPQA-Diamond \citep{rein2023gpqa}, MBPP \citep{austin2021program}, AI2 ARC \citep{allenai:arc}, SuperGLUE \citep{wang2019superglue, clark2019boolq}, OpenBookQA \citep{OpenBookQA2018}, RACE \citep{lai-etal-2017-race}, HellaSwag \citep{zellers2019hellaswag}, GSM8K \citep{cobbe2021gsm8k}, MATH \citep{hendrycksmath2021}, Orca-Math \citep{mitra2024orcamath}, and Numina-Math \citep{numina_math_datasets}. These correspond to the general-task benchmarks in Table \ref{table:3} and the mathematics-reasoning benchmarks in Table \ref{table:4}.

%% file: Chapters_Appendix/7d_diagnostics.tex
\section{Continual Routing Analysis}\label{s:Routing}

We evaluate instruction-regularized queryable updates in a sequential continual-learning setting. For each run, the base model and adapter are initialized once, then trained on MBPP, GSM8K, and GPQA-Diamond, without resetting the LoRA factors, router, gates, keys, or shared rank-space atoms between tasks. The backbone remains frozen throughout. After each stage, we evaluate all three tasks and record the block-averaged atom-routing distribution for each evaluation set. This experiment is intended to test the central mechanism of our method: the effective adapter is a routed update of the form $\boldsymbol{B}_\ell(\boldsymbol{I}_r + g_\ell S_b(\boldsymbol{c})) \boldsymbol{A}_\ell$, where $S_b(\boldsymbol{c})$ is assembled from a shared memory of rank-space atoms. Hence, good continual behavior should appear as retained evaluation performance, structured reuse, and controlled drift of the atom routes.

Figure~\ref{fig:continual-atom-usage} shows the atom-usage distributions after each training stage. Each row is an evaluation task, and each column is a shared rank-space atom. The important pattern is sparse, non-uniform memory access: the router repeatedly uses a small subset of atoms without spreading mass uniformly across all atoms or collapsing to a single atom. This behavior is expected from a queryable memory. Atoms, such as a few high-mass columns, are reused across tasks, suggesting a shared low-rank correction structure, but their weights vary across MBPP, GSM8K, and GPQA-Diamond, showcasing that the instruction-conditioned state query still changes the retrieved operator. In other words, the same global memory is being reused, but not in a task-blind way.

\begin{figure*}[htb]
    \centering
    \begin{minipage}{0.32\linewidth}
        \centering
        \includegraphics[width=\linewidth]{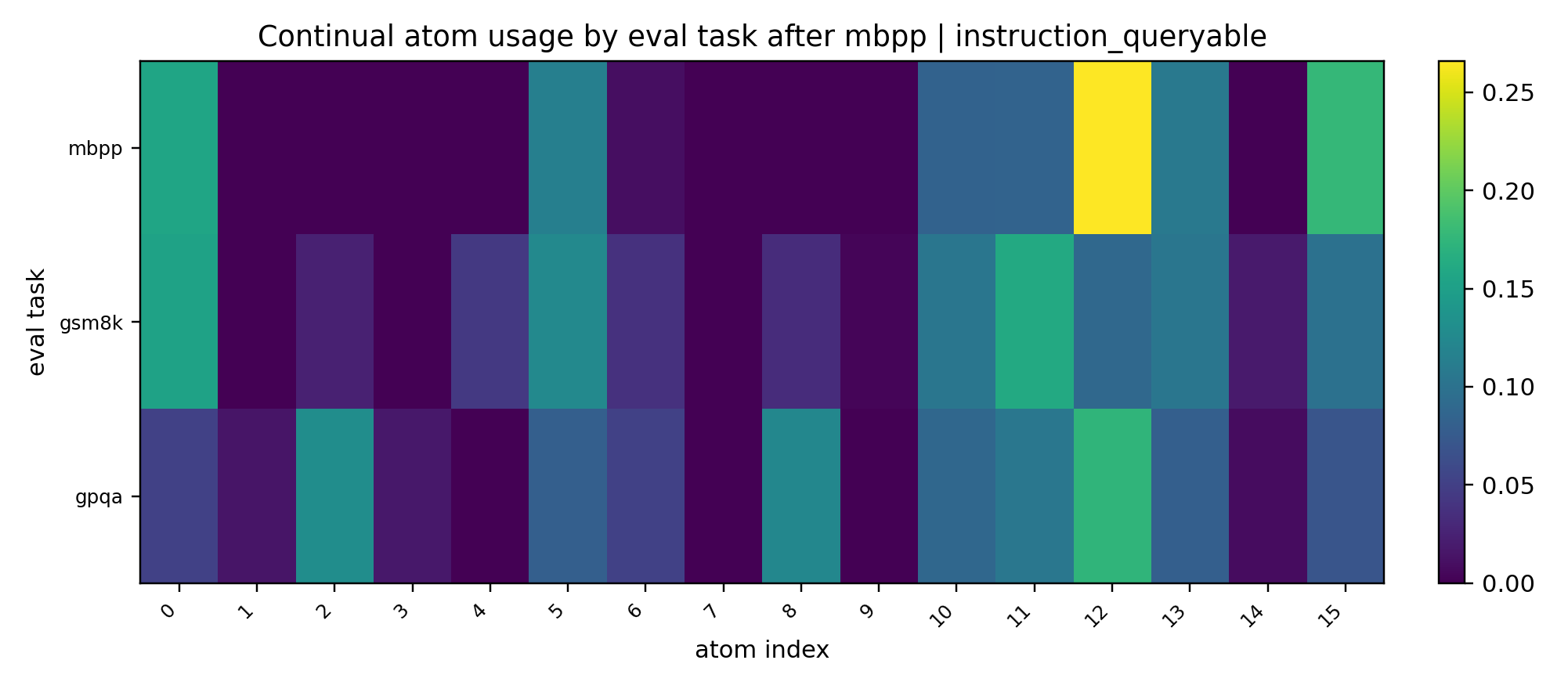}
        \vspace{-1mm}
        \centerline{\small after MBPP}
    \end{minipage}
    \hfill
    \begin{minipage}{0.32\linewidth}
        \centering
        \includegraphics[width=\linewidth]{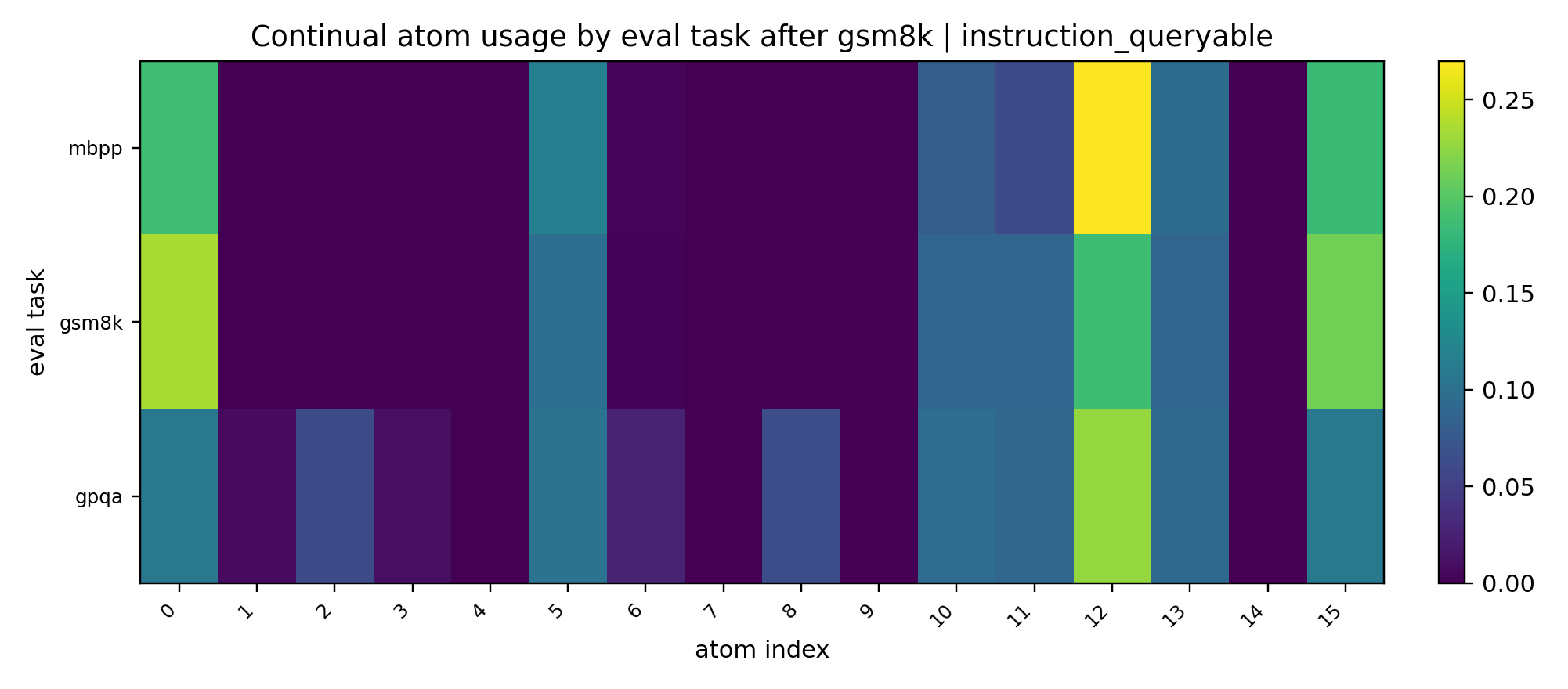}
        \vspace{-1mm}
        \centerline{\small after GSM8K}
    \end{minipage}
    \hfill
    \begin{minipage}{0.32\linewidth}
        \centering
        \includegraphics[width=\linewidth]{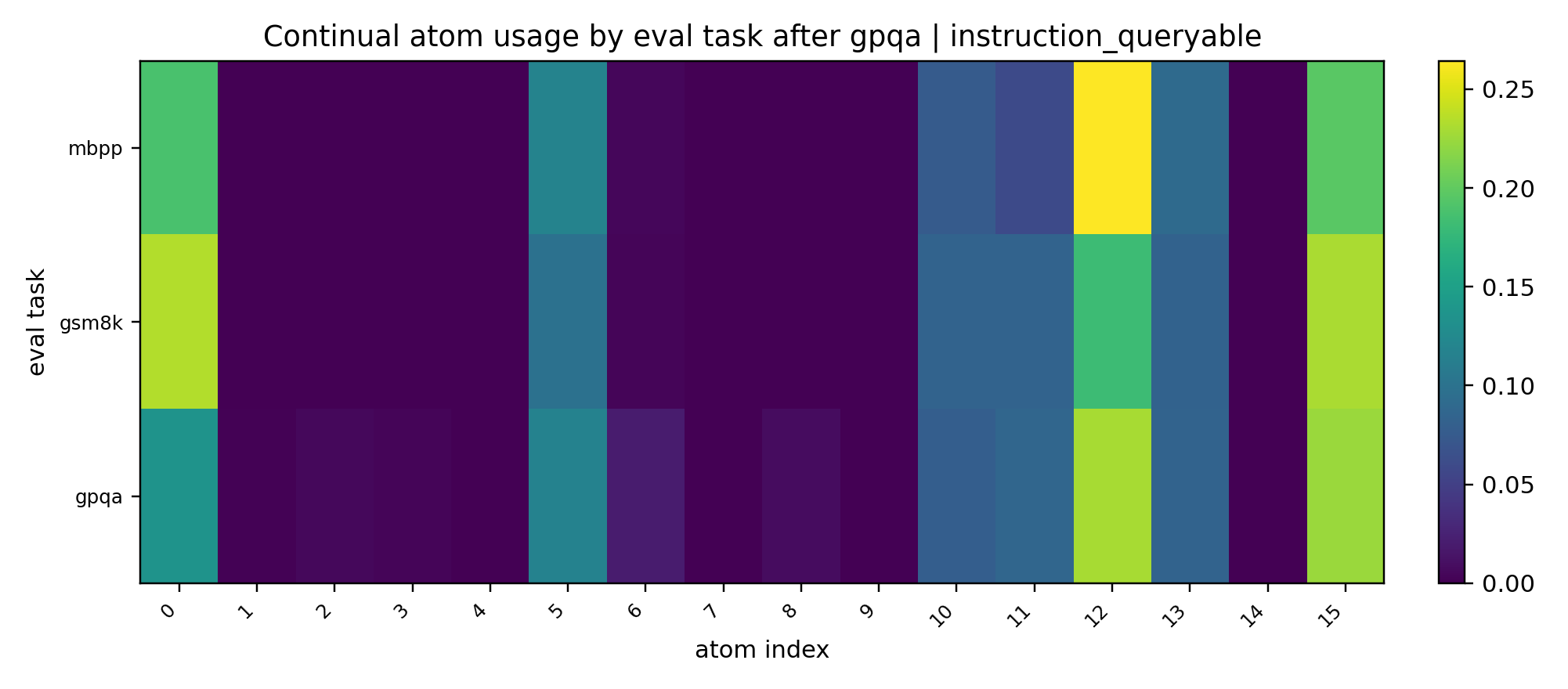}
        \vspace{-1mm}
        \centerline{\small after GPQA-Diamond}
    \end{minipage}
    \caption{Atom usage by evaluation task after each continual training stage. Sparse, repeated high-mass atoms indicate reusable shared memory, while task-dependent changes in mass indicate adaptive routing.}
    \label{fig:continual-atom-usage}
\end{figure*}

Figure~\ref{fig:continual-atom-drift} fixes the evaluation task and tracks how its atom usage changes as new tasks are learned. This view is essential since continual adaptation can fail either by being too rigid or by overwriting previous routes. The observed behavior is intermediate. MBPP keeps a stable high-mass route pattern across later stages, which is consistent with retention. GSM8K and GPQA-Diamond show more localized changes in a few atoms, which is expected because these tasks are introduced later and require new routing structure. The key point is that route drift is concentrated, which ensures that the model reallocates the probability mass among a small active set.

\begin{figure*}[htb]
    \centering
    \begin{minipage}{0.32\linewidth}
        \centering
        \includegraphics[width=\linewidth]{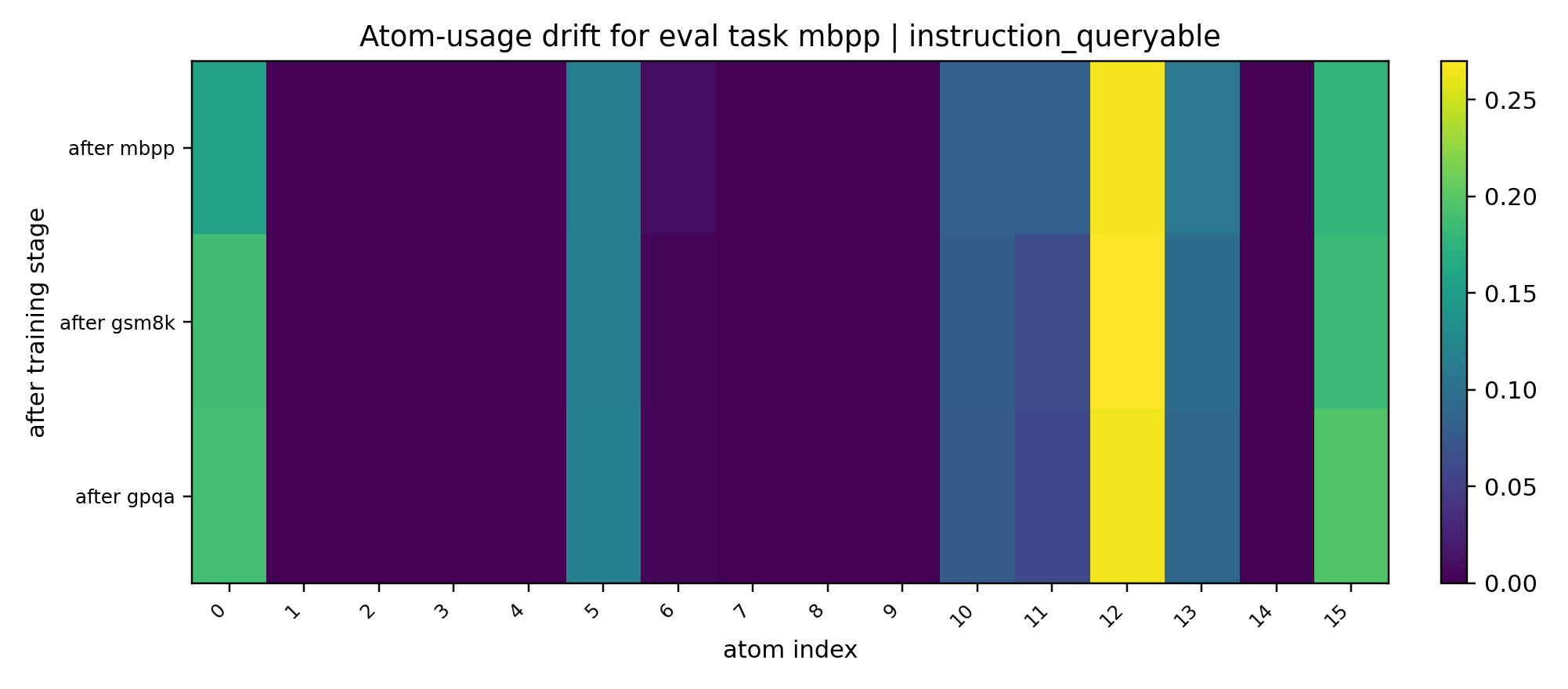}
        \vspace{-1mm}
        \centerline{\small eval MBPP}
    \end{minipage}
    \hfill
    \begin{minipage}{0.32\linewidth}
        \centering
        \includegraphics[width=\linewidth]{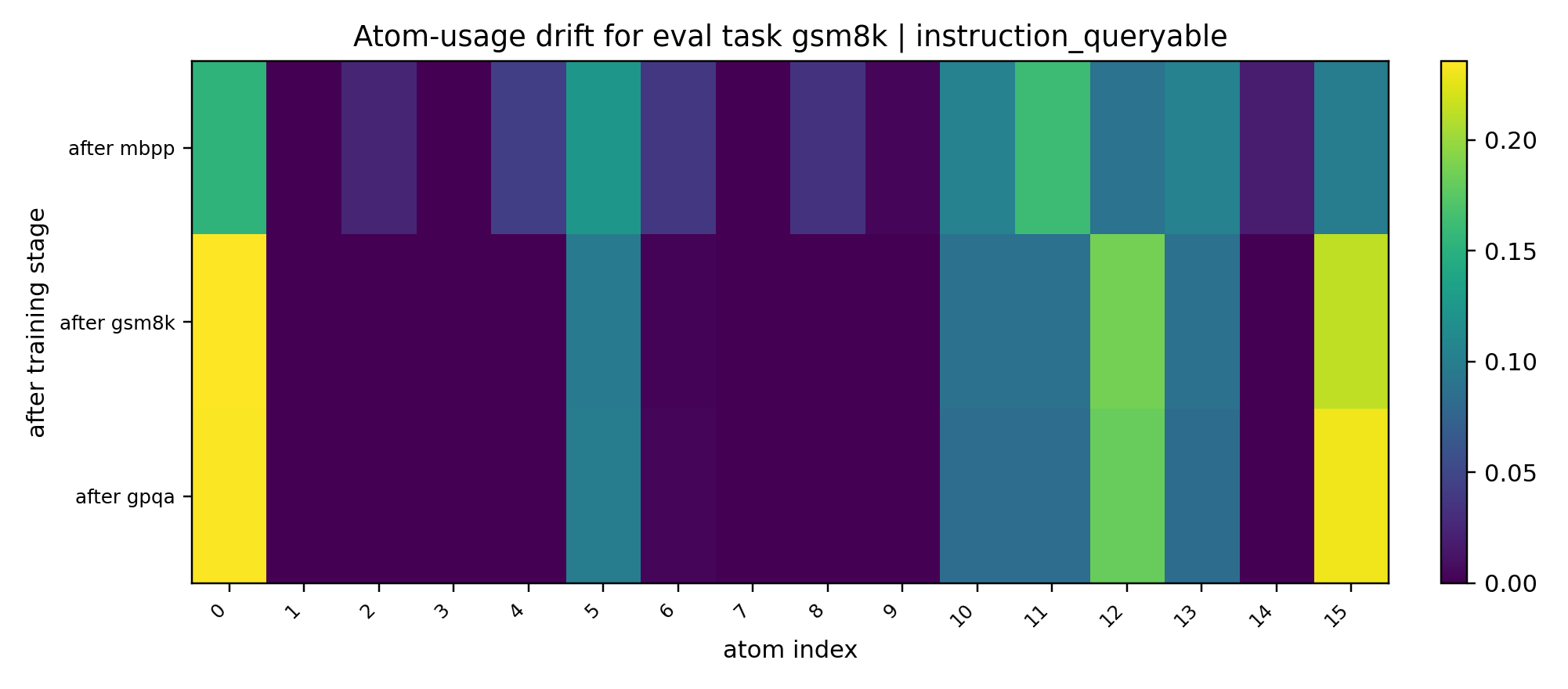}
        \vspace{-1mm}
        \centerline{\small eval GSM8K}
    \end{minipage}
    \hfill
    \begin{minipage}{0.32\linewidth}
        \centering
        \includegraphics[width=\linewidth]{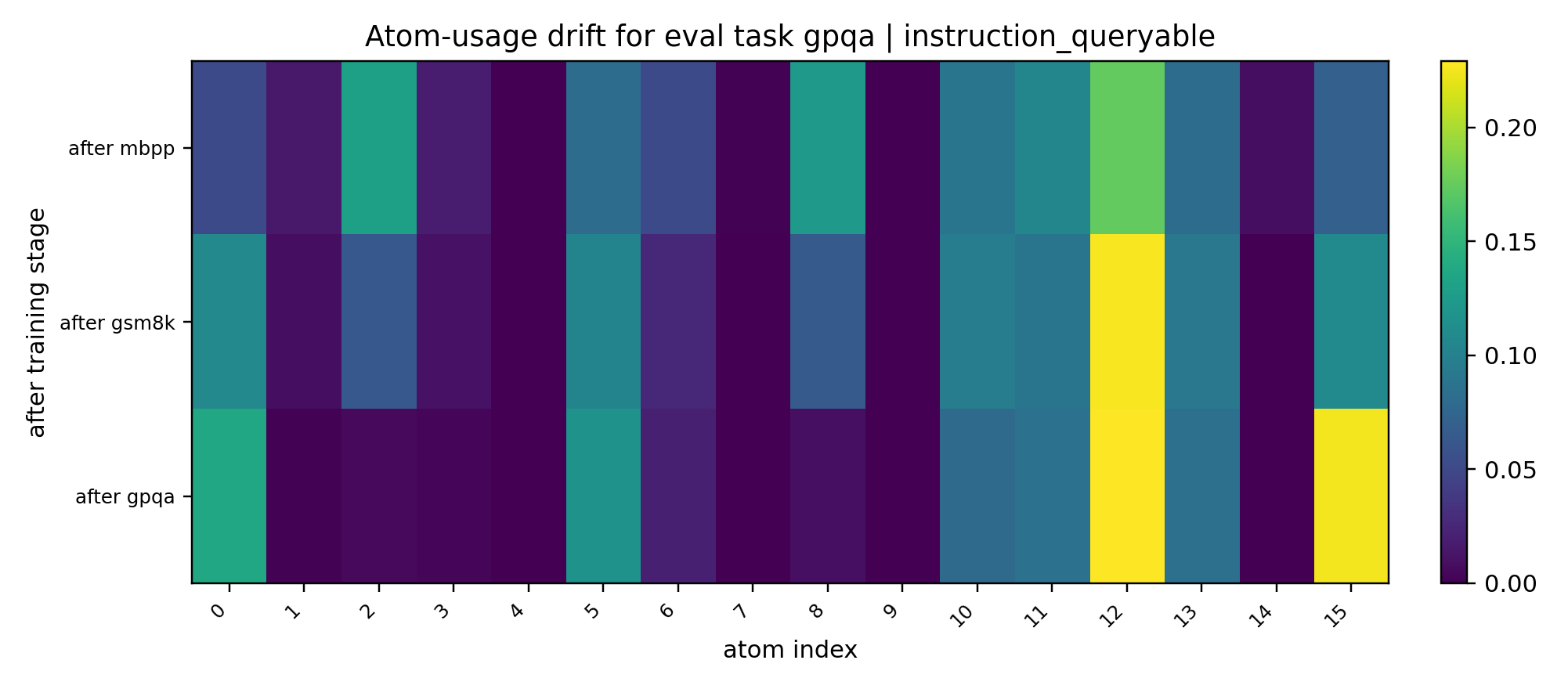}
        \vspace{-1mm}
        \centerline{\small eval GPQA-Diamond}
    \end{minipage}
    \caption{Atom-usage drift for each fixed evaluation task across training stages. The maps show that later tasks induce localized rerouting while preserving much of the earlier sparse route structure.}
    \label{fig:continual-atom-drift}
\end{figure*}

Figure~\ref{fig:continual-atom-specialization-entropy} summarizes the final-stage task specialization of each atom by measuring the entropy of its usage distribution across the evaluation tasks. High entropy indicates an atom that is reused broadly across MBPP, GSM8K, and GPQA, whereas a low entropy indicates an atom whose usage is concentrated on a smaller subset of tasks. Several atoms have near-maximal entropy and behave like shared reusable operators, and a small number of atoms have much lower entropy and hence appear more specialized.

\begin{figure}[htb]
    \centering
    \includegraphics[width=0.78\linewidth]{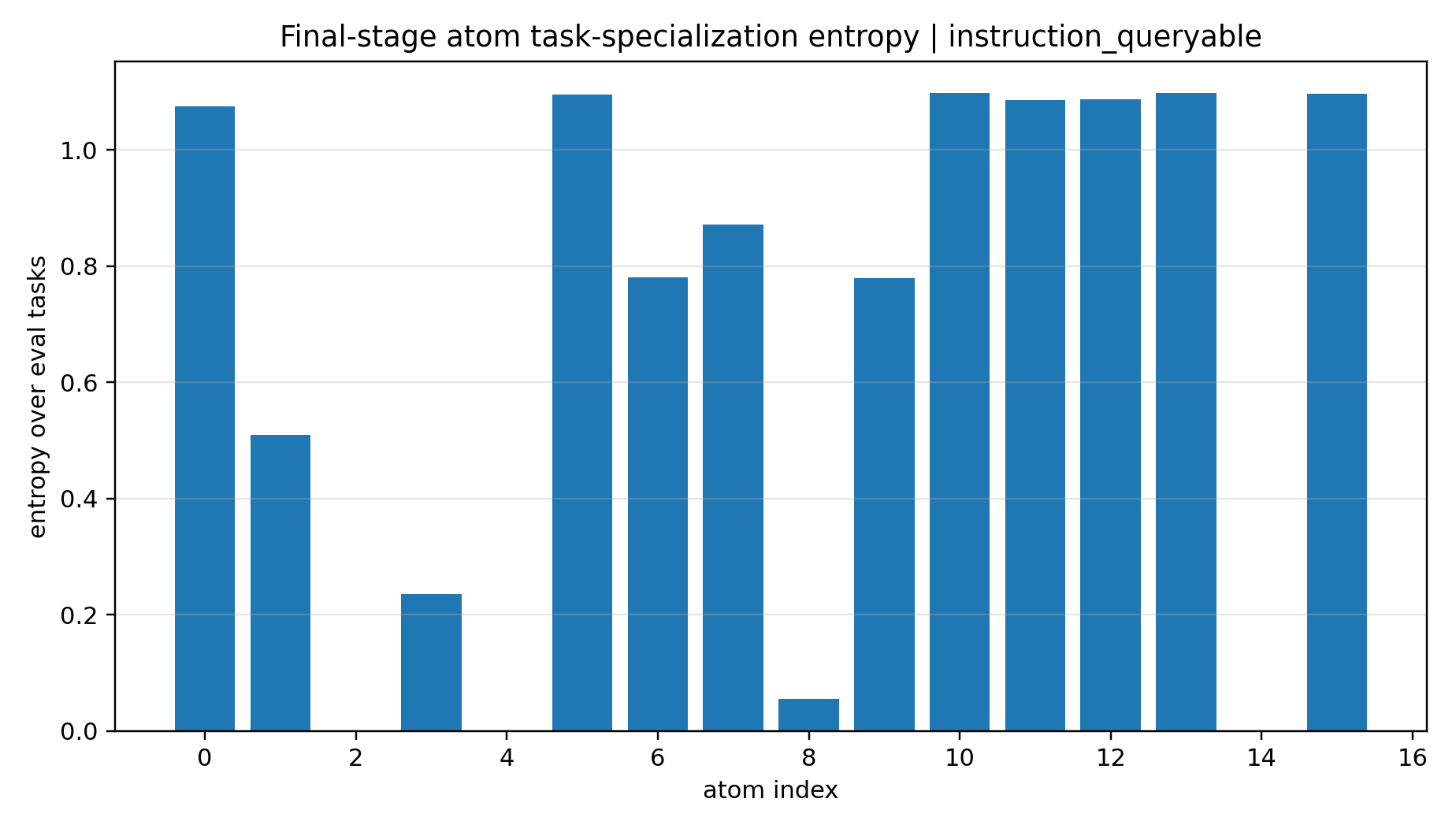}
    \caption{
    A mixture of high and low entropy atoms suggests that the shared memory bank supports both reusable and selective rank-space corrections.
    }
    \label{fig:continual-atom-specialization-entropy}
\end{figure}

Figure~\ref{fig:final-kl-and-stagewise-drift} gives two compact summaries of the same mechanism. The symmetric KL matrix compares final atom-usage distributions across evaluation tasks after the full sequence. MBPP and GSM8K are relatively close, whereas GPQA-Diamond is more distinct, aligning with the intuition that GPQA requires a different mixture over the shared memory. The stagewise route-drift matrix measures how much each evaluation route changes after a new task is learned. The largest drift occurs when GSM8K is introduced, especially on GSM8K itself, whereas the subsequent GSM8K$\rightarrow$GPQA-Diamond transition produces smaller changes on earlier tasks. Hence, we observe that the router can adapt effectively to a new task.

\begin{figure*}[htb]
    \centering
    \begin{minipage}{0.47\linewidth}
        \centering
        \includegraphics[width=\linewidth, trim={ 0 0 0 0.85cm},clip]{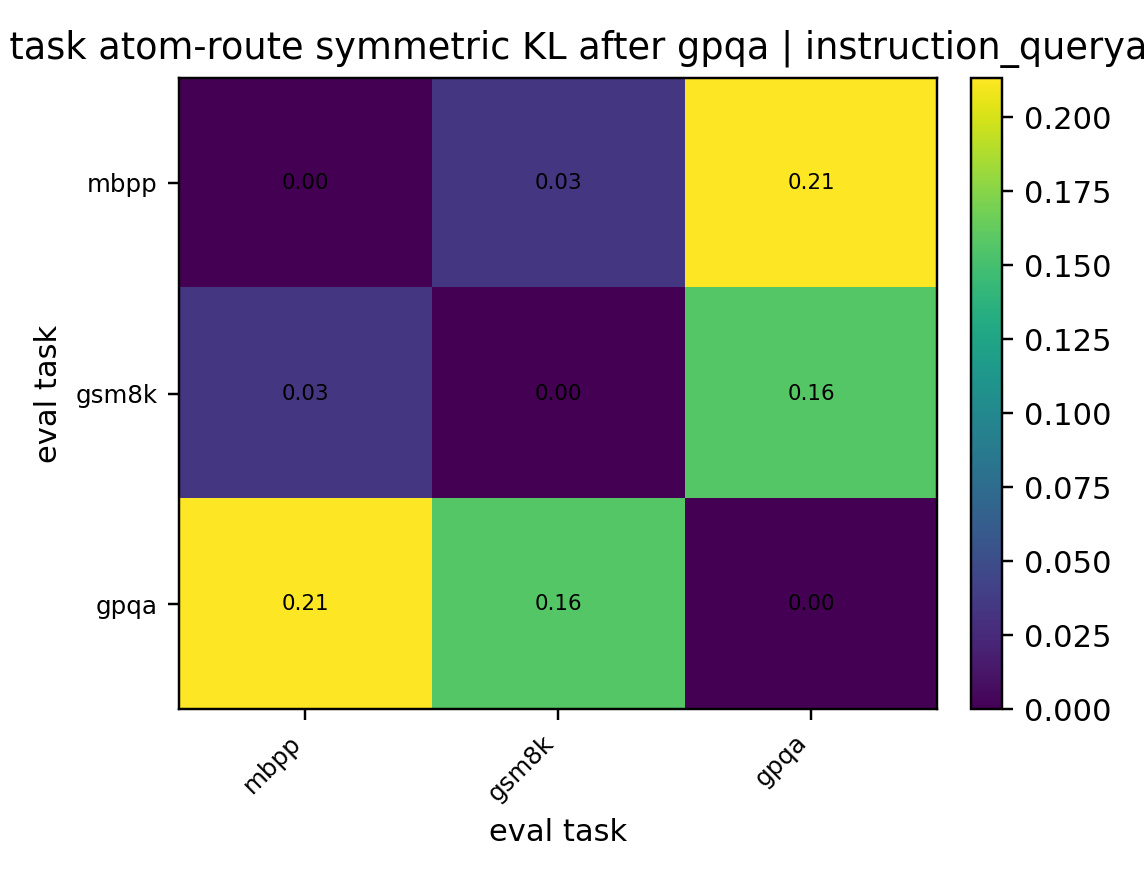}
        \vspace{-1mm}
        \centerline{\small \sc Final Symmetric KL}
    \end{minipage}
    \hfill
    \begin{minipage}{0.47\linewidth}
        \centering
        \includegraphics[width=\linewidth, trim={ 0 0 0 0.85cm},clip]{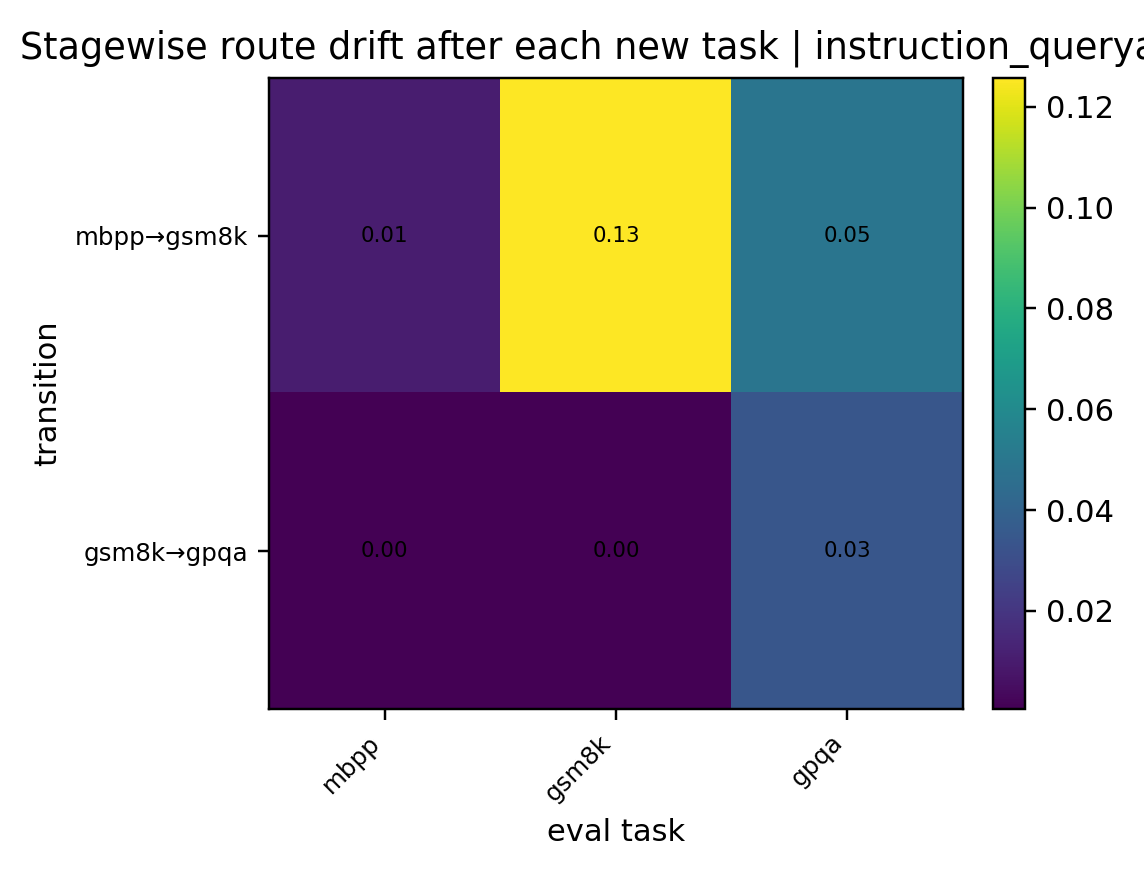}
        \vspace{-1mm}
        \centerline{\small \sc Stagewise Route Drift}
    \end{minipage}
    \caption{Final task separation and stagewise route drift. The KL matrix reflects that GPQA-Diamond uses a more distinct atom mixture after the full sequence. The drift matrix shows that new-task learning changes routes in a concentrated way.}
    \label{fig:final-kl-and-stagewise-drift}
\end{figure*}

%% file: Chapters_Appendix/7b_Additional_Theoretical_Analysis.tex
\section{Additional Theoretical Analysis}

\subsection{Setup}

We use the notations as stated in Sec.~\ref{s:ApproachMain}. For every block $b$, the router produces a probability vector
\[
\boldsymbol{\alpha}_b(c)=(\alpha_{b,1}(c),\ldots,\alpha_{b,M}(c))\in\Delta_M
\]
over a shared memory of rank-space atoms $\{\mathbf{C}_m\}_{m=1}^{M}\subset \mathbb{R}^{r\times r}$. The routed rank-space operator is
\begin{equation}
\mathbf{S}_b(c) := \sum_{m=1}^{M}\alpha_{b,m}(c)\mathbf{C}_m
\label{eq:app_routed_operator}
\end{equation}
For a layer $\ell\in\mathcal{B}_b$, the realized update is
\begin{align}
\Delta\mathbf{W}^{\ell}(\mathbf{h}_{\ell};b,c)
&= \frac{\alpha_{\mathrm{L}}}{r}\,
\mathbf{B}_{\ell}
\bigl(\mathbf{I}_r + g_{\ell}\mathbf{S}_b(c)\bigr)
\mathbf{A}_{\ell} \\
g_{\ell}&=\sigma(\eta_{\ell})\in(0,1)
\label{eq:app_layer_update}
\end{align}
where $\alpha_{\mathrm{L}}$ is the LoRA scaling factor and $r$ is the rank-space dimension. The symbol $\alpha_{\mathrm{L}}$ is used to avoid confusion with the routing weights $\alpha_{b,m}(c)$. The instruction encoder maps a natural-language instruction $c$ to a fixed embedding $\mathbf{e}(c)\in\mathbb{R}^{d_c}$. The state-dependent block query is:
\begin{equation}
\mathbf{q}_b^{(0)} = \mathbf{w}_{\ell_b} + \mathbf{Q}_{\mathrm{cur}}\mathbf{s}^{\mathrm{entry}}_{\ell_b} + \lambda_{\mathrm{ctx}}\mathbf{Q}_{\mathrm{ctx}}\mathbf{e}(c)
\label{eq:app_prequery}
\end{equation}
where $\ell_b$ is the first adapted layer in block $b$. The attention-style depth summary is:
\begin{align}
\mathbf{u}^{\mathrm{att}}_{b-1} &= \sum_{i=1}^{b-1}\beta_{i\to b}\bar{\mathbf{s}}_i \\
\beta_{i\to b} &= \frac{\exp(\xi_{i\to b})}{\sum_{j=1}^{b-1}\exp(\xi_{j\to b})}
\label{eq:app_depth_summary}
\end{align}
with the convention $\mathbf{u}^{\mathrm{att}}_0=\mathbf{0}$. The final query is:
\begin{equation}
\mathbf{q}_b = \mathbf{q}_b^{(0)}+\mathbf{Q}_{\mathrm{dep}}\mathbf{u}^{\mathrm{att}}_{b-1}
\label{eq:app_final_query}
\end{equation}
The depth-summary logits are
\begin{align}
\xi_{i\to b} &=
\frac{\langle \widehat{\mathbf{q}}^{(0)}_b,\widehat{\boldsymbol{\kappa}}_i\rangle}{\sqrt{d_k}T_{\mathrm{dep}}} \\
\widehat{\mathbf{q}}^{(0)}_b &=\text{RMSNorm}(\mathbf{Q}_{\mathrm{dep},q}\mathbf{q}^{(0)}_b) \\
\widehat{\boldsymbol{\kappa}}_i &= \text{RMSNorm}(\mathbf{Q}_{\mathrm{dep},k}\bar{\mathbf{s}}_i).
\label{eq:app_depth_logits}
\end{align}
The language-prior logits and state logits are
\begin{align}
\rho_m(c)
&= \frac{\langle \text{RMSNorm}(\mathbf{R}_{\mathrm{ctx}}\mathbf{e}(c)),\widehat{\mathbf{k}}_m\rangle}{\sqrt{d_k}T_{\mathrm{lang}}} \\ 
\zeta_{b,m} &= \frac{\langle \text{RMSNorm}(\mathbf{q}_b),\widehat{\mathbf{k}}_m\rangle}{\sqrt{d_k}T_{\mathrm{attn}}}
\label{eq:app_state_language_logits}
\end{align}
where $\widehat{\mathbf{k}}_m:=\text{RMSNorm}(\mathbf{k}_m)$. The language prior distribution is:
\begin{equation}
 p_m(c)=\frac{\exp(\rho_m(c))}{\sum_{j=1}^{M}\exp(\rho_j(c))}
\label{eq:app_language_prior}
\end{equation}
The router in the main paper uses fused logits
\begin{equation}
\widetilde{\zeta}_{b,m}(c) = \zeta_{b,m}+\tau_{\mathrm{lang}}\log p_m(c)
\label{eq:app_tilde_logits}
\end{equation}
For full routing, the active set is $I=\{1,\ldots,M\}$. For top-$k$ routing, $I=I_b(c)$ denotes the selected active set. On a fixed active set $I$, the routed weights are
\begin{equation}
\alpha_{b,m}(c) = \frac{\exp(\widetilde{\zeta}_{b,m}(c))}{\sum_{j\in I}\exp(\widetilde{\zeta}_{b,j}(c))}
\label{eq:app_active_router}
\end{equation}
for $m\in I,$ and $\alpha_{b,m}(c)=0$ for $m\notin I$ in the sparse case. The active set \(I_b\) is treated as fixed inside differentiability statements. This is automatic away from top-\(k\) switching boundaries and is the only local smoothness convention needed for logit-gradient calculations. Matrix inner products are Frobenius inner products, \(\langle \mathbf{U},\mathbf{V}\rangle_F={\rm tr}(\mathbf{U}^{\top}\mathbf{V})\). We consider the following assumptions:

\begin{assumption}
\label{ass:app_bounded_atoms_factors}
$\exists$ finite constants $R_C,R_A,R_B>0$ such that $\|\mathbf{C}_m\|\le R_C$, $
\|\mathbf{A}_{\ell}\|\le R_A$, $\|\mathbf{B}_{\ell}\|\le R_B$; $\forall \ $atom $m$ and every adapted layer $\ell$. Matrix norms are operator norms unless explicitly stated otherwise.
\end{assumption}

\begin{assumption}
\label{ass:app_bounded_states_instructions}
$\exists$ finite constants $R_s,R_e>0$ such that $\|\bar{\mathbf{s}}_i\|\le R_s$, 
$\|\mathbf{s}^{\mathrm{entry}}_{\ell_b}\|\le R_s$, $
\|\mathbf{e}(c)\|\le R_e$; $\forall \ $ completed block $i$, block entry layer $\ell_b$, and instruction $c$ in the region of interest.
\end{assumption}

\begin{assumption}
\label{ass:app_rms_region}
Let $\text{RMS}_{\varepsilon}(\mathbf{x}) := \sqrt{\frac{1}{d}\|\mathbf{x}\|_2^2+\varepsilon}$ and $\text{RMSNorm}(\mathbf{x}) :=
\frac{\mathbf{x}}{\text{RMS}_{\varepsilon}(\mathbf{x})}$.  $\exists \ \rho_{\min}>0$ such that every vector to which RMS normalization is applied in the router satisfies $\text{RMS}_{\varepsilon}(\mathbf{x})\ge \rho_{\min}$.
\end{assumption}

\begin{assumption}
\label{ass:app_fixed_active_set}
For the top-$k$ sparse router, all local differentiability and Lipschitz statements are restricted to a region where the active set $I_b(c)$ is constant. Equivalently, the $k$th and $(k+1)$st largest fused logits are separated by a positive margin throughout the region.
\end{assumption}

\subsection{Results}

Section \ref{s:ApproachMain} defines the language prior as:
\begin{align}
p_m(\boldsymbol{c}) &= \frac{\exp(\rho_m(\boldsymbol{c}))}{\sum_{j=1}^{M}\exp(\rho_j(\boldsymbol{c}))} \\
\tilde\zeta_{b,m}(\boldsymbol{c}) &= \zeta_{b,m} + \tau_{\mathrm{lang}}\log p_m(\boldsymbol{c})
\end{align}
where $p_m(c)$ is the language-prior distribution and $\tilde\zeta_{b,m}(c)$ are the fused routing logits. Lemma \ref{lem:app_equivalent_raw_logit_fusion} represents this formulation in an equivalent and simplified sense.

\begin{lemma} \label{lem:app_equivalent_raw_logit_fusion}
Define $z_{b,m}$ as $z_{b,m}(c) = \zeta_{b,m}+\tau_{\mathrm{lang}}\rho_m(c)$, where, $m = 1,2,3,..., M$. Then, both the usual softmax router and the top-$k$ softmax router obtained from the logits $\{ \tilde\zeta_{b,m}\}_{m=1}^M$ are equivalent to those obtained from $\{ z_{b,m}\}_{m=1}^M$. 
\end{lemma}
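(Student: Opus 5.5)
The plan is to show that the two logit families differ by a constant that is shared across all atoms $m$. Softmax is invariant under such a shift. Top-$k$ selection depends only on the ordering of the logits, and a common shift preserves that ordering.

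The first step is to expand $\log p_m(c)$ using the definition \eqref{eq:app_language_prior}:
\[
\log p_m(c)=\rho_m(c)-\log\sum_{j=1}^{M}\exp(\rho_j(c)) =: \rho_m(c)-\Lambda(c).
\]
The log-partition term $\Lambda(c)$ depends on the instruction $c$ only, not on $m$ or on the block state. Substituting into \eqref{eq:app_tilde_logits} gives
\[
\widetilde{\zeta}_{b,m}(c)=z_{b,m}(c)-\tau_{\mathrm{lang}}\Lambda(c)
\]
for every $m\in[M]$. So the two logit vectors differ by the constant $\kappa:=\tau_{\mathrm{lang}}\Lambda(c)$, which is independent of $m$.

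The second step handles the dense softmax router. For any active set $I$ (in particular $I=[M]$), the definition \eqref{eq:app_active_router} gives
\[
\frac{\exp(\widetilde\zeta_{b,m})}{\sum_{j\in I}\exp(\widetilde\zeta_{b,j})}=\frac{e^{-\kappa}\exp(z_{b,m})}{e^{-\kappa}\sum_{j\in I}\exp(z_{b,j})}.
\]
The factor $e^{-\kappa}$ cancels, so the routing weights coincide.

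The third step handles top-$k$. Because $\widetilde\zeta_{b,m}-\widetilde\zeta_{b,m'}=z_{b,m}-z_{b,m'}$ for all pairs $(m,m')$, the two families induce the same ordering of the atoms. Hence the $k$ largest entries occur at the same indices, and $I_b$ is the same set under both. Ties need one remark: if the top-$k$ operator uses a deterministic tie-breaking rule that depends only on comparisons and indices, that rule also yields the same $I_b$. Once the active sets agree, the second step applied to this common $I$ shows that the restricted softmax weights coincide, and the weights are zero off $I$ in both cases.

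There is no real obstacle in this argument. The only point needing care is tie-breaking in top-$k$, which the comparison-invariance remark above covers. It is also worth noting that the equivalence holds exactly for every $\tau_{\mathrm{lang}}\in\mathbb{R}$, including $\tau_{\mathrm{lang}}=0$.
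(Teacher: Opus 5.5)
Your proof is correct and follows the paper's argument: expanding $\log p_m(c)$ shows that $\widetilde\zeta_{b,m}(c)$ and $z_{b,m}(c)$ differ by the $m$-independent constant $\tau_{\mathrm{lang}}\log\sum_{j} e^{\rho_j(c)}$, and softmax is invariant under such a shift. You also spell out the top-$k$ case (a common shift preserves the ordering, so the active set is identical, with a remark on tie-breaking), which the paper's proof leaves implicit.
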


\begin{proof}
We observe that:
\begin{align}
\log p_m(\boldsymbol{c}) &= \rho_m(\boldsymbol{c}) - \log\Bigl(\sum_{j=1}^{M} e^{\rho_j(\boldsymbol{c})}\Bigr)\\
\implies \tilde\zeta_{b,m}(\boldsymbol{c}) &= \zeta_{b,m}+\tau_{\mathrm{lang}} \rho_m(\boldsymbol{c})
- \tau_{\mathrm{lang}}\log\Bigl(\sum_{j=1}^{M} e^{\rho_j(\boldsymbol{c})}\Bigr)
\end{align}
Hence, $\{ \tilde\zeta_{b,m}\}_{m=1}^M$ and $\{ z_{b,m}\}_{m=1}^M$ differ by only a scalar quantity, and thus, the softmax probabilities obtained from either logits are the same.
\end{proof}

The router can be understood as solving a precise retrieval problem over the shared atom memory. It selects atoms that match the current hidden state while remaining biased toward the semantic preference induced by the instruction. This is essential since language changes the retrieval distribution over a bounded set of learned update atoms, which gives the method a principled middle ground between static LoRA and unstable text-to-weight generation. We state the theorem \ref{thm:app_variational_instruction_retrieval} below.

\begin{theorem}[Variational characterization of instruction-regularized retrieval]
\label{thm:app_variational_instruction_retrieval}
Fix a block $b$, instruction $c$, and active set $I$. The routing distribution $\boldsymbol{\alpha}_{b,I}(c)$ defined in~\eqref{eq:app_active_router} is the unique solution of
\begin{equation}
\boldsymbol{\alpha}_{b,I}(c) = \arg\max_{\boldsymbol{a}\in\Delta(I)}
\left\{ \langle \boldsymbol{a},\boldsymbol{\zeta}_{b,I}\rangle -
\mathrm{KL}\!\left(\boldsymbol{a}\,\middle\|\,\boldsymbol{\pi}^{(\tau)}_{I}(c)\right) \right\}
\label{eq:app_variational_router_exact}
\end{equation}
where $\Delta(I)$ is the probability simplex on $I$. Equivalently,
\begin{equation}
\boldsymbol{\alpha}_{b,I}(c) = \arg\max_{\boldsymbol{a}\in\Delta(I)}
\left\{ \langle \boldsymbol{a},\boldsymbol{\zeta}_{b,I}+\tau_{\mathrm{lang}}\log \mathbf{p}_{I}(c)\rangle +H(\boldsymbol{a}) \right\}
\label{eq:app_entropy_router}
\end{equation}
where $H(\boldsymbol{a})=-\sum_{m\in I}a_m\log a_m$.
\end{theorem}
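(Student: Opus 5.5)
The plan is to rewrite the KL-regularized objective as an entropy-regularized linear objective, and then use the Gibbs variational principle. Both characterizations then follow from a single identity.

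\emph{Step 1: reduce \eqref{eq:app_variational_router_exact} to \eqref{eq:app_entropy_router}.} For $\boldsymbol{a}\in\Delta(I)$ we have
\[
\mathrm{KL}(\boldsymbol{a}\,\|\,\boldsymbol{\pi}^{(\tau)}_I)=-H(\boldsymbol{a})-\sum_{m\in I}a_m\log\pi^{(\tau)}_{I,m}.
\]
By definition, $\log\pi^{(\tau)}_{I,m}=\tau_{\mathrm{lang}}\log p_m(c)-\log Z_I$ with $Z_I=\sum_{j\in I}p_j(c)^{\tau_{\mathrm{lang}}}$, and $Z_I$ does not depend on $m$. Since $\sum_m a_m=1$, the two objectives therefore differ by the constant $\log Z_I$. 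Both are finite because $p_m(c)>0$, as it is a softmax of finite logits. Hence they have the same maximizers.

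\emph{Step 2: solve \eqref{eq:app_entropy_router} in closed form.} Write $v_m:=\widetilde\zeta_{b,m}(c)=\zeta_{b,m}+\tau_{\mathrm{lang}}\log p_m(c)$ for $m\in I$, and let $\boldsymbol{\alpha}=\boldsymbol{\alpha}_{b,I}(c)$ be the softmax of $\boldsymbol{v}$ on $I$, as in \eqref{eq:app_active_router}. Then $\log\alpha_m=v_m-\log\sum_{j\in I}e^{v_j}$. Substituting gives, for every $\boldsymbol{a}\in\Delta(I)$,
\[
\langle\boldsymbol{a},\boldsymbol{v}\rangle+H(\boldsymbol{a})=\log\sum_{j\in I}e^{v_j}-\mathrm{KL}(\boldsymbol{a}\,\|\,\boldsymbol{\alpha}).
\]
Gibbs' inequality gives $\mathrm{KL}(\boldsymbol{a}\|\boldsymbol{\alpha})\ge 0$, with equality iff $\boldsymbol{a}=\boldsymbol{\alpha}$, because $\boldsymbol{\alpha}$ has full support on $I$. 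So $\boldsymbol{\alpha}$ attains the maximum value $\log\sum_{j\in I} e^{v_j}$, and it is the unique maximizer. Combined with Step 1, this yields uniqueness for \eqref{eq:app_variational_router_exact} as well. Lemma~\ref{lem:app_equivalent_raw_logit_fusion} is not needed here, though it confirms that the additive normalizer is irrelevant.

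\emph{Main obstacle.} The only delicate points are the boundary conventions. First, $0\log 0=0$ on faces of the simplex. Second, the KL/entropy identities must stay valid when some $a_m=0$; they do, since every $\pi^{(\tau)}_{I,m}>0$ and every $\alpha_m>0$, so no infinite terms arise. Third, in the top-$k$ case the statement concerns the restriction to the fixed set $I$, so the simplex is $\Delta(I)$ and not $\Delta_M$. Finally, strict concavity of $H$ gives an alternative uniqueness argument, but the KL identity already settles it.
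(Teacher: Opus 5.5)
Your proof is correct, but it takes a different route from the paper's. The paper works directly on the KL-form objective. It expands the KL term and notes that $\Phi$ is strictly concave, so there is at most one maximizer. It then introduces a Lagrange multiplier for the single constraint $\sum_{m\in I}a_m=1$ and solves the stationarity condition to get $a_m\propto \pi^{(\tau)}_{I,m}(c)\exp(\zeta_{b,m})$. Substituting the definition of $\boldsymbol{\pi}^{(\tau)}_I(c)$ recovers the fused softmax. Only at the end does it derive the entropy form, by observing that $\log\sum_{j\in I}p_j(c)^{\tau_{\mathrm{lang}}}$ is constant on the simplex.

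You reverse that order. You first reduce to the entropy form by the same constant-shift observation. You then replace the Lagrangian calculation with the Gibbs identity $\langle\boldsymbol{a},\boldsymbol{v}\rangle+H(\boldsymbol{a})=\log\sum_{j\in I}e^{v_j}-\mathrm{KL}(\boldsymbol{a}\,\|\,\boldsymbol{\alpha})$. That one identity gives optimality, uniqueness, and the optimal value (the log-sum-exp) together.

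Your route has a real rigor advantage. The paper's stationarity step ignores the constraints $a_m\ge 0$ and differentiates $a_m\log a_m$, which is not differentiable at $a_m=0$. It therefore tacitly relies on the maximizer being interior, and on concavity to promote an interior stationary point to a global maximum over $\Delta(I)$. Your verification argument needs neither. It covers the faces of the simplex automatically, because $\boldsymbol{\alpha}$ and $\boldsymbol{\pi}^{(\tau)}_I(c)$ have full support on $I$.

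The paper's route has its own merit: it derives the softmax form rather than verifying a guessed candidate. It also establishes uniqueness through strict concavity, independently of knowing the closed-form solution.
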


\begin{proof}
Let
\[
\Phi(\boldsymbol{a}) := \langle \boldsymbol{a},\boldsymbol{\zeta}_{b,I}\rangle -
\mathrm{KL}\!\left(\boldsymbol{a}\,\middle\|\,\boldsymbol{\pi}^{(\tau)}_I(c)\right)
\]
for $\boldsymbol{a}\in\Delta(I)$. Expanding the KL divergence gives:
\begin{align*}
\Phi(\boldsymbol{a}) &= \sum_{m\in I}a_m\zeta_{b,m} -
\sum_{m\in I}a_m\log\frac{a_m}{\pi^{(\tau)}_{I,m}(c)}\\
&= \sum_{m\in I}a_m\zeta_{b,m} - \sum_{m\in I}a_m\log a_m +
\sum_{m\in I}a_m\log\pi^{(\tau)}_{I,m}(c)
\end{align*}
The map $\boldsymbol{a}\mapsto -\sum_{m\in I}a_m\log a_m$ is strictly concave on the relative interior of the simplex. The remaining terms are linear in $\boldsymbol{a}$. Hence, $\Phi$ is strictly concave on $\Delta(I)$ and has at most one maximizer. To compute the maximizer, introduce a Lagrange multiplier $\lambda\in\mathbb{R}$ for the constraint $\sum_{m\in I}a_m=1$. Hence, the Lagrangian is:
\[
\mathcal{J}(\boldsymbol{a},\lambda) = \sum_{m\in I}a_m\zeta_{b,m} - \sum_{m\in I}a_m\log\frac{a_m}{\pi^{(\tau)}_{I,m}(c)} + \lambda\left(\sum_{m\in I}a_m-1\right)
\]
$\forall$ $m\in I$,
\begin{align*}
\frac{\partial \mathcal{J}}{\partial a_m} &= \zeta_{b,m} -
\frac{\partial}{\partial a_m}\left(a_m\log\frac{a_m}{\pi^{(\tau)}_{I,m}(c)}\right) +\lambda\\
&= \zeta_{b,m}-\log a_m+\log\pi^{(\tau)}_{I,m}(c)-1+\lambda
\end{align*}
From the first-order condition, $\frac{\partial \mathcal{J}}{\partial a_m} = 0$. Hence,
\begin{align}
0 &=\zeta_{b,m}-\log a_m+\log\pi^{(\tau)}_{I,m}(c)-1+\lambda \\
\implies a_m &= \exp(\lambda-1)\,\pi^{(\tau)}_{I,m}(c)\exp(\zeta_{b,m})
\end{align}
$\exp(\lambda-1)$ is independent of $m$ and is determined by the simplex constraint. Hence,
\begin{equation}
 a_m = \frac{\pi^{(\tau)}_{I,m}(c)\exp(\zeta_{b,m})}{\sum_{j\in I}\pi^{(\tau)}_{I,j}(c)\exp(\zeta_{b,j})}
\label{eq:app_variational_softmax_intermediate}
\end{equation}
Since, $\pi^{(\tau)}_{I,m}(c) = \frac{p_m(c)^{\tau_{\mathrm{lang}}}}{\sum_{q\in I}p_q(c)^{\tau_{\mathrm{lang}}}}$,
\begin{align*}
 a_m = \frac{\exp(\zeta_{b,m})p_m(c)^{\tau_{\mathrm{lang}}}}{\sum_{j\in I}\exp(\zeta_{b,j})p_j(c)^{\tau_{\mathrm{lang}}}} =
\frac{\exp(\zeta_{b,m}+\tau_{\mathrm{lang}}\log p_m(c))}{\sum_{j\in I}\exp(\zeta_{b,j}+\tau_{\mathrm{lang}}\log p_j(c))}
\end{align*}
This is the required active-set router in ~\eqref{eq:app_active_router}. Since the objective is strictly concave, this maximizer is unique. To show the entropy form, by \ref{eq:app_active_router},
\[
\log\pi^{(\tau)}_{I,m}(c) = \tau_{\mathrm{lang}}\log p_m(c) - \log\left(\sum_{j\in I}p_j(c)^{\tau_{\mathrm{lang}}}\right)
\]
Using this identity in the expanded expression for $\Phi$ gives
\begin{equation*}
\Phi(\boldsymbol{a}) = \langle \boldsymbol{a},\boldsymbol{\zeta}_{b,I}\rangle
+ H(\boldsymbol{a}) +\tau_{\mathrm{lang}}\langle  \boldsymbol{a},\log\mathbf{p}_I(c)\rangle -\log\left(\sum_{j\in I}p_j(c)^{\tau_{\mathrm{lang}}}\right) \sum_{m\in I}a_m
\end{equation*}
Now, since, $\boldsymbol{a}\in\Delta(I)$, $\sum_{m\in I}a_m=1$. Hence, the final term is constant in $\boldsymbol{a}$ and does not change the optimizer. Removing this constant gives \ref{eq:app_entropy_router}.
\end{proof}

The state-prior tradeoff, in \ref{cor:app_state_prior_tradeoff}, clarifies what is lost when the instruction is allowed to influence routing. The hidden state may prefer one update atom, while the instruction prior may prefer another.  \ref{cor:app_state_prior_tradeoff}  demonstrates that the state-matching score lost by using instruction-guided retrieval, is small, whenever the tempered instruction prior assigns sufficient weight to the atom that the hidden state would choose on its own.

\begin{corollary}[State-prior tradeoff bound]
\label{cor:app_state_prior_tradeoff}
Let $m_b^{\star}\in\arg\max_{m\in I}\zeta_{b,m}$. Under the hypotheses of Theorem~\ref{thm:app_variational_instruction_retrieval},
\begin{equation}
0 \le \max_{m\in I}\zeta_{b,m} - \langle \boldsymbol{\alpha}_{b,I}(c),\boldsymbol{\zeta}_{b,I}\rangle \le \log\frac{1}{\pi^{(\tau)}_{I,m_b^{\star}}(c)}
\label{eq:app_state_prior_tradeoff}
\end{equation}
\end{corollary}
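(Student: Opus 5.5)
The lower bound needs no argument beyond the definition of the routing distribution. Since $\boldsymbol{\alpha}_{b,I}(c)\in\Delta(I)$, the quantity $\langle \boldsymbol{\alpha}_{b,I}(c),\boldsymbol{\zeta}_{b,I}\rangle$ is a convex combination of the entries $\zeta_{b,m}$, $m\in I$. It is therefore at most $\max_{m\in I}\zeta_{b,m}$, which gives the left inequality in \eqref{eq:app_state_prior_tradeoff}.

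For the upper bound we will use the variational characterization of Theorem~\ref{thm:app_variational_instruction_retrieval}. Write
\[
\Phi(\boldsymbol{a})=\langle \boldsymbol{a},\boldsymbol{\zeta}_{b,I}\rangle-\mathrm{KL}\bigl(\boldsymbol{a}\,\|\,\boldsymbol{\pi}^{(\tau)}_I(c)\bigr).
\]
Since $\boldsymbol{\alpha}_{b,I}(c)$ maximizes $\Phi$ over $\Delta(I)$, we have $\Phi(\boldsymbol{\alpha}_{b,I}(c))\ge \Phi(\boldsymbol{a})$ for every competitor $\boldsymbol{a}$. We compare against the vertex $\boldsymbol{a}=\mathbf{e}_{m_b^\star}$, the point mass on the state-preferred atom. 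At this vertex $\langle \mathbf{e}_{m_b^\star},\boldsymbol{\zeta}_{b,I}\rangle=\max_{m\in I}\zeta_{b,m}$, and
\[
\mathrm{KL}\bigl(\mathbf{e}_{m_b^\star}\,\|\,\boldsymbol{\pi}^{(\tau)}_I\bigr)=\log\bigl(1/\pi^{(\tau)}_{I,m_b^\star}(c)\bigr).
\]
This KL value is finite because $p_m(c)>0$ (it is a softmax), so every tempered prior weight is strictly positive. Optimality then yields
\[
\langle \boldsymbol{\alpha}_{b,I},\boldsymbol{\zeta}_{b,I}\rangle-\mathrm{KL}\bigl(\boldsymbol{\alpha}_{b,I}\,\|\,\boldsymbol{\pi}^{(\tau)}_I\bigr)\ \ge\ \max_{m\in I}\zeta_{b,m}-\log\frac{1}{\pi^{(\tau)}_{I,m_b^\star}(c)}.
\]

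Since $\mathrm{KL}\bigl(\boldsymbol{\alpha}_{b,I}\,\|\,\boldsymbol{\pi}^{(\tau)}_I\bigr)\ge 0$, we may drop that term from the left-hand side and still have an inequality. Rearranging gives
\[
\max_{m\in I}\zeta_{b,m}-\langle\boldsymbol{\alpha}_{b,I},\boldsymbol{\zeta}_{b,I}\rangle\le \log\bigl(1/\pi^{(\tau)}_{I,m_b^\star}(c)\bigr).
\]
This is in fact slightly stronger than the claim, since the KL term could be subtracted from the left-hand side as well.

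The only point needing care is checking that the vertex $\mathbf{e}_{m_b^\star}$ is an admissible competitor. It lies on the boundary of the simplex, so we use the convention $0\log 0=0$; with that convention the maximization in Theorem~\ref{thm:app_variational_instruction_retrieval} ranges over the closed simplex and $\Phi(\mathbf{e}_{m_b^\star})$ is well defined and finite. No other step presents an obstacle.
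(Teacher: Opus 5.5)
Your proof is correct and matches the paper's argument step for step: the lower bound comes from $\boldsymbol{\alpha}_{b,I}(c)$ being a probability vector, and the upper bound comes from $\Phi(\boldsymbol{\alpha}_{b,I}(c))\ge\Phi(\mathbf{e}_{m_b^\star})$, the vertex identity $\mathrm{KL}(\mathbf{e}_{m_b^\star}\,\|\,\boldsymbol{\pi}^{(\tau)}_I(c))=\log(1/\pi^{(\tau)}_{I,m_b^\star}(c))$, and dropping the nonnegative KL term. Your side remark about a stronger bound is right in substance but loosely worded: keeping the KL term gives $\max_{m\in I}\zeta_{b,m}-\langle\boldsymbol{\alpha}_{b,I}(c),\boldsymbol{\zeta}_{b,I}\rangle\le\log(1/\pi^{(\tau)}_{I,m_b^\star}(c))-\mathrm{KL}(\boldsymbol{\alpha}_{b,I}(c)\,\|\,\boldsymbol{\pi}^{(\tau)}_I(c))$, so the KL term is subtracted from the right-hand side (equivalently, added to the left).
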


\begin{proof}
The left inequality follows because $\boldsymbol{\alpha}_{b,I}(c)$ is a probability vector. Indeed,
\begin{align*}
&\langle \boldsymbol{\alpha}_{b,I}(c),\boldsymbol{\zeta}_{b,I}\rangle = \sum_{m\in I}\alpha_{b,m}(c)\zeta_{b,m} \le \sum_{m\in I}\alpha_{b,m}(c)\max_{j\in I}\zeta_{b,j} = \max_{j\in I}\zeta_{b,j} \\ &\implies\max_{m\in I}\zeta_{b,m} - \langle \boldsymbol{\alpha}_{b,I}(c),\boldsymbol{\zeta}_{b,I}\rangle
\ge 0
\end{align*}
For the upper bound, define:
\[
\Phi(\boldsymbol{a}) = \langle \boldsymbol{a},\boldsymbol{\zeta}_{b,I}\rangle -\text{KL}\!\left(\boldsymbol{a}\,\middle\|\,\boldsymbol{\pi}^{(\tau)}_I(c)\right)
\]
By Theorem~\ref{thm:app_variational_instruction_retrieval}, $\boldsymbol{\alpha}_{b,I}(c)$ maximizes $\Phi$ over $\Delta(I)$. Let $\mathbf{e}_{m_b^{\star}}$ denote the simplex vertex that places all mass on $m_b^{\star}$. Then,
\[ \Phi(\boldsymbol{\alpha}_{b,I}(c)) \ge \Phi(\mathbf{e}_{m_b^{\star}})
\]
Expanding both sides gives
\[ \langle \boldsymbol{\alpha}_{b,I}(c),\boldsymbol{\zeta}_{b,I}\rangle - \text{KL}\!\left(\boldsymbol{\alpha}_{b,I}(c)\,\middle\|\,\boldsymbol{\pi}^{(\tau)}_I(c)\right) \ge \zeta_{b,m_b^{\star}} - \text{KL}\!\left(\mathbf{e}_{m_b^{\star}}\,\middle\|\,\boldsymbol{\pi}^{(\tau)}_I(c)\right)
\]
The KL divergence of a vertex against $\boldsymbol{\pi}^{(\tau)}_I(c)$ is
\begin{align} \label{eq:15}
\text{KL}\!\left(\mathbf{e}_{m_b^{\star}}\,\middle\|\,\boldsymbol{\pi}^{(\tau)}_I(c)\right) &= \log\frac{1}{\pi^{(\tau)}_{I,m_b^{\star}}(c)}\\
\implies \langle \boldsymbol{\alpha}_{b,I}(c),\boldsymbol{\zeta}_{b,I}\rangle -
\text{KL}\!\left(\boldsymbol{\alpha}_{b,I}(c)\,\middle\|\,\boldsymbol{\pi}^{(\tau)}_I(c)\right)
&\ge \zeta_{b,m_b^{\star}} - \log\frac{1}{\pi^{(\tau)}_{I,m_b^{\star}}(c)}
\end{align}
As $\text{KL}(.\|,.) \ge 0$, 
\begin{align} \label{eq:16}
\langle \boldsymbol{\alpha}_{b,I}(c),\boldsymbol{\zeta}_{b,I}\rangle \ge \langle \boldsymbol{\alpha}_{b,I}(c),\boldsymbol{\zeta}_{b,I}\rangle - \text{KL}\!\left(\boldsymbol{\alpha}_{b,I}(c)\,\middle\|\,\boldsymbol{\pi}^{(\tau)}_I(c)\right)
\end{align}
From \ref{eq:15} and \ref{eq:16},
\begin{equation} \langle \boldsymbol{\alpha}_{b,I}(c),\boldsymbol{\zeta}_{b,I}\rangle \ge \zeta_{b,m_b^{\star}} - \log\frac{1}{\pi^{(\tau)}_{I,m_b^{\star}}(c)}
\end{equation}
Rearranging gives us the upper bound in \ref{eq:app_state_prior_tradeoff}.
\end{proof}
Next, the Proposition \ref{prop:app_limiting_language_regimes} clarifies the importance of the instruction strength parameter $\tau_{\mathrm{lang}}$. When $\tau_{\mathrm{lang}} = 0$, the routing distribution is determined entirely by the state-dependent logits, and hence, the method reduces to the queryable update-memory mechanism without the instruction guidance. However, when $\tau_{\mathrm{lang}} \rightarrow \infty$, the language prior dominates the routing distribution; thus, the selected atoms are mainly those favored by the external task description. We observe that $\tau_{\mathrm{lang}}$ demonstrates an explicit interpolation between these directions, and that the instruction signal is a controlled regularizer of the existing router.

\begin{proposition}[Limiting language regimes]
\label{prop:app_limiting_language_regimes}
The instruction-regularized queryable update has the following limiting cases.
\begin{enumerate}[label=(\alph*),leftmargin=2.2em]
    \item If $\lambda_{\mathrm{ctx}}=0$ and $\tau_{\mathrm{lang}}=0$, then the router reduces to the purely state-dependent queryable router.
    \item If $g_{\ell}=0$ for every adapted layer, then the realized update reduces to the static LoRA update $\Delta\mathbf{W}^{\ell}=(\alpha_{\mathrm{L}}/r)\mathbf{B}_{\ell}\mathbf{A}_{\ell}$.
    \item Fix an active set $I$. Suppose that $\rho_m(c)$ has a unique maximizer $m_I^{\dagger}$ on $I$. If the state logits $\{\zeta_{b,m}\}_{m\in I}$ are finite and $\tau_{\mathrm{lang}}\to\infty$, then: 
    \begin{align}
    \alpha_{b,m_I^{\dagger}}(c)&\to 1 \\
    \alpha_{b,m}(c)&\to 0\quad \forall \ m\neq m_I^{\dagger} \\
    \mathbf{S}_b(c)&\to \mathbf{C}_{m_I^{\dagger}}
    \end{align}
    \item If the state logits are constant on $I$, then the router is the tempered instruction prior $\boldsymbol{\pi}^{(\tau)}_I(c)$. Specifically, if $\tau_{\mathrm{lang}}=1$, the router is exactly the restricted and renormalized language prior on $I$.
\end{enumerate}
\end{proposition}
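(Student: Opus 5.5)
We treat the four parts separately; each follows from the definitions in \eqref{eq:app_prequery}--\eqref{eq:app_active_router} together with Theorem~\ref{thm:app_variational_instruction_retrieval} and Lemma~\ref{lem:app_equivalent_raw_logit_fusion}.

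For (a), setting $\lambda_{\mathrm{ctx}}=0$ removes $\mathbf{Q}_{\mathrm{ctx}}\mathbf{e}(c)$ from $\mathbf{q}_b^{(0)}$. The depth weights $\beta_{i\to b}$ and the summary $\mathbf{u}^{\mathrm{att}}_{b-1}$ depend on $c$ only through $\mathbf{q}_b^{(0)}$, so they also become instruction-free. Hence $\mathbf{q}_b$ and $\zeta_{b,m}$ depend only on $\mathbf{w}_{\ell_b}$, $\mathbf{s}^{\mathrm{entry}}_{\ell_b}$ and the $\bar{\mathbf{s}}_i$. With $\tau_{\mathrm{lang}}=0$ we get $\widetilde\zeta_{b,m}=\zeta_{b,m}$. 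The active set is chosen by top-$k$ on these logits, so both it and the softmax weights are state-only; this is the purely state-dependent router.

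For (b), substitute $g_\ell=0$ into \eqref{eq:app_layer_update}. This gives $\mathbf{B}_\ell(\mathbf{I}_r+0)\mathbf{A}_\ell$ regardless of $\mathbf{S}_b(c)$. We should remark that $g_\ell=\sigma(\eta_\ell)\in(0,1)$, so $g_\ell=0$ is reached only in the limit $\eta_\ell\to-\infty$. Since $\|\mathbf{S}_b(c)\|\le R_C$ by Theorem~\ref{thm:norm_controlled_updates}, the update converges to the static LoRA update uniformly, with error at most $\frac{\alpha_{\mathrm L}}{r}R_BR_AR_C\,g_\ell$.

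For (c), Lemma~\ref{lem:app_equivalent_raw_logit_fusion} lets us use the logits $z_{b,m}=\zeta_{b,m}+\tau_{\mathrm{lang}}\rho_m(c)$ on the fixed set $I$. Then for each $m\neq m_I^\dagger$,
\[
\frac{\alpha_{b,m}}{\alpha_{b,m_I^\dagger}}=\exp\bigl(\zeta_{b,m}-\zeta_{b,m_I^\dagger}\bigr)\exp\bigl(-\tau_{\mathrm{lang}}(\rho_{m_I^\dagger}(c)-\rho_m(c))\bigr).
\]
Uniqueness of the maximizer gives a gap $\delta>0$ with $\rho_{m_I^\dagger}(c)-\rho_m(c)\ge\delta$. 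Finite state logits make the first factor bounded, so each ratio is $O(e^{-\tau_{\mathrm{lang}}\delta})\to0$. Summing the finitely many ratios shows $\alpha_{b,m_I^\dagger}\to1$ and $\alpha_{b,m}\to0$ for $m\neq m_I^\dagger$. The convergence $\mathbf{S}_b(c)\to\mathbf{C}_{m_I^\dagger}$ follows from
\[
\|\mathbf{S}_b(c)-\mathbf{C}_{m_I^\dagger}\|\le\sum_{m\neq m_I^\dagger}\alpha_{b,m}\|\mathbf{C}_m-\mathbf{C}_{m_I^\dagger}\|\le 2R_C\,(1-\alpha_{b,m_I^\dagger}).
\]

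This part is the main subtlety. Under top-$k$ routing, $I$ itself depends on $\tau_{\mathrm{lang}}$, so letting $\tau_{\mathrm{lang}}\to\infty$ could change the active set. The statement avoids this by fixing $I$, which matches the fixed-active-set convention of Assumption~\ref{ass:app_fixed_active_set}, and we should state explicitly that this is how we read it. We should also note a caveat: the hypotheses must not let the $\zeta_{b,m}$ vary with $\tau_{\mathrm{lang}}$ in an unbounded way.

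For (d), if $\zeta_{b,m}\equiv\kappa$ on $I$, the factor $e^{\kappa}$ cancels in \eqref{eq:app_active_router}. This leaves $\alpha_{b,m}=p_m^{\tau_{\mathrm{lang}}}/\sum_{j\in I}p_j^{\tau_{\mathrm{lang}}}=\pi^{(\tau)}_{I,m}(c)$. Equivalently, Theorem~\ref{thm:app_variational_instruction_retrieval} reduces to maximizing $-\mathrm{KL}(\boldsymbol{a}\|\boldsymbol{\pi}^{(\tau)}_I)$, whose unique maximizer is $\boldsymbol{\pi}^{(\tau)}_I$. Taking $\tau_{\mathrm{lang}}=1$ gives $p_m/\sum_{j\in I}p_j$, the restricted and renormalized language prior.
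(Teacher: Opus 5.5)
Your proposal is correct and follows essentially the same route as the paper. Parts (a), (b) and (d) are direct substitution, and part (c) uses the ratio argument via Lemma~\ref{lem:app_equivalent_raw_logit_fusion} together with the unique maximizer of $\rho_m(c)$ on $I$. Your extra remarks are valid refinements that the paper's proof leaves implicit: that $g_\ell=0$ is attained only as $\eta_\ell\to-\infty$ (with the uniform error bound), the explicit gap $\delta$ and rate, the bound $\|\mathbf{S}_b(c)-\mathbf{C}_{m_I^\dagger}\|\le 2R_C(1-\alpha_{b,m_I^\dagger})$, and the caveat that under top-$k$ routing the active set could change with $\tau_{\mathrm{lang}}$.
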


\begin{proof}
For part (a), if $\lambda_{\mathrm{ctx}}=0$, then the pre-query in~\eqref{eq:app_prequery} becomes
\[ \mathbf{q}_b^{(0)} = \mathbf{w}_{\ell_b}+\mathbf{Q}_{\mathrm{cur}}\mathbf{s}^{\mathrm{entry}}_{\ell_b}
\]
Hence, the instruction embedding no longer enters the query. If $\tau_{\mathrm{lang}}=0$, then the fused logit in~\eqref{eq:app_tilde_logits} becomes
\[
\widetilde{\zeta}_{b,m}(c)=\zeta_{b,m}
\]
Thus, both the active-set selection and the softmax weights depend only on the state logits, which is the state-dependent queryable router. For part (b), substitute $g_{\ell}=0$ into~\eqref{eq:app_layer_update}:
\[ \Delta\mathbf{W}^{\ell}(\mathbf{h}_{\ell};b,c) = \frac{\alpha_{\mathrm{L}}}{r}\mathbf{B}_{\ell} (\mathbf{I}_r+0\cdot \mathbf{S}_b(c))\mathbf{A}_{\ell} = \frac{\alpha_{\mathrm{L}}}{r}\mathbf{B}_{\ell}\mathbf{A}_{\ell}
\]
This is the standard static LoRA update. For part (c), by Lemma \ref{lem:app_equivalent_raw_logit_fusion}, on $I$ the router can be written as
\[
\alpha_{b,m}(c) = \frac{\exp(\zeta_{b,m}+\tau_{\mathrm{lang}}\rho_m(c))}{\sum_{j\in I}\exp(\zeta_{b,j}+\tau_{\mathrm{lang}}\rho_j(c))}
\]
Let $m_I^{\dagger}$ be the unique maximizer of $\rho_m(c)$ on $I$. $\forall \ m\neq m_I^{\dagger}$,
\[
\frac{\alpha_{b,m}(c)}{\alpha_{b,m_I^{\dagger}}(c)} = \exp\left( \zeta_{b,m} - \zeta_{b,m_I^{\dagger}} + \tau_{\mathrm{lang}}(\rho_m(c)-\rho_{m_I^{\dagger}}(c)) \right)
\]
Since $m_I^{\dagger}$ is the unique maximizer, $\rho_m(c)-\rho_{m_I^{\dagger}}(c)<0$ $\ \forall \ m\neq m_I^{\dagger}$. The state-logit difference is finite by assumption. Hence, $\left( \zeta_{b,m}-\zeta_{b,m_I^{\dagger}} + \tau_{\mathrm{lang}}(\rho_m(c)-\rho_{m_I^{\dagger}}(c)) \right) \to -\infty$ as $\tau_{\mathrm{lang}}\to\infty$, hence, the ratio tends to zero. Since the routing weights sum to one, $\implies \alpha_{b,m_I^{\dagger}}(c)\to 1$ and $\alpha_{b,m}(c)\to 0$ for all $m\neq m_I^{\dagger}$. Substituting into~\eqref{eq:app_routed_operator} gives $\mathbf{S}_b(c)\to\mathbf{C}_{m_I^{\dagger}}$. For part (d), suppose $\zeta_{b,m}=\gamma$ $\ \forall \ m\in I$. Then,
\[
\alpha_{b,m}(c) = \frac{\exp(\gamma)p_m(c)^{\tau_{\mathrm{lang}}}}{\sum_{j\in I}\exp(\gamma)p_j(c)^{\tau_{\mathrm{lang}}}} = \frac{p_m(c)^{\tau_{\mathrm{lang}}}}{\sum_{j\in I}p_j(c)^{\tau_{\mathrm{lang}}}} =\pi^{(\tau)}_{I,m}(c)
\]
If $\tau_{\mathrm{lang}}=1$, this is the required restricted language prior renormalized on $I$.
\end{proof}

We note that, even when $\tau_{\mathrm{lang}} \rightarrow \infty$, the operator converges to a preexisting shared atom $C_{m_I^{\dagger}(\boldsymbol{c})}$ as opposed to an arbitrary matrix generated from the text. This is why we believe that our approach is different from \cite{TextToLoRA2025} as it remains a retrieval-and-composition mechanism over a bounded memory bank as compared to a direct text-to-weight generator in \cite{TextToLoRA2025}. 

The depth-summary bounds in \ref{prop:app_convex_hull_bound}, \ref{cor:app_update_norm_bound}, \ref{prop:app_depth_summary_bounded}, \ref{cor:app_query_norm_bound} prove that the attention-based summary of previous blocks stays bounded and stable. This stability ensures that information from earlier layers does not grow uncontrollably during retrieval. As a result, the router reliably uses the network's computational history to make decisions while maintaining a consistent representation space. \ref{prop:app_convex_hull_bound} focuses specifically on the bounds for the operator with regard to the shared atom bank.

\begin{proposition} \label{prop:app_convex_hull_bound}
For every block $b$ and instruction $\boldsymbol{c}$, we have,
\begin{align}
S_b(\boldsymbol{c}) &= \sum_{m=1}^{M}\alpha_{b,m}(\boldsymbol{c}) C_m \ \in \operatorname{conv}\{C_1, C_2, C_3, \dots,C_M\} \\
\|S_b(\boldsymbol{c})\| &\le \sum_{m=1}^{M} \alpha_{b,m}(\boldsymbol{c})\|C_m\| \le \max_{1 \le m \le M}\|C_m\| \le R_C
\end{align}
where, $\operatorname{conv}(.)$ is the convex hull of atoms. A similar bound also holds true for the top-$k$ router, as its weights also lie in the simplex.
\end{proposition}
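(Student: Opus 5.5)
The plan is to establish the two claims of Proposition~\ref{prop:app_convex_hull_bound} separately: first that the routing weights lie in the simplex, and then the norm bound via the triangle inequality. Everything rests on Assumption~\ref{ass:app_bounded_atoms_factors}.

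\textbf{Weights lie in the simplex.} I would verify that $\boldsymbol{\alpha}_b(c)\in\Delta_M$ in both routing modes.

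For the dense router, \eqref{eq:app_active_router} with $I=[M]$ gives $\alpha_{b,m}(c)=\exp(\widetilde{\zeta}_{b,m}(c))/\sum_{j}\exp(\widetilde{\zeta}_{b,j}(c))$. Each weight is strictly positive, and the weights sum to one.

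For the top-$k$ router, take the active set $I=I_b(c)$. The weights are the softmax over $I$ and are set to zero off $I$, so they are nonnegative. Their total is still $\sum_{m\in I}\alpha_{b,m}(c)=1$.

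In either case, \eqref{eq:app_routed_operator} expresses $\mathbf{S}_b(c)$ as a convex combination of the atoms $\mathbf{C}_1,\dots,\mathbf{C}_M$. By definition, it therefore belongs to $\operatorname{conv}\{\mathbf{C}_1,\dots,\mathbf{C}_M\}$. The fact that some coefficients vanish in the sparse case causes no difficulty.

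\textbf{Norm bound.} I would apply the triangle inequality and absolute homogeneity of the operator norm, using $\alpha_{b,m}(c)\ge 0$:
\[
\|\mathbf{S}_b(c)\|\le \sum_{m=1}^{M}\alpha_{b,m}(c)\|\mathbf{C}_m\|.
\]
Next, I would bound each $\|\mathbf{C}_m\|$ by $\max_{j}\|\mathbf{C}_j\|$. Since the weights sum to one, this gives the middle inequality. Finally, Assumption~\ref{ass:app_bounded_atoms_factors} gives $\max_j\|\mathbf{C}_j\|\le R_C$.

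I do not expect any real obstacle. The only point needing care is the top-$k$ case: the zero weights off the active set must be counted so that the sum over all $M$ atoms is still a valid convex combination. The argument uses only convexity of the norm, so it holds for any norm, not just the operator norm.
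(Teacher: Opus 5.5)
Your proposal is correct and matches the paper's proof step for step: nonnegative routing weights summing to one give membership in $\operatorname{conv}\{\mathbf{C}_1,\dots,\mathbf{C}_M\}$, and the triangle inequality, the bound by $\max_m\|\mathbf{C}_m\|$, and Assumption~\ref{ass:app_bounded_atoms_factors} give the norm bound. Your separate check of the dense and top-$k$ cases (with zero weights off $I_b(c)$) only spells out what the paper takes ``from construction.''
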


\begin{proof}
From construction,  $\alpha_{b,m}(c) \ge 0$ and $\sum_{m=1}^{M}\alpha_{b,m}(\boldsymbol{c}) = 1$. Hence, $S_b(\boldsymbol{c})$ is a convex combination of atoms $\{C_m\}_{m=1}^{M}$, which demonstrates $S_b(c) \in \operatorname{conv}\{C_1, C_2, C_3, \dots,C_M\}$. Additionally,
\begin{multline}
\|S_b(c)\| = \Bigl\|\sum_{m=1}^{M}\alpha_{b,m}(c) C_m\Bigr\| \le \sum_{m=1}^{M}\alpha_{b,m}(c)\|C_m\| \\ \le \max_{1\le  m\le M}\| C_m\|\sum_{m=1}^{M}\alpha_{b,m}(c) = \max_{1\le  m \le M} \|C_m\|
\end{multline}
Using assumption \ref{ass:app_bounded_atoms_factors},  $\|S_b(\boldsymbol{c})\| \leq R_C$. 
\end{proof}

\begin{corollary}[Layerwise norm-to-norm bound]
\label{cor:app_update_norm_bound}
For any adapted layer $\ell\in\mathcal{B}_b$,
\begin{equation}
\|\Delta\mathbf{W}^{\ell}(\mathbf{h}_{\ell};b,c)\| \le \frac{\alpha_{\mathrm{L}}}{r}\,\|\mathbf{B}_{\ell}\|\bigl(1+|g_{\ell}|R_C\bigr)\|\mathbf{A}_{\ell}\|
\label{eq:app_update_matrix_norm_bound}
\end{equation}
Consequently, for any vector $\mathbf{x}$,
\begin{equation}
\|\Delta\mathbf{W}^{\ell}(\mathbf{h}_{\ell};b,c)\mathbf{x}\| \le \frac{\alpha_{\mathrm{L}}}{r}\,\|\mathbf{B}_{\ell}\|\bigl(1+|g_{\ell}|R_C\bigr)\|\mathbf{A}_{\ell}\|\,\|\mathbf{x}\|
\label{eq:app_update_vector_norm_bound}
\end{equation}
Under Assumption~\ref{ass:app_bounded_atoms_factors},
\begin{equation}
\|\Delta\mathbf{W}^{\ell}(\mathbf{h}_{\ell};b,c)\|
\le
\frac{\alpha_{\mathrm{L}}}{r}\,R_B(1+R_C)R_A
\label{eq:app_update_uniform_norm_bound}
\end{equation}
as $g_{\ell}\in(0,1)$.
\end{corollary}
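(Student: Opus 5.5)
The plan is to treat the layer update as a product of three matrices and apply submultiplicativity of the operator norm, with Proposition~\ref{prop:app_convex_hull_bound} controlling the middle factor. From the definition in~\eqref{eq:app_layer_update},
\[
\Delta\mathbf{W}^{\ell}(\mathbf{h}_{\ell};b,c)=\frac{\alpha_{\mathrm{L}}}{r}\,\mathbf{B}_{\ell}\bigl(\mathbf{I}_r+g_{\ell}\mathbf{S}_b(c)\bigr)\mathbf{A}_{\ell}.
\]
Submultiplicativity gives
\[
\|\Delta\mathbf{W}^{\ell}\|\le \frac{\alpha_{\mathrm{L}}}{r}\|\mathbf{B}_{\ell}\|\,\|\mathbf{I}_r+g_{\ell}\mathbf{S}_b(c)\|\,\|\mathbf{A}_{\ell}\|,
\]
where the positive scalar $\alpha_{\mathrm{L}}/r$ is pulled out by homogeneity.

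For the middle factor I will use the triangle inequality and homogeneity again:
\[
\|\mathbf{I}_r+g_{\ell}\mathbf{S}_b(c)\|\le \|\mathbf{I}_r\|+|g_{\ell}|\,\|\mathbf{S}_b(c)\| = 1+|g_\ell|\,\|\mathbf{S}_b(c)\|.
\]
Here $\|\mathbf{I}_r\|=1$ in the operator norm. Proposition~\ref{prop:app_convex_hull_bound} supplies $\|\mathbf{S}_b(c)\|\le R_C$, and that proposition covers the top-$k$ router too, since its weights also lie in the simplex. Substituting yields~\eqref{eq:app_update_matrix_norm_bound}.

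The vector bound~\eqref{eq:app_update_vector_norm_bound} then follows directly from the defining inequality of the operator norm, $\|\mathbf{M}\mathbf{x}\|\le\|\mathbf{M}\|\,\|\mathbf{x}\|$.

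For the uniform bound~\eqref{eq:app_update_uniform_norm_bound}, I will use $g_\ell=\sigma(\eta_\ell)\in(0,1)$, which gives $|g_\ell|R_C\le R_C$. I will then insert $\|\mathbf{A}_\ell\|\le R_A$ and $\|\mathbf{B}_\ell\|\le R_B$ from Assumption~\ref{ass:app_bounded_atoms_factors}, noting that all factors are nonnegative so the bounds can be multiplied.

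None of these steps is genuinely hard. The only point requiring care is confirming that the norm is the operator norm, which is what makes $\|\mathbf{I}_r\|=1$ and submultiplicativity valid; with the Frobenius norm the identity would instead contribute $\sqrt r$. A further check is that $\mathbf{h}_\ell$ affects $\Delta\mathbf{W}^\ell$ only through $\mathbf{S}_b(c)$, so the bound is uniform in the input.
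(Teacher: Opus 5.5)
Your proposal is correct and follows the paper's proof essentially step for step. Both arguments use submultiplicativity of the operator norm, then the triangle inequality with $\|\mathbf{I}_r\|=1$ and Proposition~\ref{prop:app_convex_hull_bound} for the middle factor. The vector bound comes from $\|\mathbf{M}\mathbf{x}\|\le\|\mathbf{M}\|\,\|\mathbf{x}\|$, and the uniform bound from $g_\ell\in(0,1)$ together with Assumption~\ref{ass:app_bounded_atoms_factors}.
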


\begin{proof}
From~\eqref{eq:app_layer_update},
\[
\Delta\mathbf{W}^{\ell} = \frac{\alpha_{\mathrm{L}}}{r}\mathbf{B}_{\ell} \bigl(\mathbf{I}_r+g_{\ell}\mathbf{S}_b(c)\bigr)\mathbf{A}_{\ell}
\]
Using submultiplicativity of the operator norm,
\begin{align} \label{eq:18}
\|\Delta\mathbf{W}^{\ell}\| &\le \frac{\alpha_{\mathrm{L}}}{r}\, \|\mathbf{B}_{\ell}\|\, \|\mathbf{I}_r+g_{\ell}\mathbf{S}_b(c)\|\, \|\mathbf{A}_{\ell}\|
\end{align}
From triangle inequality,
\[
\|\mathbf{I}_r+g_{\ell}\mathbf{S}_b(c)\| \le \|\mathbf{I}_r\|+|g_{\ell}|\|\mathbf{S}_b(c)\|
\]
$\|\mathbf{I}_r \|\ = 1$, and Proposition~\ref{prop:app_convex_hull_bound} gives $\|\mathbf{S}_b(c)\|\le R_C$. Hence,
\begin{equation} \label{eq:17}
\|\mathbf{I}_r+g_{\ell}\mathbf{S}_b(c)\|
\le
1+|g_{\ell}|R_C
\end{equation}
Substituting \ref{eq:17} in \ref{eq:18} proves~\eqref{eq:app_update_matrix_norm_bound}.  \ref{eq:app_update_vector_norm_bound} follows from the definition of the subordinate operator norm:
\[ \|\Delta\mathbf{W}^{\ell}\mathbf{x}\| \le \|\Delta\mathbf{W}^{\ell}\|\,\|\mathbf{x}\|
\]
Finally, Assumption~\ref{ass:app_bounded_atoms_factors} and $0<g_{\ell}<1$ yield~\eqref{eq:app_update_uniform_norm_bound}.
\end{proof}

\begin{proposition}[Bounded attention-style depth summary]
\label{prop:app_depth_summary_bounded}
For every block $b\ge 2$,
\begin{equation}
\|\mathbf{u}^{\mathrm{att}}_{b-1}\| \le \max_{1\le i\le b-1}\|\bar{\mathbf{s}}_i\|
\label{eq:app_depth_summary_bound}
\end{equation}
Under the Assumption~\ref{ass:app_bounded_states_instructions}, $\|\mathbf{u}^{\mathrm{att}}_{b-1}\|\le R_s$. For $b=1$, the convention $\mathbf{u}^{\mathrm{att}}_0=\mathbf{0}$ gives the same bound.
\end{proposition}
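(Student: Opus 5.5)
The bound follows from the fact that the depth summary is a convex combination of the completed block summaries, exactly as in the argument for Proposition~\ref{prop:app_convex_hull_bound}, with vectors in place of atoms. First, for $b\ge 2$, I will record from~\eqref{eq:app_depth_summary} that the weights $\beta_{i\to b}$ are softmax outputs of the finite logits $\xi_{i\to b}$. Hence $\beta_{i\to b}>0$ for every $i$ and $\sum_{i=1}^{b-1}\beta_{i\to b}=1$. Finiteness of the logits is guaranteed by Assumption~\ref{ass:app_rms_region}, since RMS normalization is well defined. It also holds more simply because $\mathrm{RMS}_\varepsilon\ge\sqrt{\varepsilon}>0$ when $\varepsilon>0$. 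The only property needed is that the weights lie in the simplex, and that holds for any real logits.

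Second, I will apply the triangle inequality and absolute homogeneity of the vector norm:
\begin{equation*}
\|\mathbf{u}^{\mathrm{att}}_{b-1}\| \le \sum_{i=1}^{b-1}\beta_{i\to b}\|\bar{\mathbf{s}}_i\| \le \Bigl(\max_{1\le i\le b-1}\|\bar{\mathbf{s}}_i\|\Bigr)\sum_{i=1}^{b-1}\beta_{i\to b} = \max_{1\le i\le b-1}\|\bar{\mathbf{s}}_i\|.
\end{equation*}
This gives~\eqref{eq:app_depth_summary_bound}. Under Assumption~\ref{ass:app_bounded_states_instructions}, each completed block satisfies $\|\bar{\mathbf{s}}_i\|\le R_s$, so the maximum is at most $R_s$.

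Third, for $b=1$ the convention $\mathbf{u}^{\mathrm{att}}_0=\mathbf{0}$ gives $\|\mathbf{u}^{\mathrm{att}}_0\|=0\le R_s$. Here the maximum over an empty index set is read as $0$, or the claim is simply taken to be the bound by $R_s$.

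There is no real obstacle in this argument. The only points to state carefully are that the softmax weights are nonnegative and sum to one, and how the degenerate case $b=1$ is handled.
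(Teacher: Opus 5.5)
Your proposal is correct and matches the paper's proof: nonnegative softmax weights summing to one, then the triangle inequality and the max bound, then Assumption~\ref{ass:app_bounded_states_instructions} to get $R_s$, and the convention $\mathbf{u}^{\mathrm{att}}_0=\mathbf{0}$ for $b=1$. Your detour on finiteness of the logits is harmless but unnecessary, since, as you note yourself, any real logits already give weights in the simplex.
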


\begin{proof}
For $b\ge 2$, the attention weights satisfy
\begin{align*}
\beta_{i\to b} &\ge 0 \\
\sum_{i=1}^{b-1}\beta_{i\to b}&=1
\end{align*}
Using~\eqref{eq:app_depth_summary},
\begin{multline*}
\|\mathbf{u}^{\mathrm{att}}_{b-1}\| = \left\|\sum_{i=1}^{b-1}\beta_{i\to b}\bar{\mathbf{s}}_i\right\| \le \sum_{i=1}^{b-1}\beta_{i\to b}\|\bar{\mathbf{s}}_i\| \le \sum_{i=1}^{b-1}\beta_{i\to b}\max_{1\le j\le b-1}\|\bar{\mathbf{s}}_j\|\\ =\max_{1\le j\le b-1}\|\bar{\mathbf{s}}_j\|\sum_{i=1}^{b-1}\beta_{i\to b} =\max_{1\le j\le b-1}\|\bar{\mathbf{s}}_j\|
\end{multline*}
$\|\mathbf{u}^{\mathrm{att}}_{b-1}\|\le R_s$ follows from Assumption~\ref{ass:app_bounded_states_instructions}. If $b=1$, then $\mathbf{u}^{\mathrm{att}}_0=\mathbf{0}$, hence, $\|\mathbf{u}^{\mathrm{att}}_0\|=0\le R_s$.
\end{proof}

\begin{corollary}[Query norm bound]
\label{cor:app_query_norm_bound}
Assume $\|\mathbf{w}_{\ell_b}\|\le R_w$. Then
\begin{equation}
\|\mathbf{q}_b\| \le R_w + \|\mathbf{Q}_{\mathrm{cur}}\|R_s+ \lambda_{\mathrm{ctx}}\|\mathbf{Q}_{\mathrm{ctx}}\|R_e+ \|\mathbf{Q}_{\mathrm{dep}}\|R_s
\label{eq:app_query_norm_bound}
\end{equation}
\end{corollary}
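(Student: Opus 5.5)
The plan is to expand the final query using its definition, apply the triangle inequality to each of the four terms, and then bound each term using the assumptions and propositions already stated. Combining \eqref{eq:app_final_query} with \eqref{eq:app_prequery} gives
\[
\mathbf{q}_b = \mathbf{w}_{\ell_b} + \mathbf{Q}_{\mathrm{cur}}\mathbf{s}^{\mathrm{entry}}_{\ell_b} + \lambda_{\mathrm{ctx}}\mathbf{Q}_{\mathrm{ctx}}\mathbf{e}(c) + \mathbf{Q}_{\mathrm{dep}}\mathbf{u}^{\mathrm{att}}_{b-1}.
\]
The triangle inequality then bounds $\|\mathbf{q}_b\|$ by the sum of the four term norms. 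For the third term I will assume $\lambda_{\mathrm{ctx}}\ge 0$, as the statement implicitly does. Otherwise the coefficient should be read as $|\lambda_{\mathrm{ctx}}|$.

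Each term is bounded in turn.

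\textbf{The layer prior.} The first term satisfies $\|\mathbf{w}_{\ell_b}\|\le R_w$ by the hypothesis of the corollary.

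\textbf{The two middle terms.} These are handled by the subordinate operator-norm inequality $\|\mathbf{M}\mathbf{x}\|\le\|\mathbf{M}\|\|\mathbf{x}\|$ together with Assumption~\ref{ass:app_bounded_states_instructions}. This gives $\|\mathbf{Q}_{\mathrm{cur}}\mathbf{s}^{\mathrm{entry}}_{\ell_b}\|\le\|\mathbf{Q}_{\mathrm{cur}}\|R_s$ and $\lambda_{\mathrm{ctx}}\|\mathbf{Q}_{\mathrm{ctx}}\mathbf{e}(c)\|\le\lambda_{\mathrm{ctx}}\|\mathbf{Q}_{\mathrm{ctx}}\|R_e$.

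\textbf{The depth term.} The same operator-norm inequality gives $\|\mathbf{Q}_{\mathrm{dep}}\mathbf{u}^{\mathrm{att}}_{b-1}\|\le\|\mathbf{Q}_{\mathrm{dep}}\|\,\|\mathbf{u}^{\mathrm{att}}_{b-1}\|$. Proposition~\ref{prop:app_depth_summary_bounded} then gives $\|\mathbf{u}^{\mathrm{att}}_{b-1}\|\le R_s$. That proposition covers $b=1$ through the convention $\mathbf{u}^{\mathrm{att}}_0=\mathbf{0}$, so no separate case split is needed.

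Summing the four bounds yields \eqref{eq:app_query_norm_bound}. There is no substantive obstacle. The only points needing care are the sign of $\lambda_{\mathrm{ctx}}$ and checking that the $b=1$ case is genuinely covered by the stated convention, both addressed above.
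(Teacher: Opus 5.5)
Your proposal is correct and matches the paper's proof. Both expand $\mathbf{q}_b$ into its four terms, apply the triangle inequality and the operator-norm bound, and then invoke the hypothesis $\|\mathbf{w}_{\ell_b}\|\le R_w$, Assumption~\ref{ass:app_bounded_states_instructions}, and Proposition~\ref{prop:app_depth_summary_bounded}; your remark that $\lambda_{\mathrm{ctx}}\ge 0$ is needed (otherwise read $|\lambda_{\mathrm{ctx}}|$) is a valid refinement the paper leaves implicit.
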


\begin{proof}
By~\eqref{eq:app_prequery} and~\eqref{eq:app_final_query},
\[
\mathbf{q}_b = \mathbf{w}_{\ell_b} + \mathbf{Q}_{\mathrm{cur}}\mathbf{s}^{\mathrm{entry}}_{\ell_b} + \lambda_{\mathrm{ctx}}\mathbf{Q}_{\mathrm{ctx}}\mathbf{e}(c) + \mathbf{Q}_{\mathrm{dep}}\mathbf{u}^{\mathrm{att}}_{b-1}
\]
From triangle inequality,
\begin{align*}
\|\mathbf{q}_b\| &\le \|\mathbf{w}_{\ell_b}\| + \|\mathbf{Q}_{\mathrm{cur}}\mathbf{s}^{\mathrm{entry}}_{\ell_b}\| + \lambda_{\mathrm{ctx}}\|\mathbf{Q}_{\mathrm{ctx}}\mathbf{e}(c)\| + \|\mathbf{Q}_{\mathrm{dep}}\mathbf{u}^{\mathrm{att}}_{b-1}\|\\ &\le \|\mathbf{w}_{\ell_b}\| + \|\mathbf{Q}_{\mathrm{cur}}\|\,\|\mathbf{s}^{\mathrm{entry}}_{\ell_b}\| + \lambda_{\mathrm{ctx}}\|\mathbf{Q}_{\mathrm{ctx}}\|\,\|\mathbf{e}(c)\| + \|\mathbf{Q}_{\mathrm{dep}}\|\,\|\mathbf{u}^{\mathrm{att}}_{b-1}\|
\end{align*}
Using $\|\mathbf{w}_{\ell_b}\|\le R_w$, Assumption~\ref{ass:app_bounded_states_instructions}, and Proposition~\ref{prop:app_depth_summary_bounded}  gives~\eqref{eq:app_query_norm_bound}.
\end{proof}

The next two lemmas collect standard differential facts used in the stability analysis. RMSNorm follows the root-mean-square normalization map introduced by ~\cite{zhang2019root}; the Jacobian below is obtained by directly differentiating that map. The softmax Jacobian identity is classical and appears directly, for example, in ~\cite{martins2016softmax} and in analyses of the log-sum-exp Hessian by ~\cite{gao2017properties}.

\begin{lemma}[Exact Jacobian of RMS normalization]
\label{lem:app_rmsnorm_jacobian}
Let $r(\mathbf{x})=\text{RMS}_{\varepsilon}(\mathbf{x})=\sqrt{\frac{1}{d}\|\mathbf{x}\|_2^2+\varepsilon}$, and, $\mathbf{f}(\mathbf{x})=\frac{\mathbf{x}}{r(\mathbf{x})}$.
Then
\begin{equation}
\mathbf{J}_{\mathbf{f}}(\mathbf{x}) = \frac{1}{r(\mathbf{x})}\mathbf{I}_d - \frac{1}{d\,r(\mathbf{x})^3}\mathbf{x}\mathbf{x}^{\top}
\label{eq:app_rmsnorm_jacobian}
\end{equation}
If $r(\mathbf{x})\ge \rho_{\min}>0$, then
\begin{equation}
\|\mathbf{J}_{\mathbf{f}}(\mathbf{x})\|_{2\to 2} \le \frac{1}{\rho_{\min}}
\label{eq:app_rmsnorm_jacobian_bound}
\end{equation}
\end{lemma}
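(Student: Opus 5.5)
The plan is to differentiate $\mathbf{f}(\mathbf{x})=\mathbf{x}/r(\mathbf{x})$ directly with the quotient/product rule and then bound the resulting symmetric matrix through its eigenvalues. First compute the gradient of the scalar normalizer. Since $r(\mathbf{x})^2=\frac{1}{d}\|\mathbf{x}\|_2^2+\varepsilon$, differentiating gives $2r\,\nabla r = \frac{2}{d}\mathbf{x}$, i.e. $\nabla r(\mathbf{x})=\mathbf{x}/(d\,r(\mathbf{x}))$. Note that $r>0$ is guaranteed whenever $\varepsilon>0$, or, in the setting of the paper, by the hypothesis $r\ge\rho_{\min}>0$ of Assumption~\ref{ass:app_rms_region}. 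Then, componentwise, $\partial f_i/\partial x_j=\delta_{ij}/r - x_i\,\partial_j r/r^2$. Substituting $\partial_j r=x_j/(d r)$ yields exactly $\mathbf{J}_{\mathbf{f}}=\frac{1}{r}\mathbf{I}_d-\frac{1}{d r^3}\mathbf{x}\mathbf{x}^\top$, which is \eqref{eq:app_rmsnorm_jacobian}.

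For the norm bound \eqref{eq:app_rmsnorm_jacobian_bound}, observe that $\mathbf{J}_{\mathbf{f}}$ is symmetric, so its operator norm equals its largest absolute eigenvalue. The eigenvectors are $\mathbf{x}/\|\mathbf{x}\|$ and its orthogonal complement (if $\mathbf{x}=\mathbf{0}$ the Jacobian is $\frac{1}{r}\mathbf{I}_d$ and the claim is immediate). On the complement the eigenvalue is $1/r$. Along $\mathbf{x}$ it is
\[
\frac{1}{r}-\frac{\|\mathbf{x}\|^2}{d r^3}=\frac{1}{r^3}\Bigl(r^2-\tfrac{1}{d}\|\mathbf{x}\|^2\Bigr)=\frac{\varepsilon}{r^3}.
\]
Since $\frac{1}{d}\|\mathbf{x}\|^2\ge 0$, we have $\varepsilon\le r^2$, so this eigenvalue lies in $[0,1/r]$. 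Hence $\|\mathbf{J}_{\mathbf{f}}\|_{2\to2}=1/r\le 1/\rho_{\min}$.

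The only step needing care is the eigenvalue along $\mathbf{x}$. One must verify that it is nonnegative and no larger than $1/r$, so that it cannot dominate the bound. This follows from rewriting $r^2-\frac1d\|\mathbf{x}\|^2=\varepsilon$. If one wanted to avoid spectral arguments, an alternative is to write $\mathbf{J}_{\mathbf{f}}=\frac1r\bigl(\mathbf{I}-\theta\,\mathbf{P}_{\mathbf{x}}\bigr)$ with $\mathbf{P}_{\mathbf{x}}$ the orthogonal projector onto $\mathbf{x}$ and $\theta=\|\mathbf{x}\|^2/(d r^2)\in[0,1)$. Since $\mathbf{I}-\theta\mathbf{P}_{\mathbf{x}}$ is a contraction, the same bound follows.
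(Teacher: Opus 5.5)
Your proposal is correct and follows essentially the same route as the paper: you compute $\nabla r(\mathbf{x})=\mathbf{x}/(d\,r(\mathbf{x}))$, get the Jacobian by the product/quotient rule, and bound the symmetric Jacobian through its eigenvalues, namely $1/r$ on $\mathbf{x}^{\perp}$ and $\varepsilon/r^{3}\le 1/r$ along $\mathbf{x}$. One cosmetic point: the norm equals $1/r$ exactly only when $d\ge 2$ or $\mathbf{x}=\mathbf{0}$ (for $d=1$ and $\mathbf{x}\neq\mathbf{0}$ it is $\varepsilon/r^{3}$), but the argument only needs the inequality $\le 1/r\le 1/\rho_{\min}$.
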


\begin{proof}
We refer to \cite{stollenwerk2026mathematical} for this proof. Write $\mathbf{f}(\mathbf{x})=r(\mathbf{x})^{-1}\mathbf{x}$. By the product rule for Jacobians,
\begin{equation} \label{eq:20}
\mathbf{J}_{\mathbf{f}}(\mathbf{x}) = r(\mathbf{x})^{-1}\mathbf{I}_d + \mathbf{x}\,\nabla(r(\mathbf{x})^{-1})^{\top}
\end{equation}
We compute $\nabla r(\mathbf{x})$. Since, $r(\mathbf{x})=\biggl(\frac{1}{d}\mathbf{x}^{\top}\mathbf{x}+\varepsilon\biggr)^{1/2}$, we have,
\[
\nabla r(\mathbf{x}) = \frac{1}{2}\biggl(\frac{1}{d}\mathbf{x}^{\top}\mathbf{x}+\varepsilon\biggr)^{-1/2}\frac{2}{d}\mathbf{x} = \frac{\mathbf{x}}{d\,r(\mathbf{x})}
\]
From chain rule,
\begin{equation} \label{eq:19}
\nabla(r(\mathbf{x})^{-1})
=-r(\mathbf{x})^{-2}\nabla r(\mathbf{x})
=-\frac{\mathbf{x}}{d\,r(\mathbf{x})^3}
\end{equation}
From \ref{eq:19} and \ref{eq:20}, we get ~\eqref{eq:app_rmsnorm_jacobian}. For the operator norm bound, note that $\mathbf{x}\mathbf{x}^{\top}$ is symmetric rank one. It has eigenvalue $\|\mathbf{x}\|_2^2$ in the direction of $\mathbf{x}$ and eigenvalue zero on the orthogonal complement. Hence $\mathbf{J}_{\mathbf{f}}(\mathbf{x})$ has eigenvalue $\frac{1}{r(\mathbf{x})}$ on every vector orthogonal to $\mathbf{x}$, and eigenvalue
\begin{align*}
\frac{1}{r(\mathbf{x})}-\frac{\|\mathbf{x}\|_2^2}{d\,r(\mathbf{x})^3}
= \frac{d\,r(\mathbf{x})^2-\|\mathbf{x}\|_2^2}{d\,r(\mathbf{x})^3} =
\frac{d\left(\|\mathbf{x}\|_2^2/d+\varepsilon\right)-\|\mathbf{x}\|_2^2}{d\,r(\mathbf{x})^3} = 
\frac{d\varepsilon}{d\,r(\mathbf{x})^3} =
\frac{\varepsilon}{r(\mathbf{x})^3}
\end{align*}
in the direction of $\mathbf{x}$. Since $r(\mathbf{x})^2=\|\mathbf{x}\|_2^2/d+\varepsilon\ge\varepsilon$, we have $\frac{\varepsilon}{r(\mathbf{x})^3}\le \frac{1}{r(\mathbf{x})}$. Hence, the largest absolute eigenvalue is at most $\frac{1}{r(\mathbf{x})}$. Since the Jacobian is symmetric, its spectral norm is the largest absolute eigenvalue.
\[
\implies \|\mathbf{J}_{\mathbf{f}}(\mathbf{x})\|_{2\to 2} \le \frac{1}{r(\mathbf{x})} \le \frac{1}{\rho_{\min}}
\]
\end{proof}

\begin{lemma}[Softmax Jacobian bound]
\label{lem:app_softmax_jacobian}
Let $\text{softmax}:\mathbb{R}^k\to\Delta_k$ be the softmax map and set $\boldsymbol{\alpha}=\text{softmax}(\mathbf{z})$. Then,
\begin{equation}
\mathbf{J}_{\text{softmax}}(\mathbf{z}) =
\text{diag}(\boldsymbol{\alpha})-\boldsymbol{\alpha}\boldsymbol{\alpha}^{\top}
\label{eq:app_softmax_jacobian}
\end{equation}
Moreover,
\begin{equation}
\|\mathbf{J}_{\text{softmax}}(\mathbf{z})\|_{2\to 2}\le \frac{1}{2}
\label{eq:app_softmax_jacobian_bound}
\end{equation}
\end{lemma}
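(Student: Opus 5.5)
The proof splits into two parts: the Jacobian formula is a direct differentiation, and the norm bound is a spectral argument on a symmetric positive semidefinite matrix. Write $\boldsymbol{\alpha}=\text{softmax}(\mathbf{z})$ with $\alpha_i=e^{z_i}/\sum_j e^{z_j}$.

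\textbf{Jacobian formula.} Differentiate $\alpha_i$ with respect to $z_j$ by the quotient rule. This gives $\partial\alpha_i/\partial z_j=\alpha_i\delta_{ij}-\alpha_i\alpha_j$, which is exactly the entrywise form of $\text{diag}(\boldsymbol{\alpha})-\boldsymbol{\alpha}\boldsymbol{\alpha}^{\top}$ in \eqref{eq:app_softmax_jacobian}.

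\textbf{Positive semidefiniteness.} The matrix $\mathbf{J}=\text{diag}(\boldsymbol{\alpha})-\boldsymbol{\alpha}\boldsymbol{\alpha}^{\top}$ is symmetric. For any $\mathbf{v}$,
\[
\mathbf{v}^{\top}\mathbf{J}\mathbf{v}=\sum_i\alpha_i v_i^2-\Bigl(\sum_i\alpha_i v_i\Bigr)^2=\mathrm{Var}_{\boldsymbol{\alpha}}(v),
\]
the variance of the random variable taking value $v_i$ with probability $\alpha_i$. Hence $\mathbf{J}\succeq 0$. Its spectral norm therefore equals its largest eigenvalue, which is $\sup_{\|\mathbf{v}\|_2=1}\mathrm{Var}_{\boldsymbol{\alpha}}(v)$. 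This reduces the claim to showing that this variance is at most $1/2$.

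\textbf{Variance bound (main step).} A crude range argument (Popoviciu's inequality) gives $\mathrm{Var}\le(\max v-\min v)^2/4$. With $\|\mathbf{v}\|_2=1$ the range is at most $\sqrt{2}$, attained at $v_i=1/\sqrt2$, $v_j=-1/\sqrt2$. That yields exactly $2/4=1/2$, which is the claimed bound.

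A Gershgorin-type check gives the same constant and serves as a cross-check. Row $i$ of $\mathbf{J}$ has diagonal entry $\alpha_i(1-\alpha_i)$ and off-diagonal absolute sum $\alpha_i\sum_{j\ne i}\alpha_j=\alpha_i(1-\alpha_i)$. Every eigenvalue is therefore at most $2\alpha_i(1-\alpha_i)\le 1/2$.

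Either route suffices; I would present the Popoviciu route as the main argument. It is tight, with equality approached when $\boldsymbol{\alpha}$ puts mass $1/2$ on each of two coordinates. The one point needing care is the maximal range of a unit vector. For any $i,j$ we have $(v_i-v_j)^2\le 2(v_i^2+v_j^2)\le 2\|\mathbf{v}\|_2^2=2$, so the range is at most $\sqrt{2}$. This completes \eqref{eq:app_softmax_jacobian_bound}.
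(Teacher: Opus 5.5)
Your proof is correct, and it does more than the paper's. The paper's proof of this lemma is a one-line pointer to the literature, and the surrounding text credits the cited works only with the Jacobian identity. So the paper never argues the constant $1/2$ itself, although that constant is the part used later.

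Your route makes the whole lemma self-contained. The key observation is $\mathbf{v}^{\top}\mathbf{J}\mathbf{v}=\mathrm{Var}_{\boldsymbol{\alpha}}(v)$. This gives positive semidefiniteness, so the spectral norm is the top eigenvalue, and it reduces the bound to Popoviciu's inequality. It also shows the constant is sharp: it is attained for $k=2$ at $z_1=z_2$, and for $k\ge 3$ it is only approached, since softmax outputs are strictly positive.

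Your Gershgorin cross-check gives something extra. It yields the state-dependent bound $\|\mathbf{J}\|_{2\to 2}\le \max_i 2\alpha_i(1-\alpha_i)$, which is much smaller than $1/2$ when the router is nearly one-hot. Fix the quantifier there: each eigenvalue lies in \emph{some} disc $i$, so it is bounded by the maximum over $i$, not by $2\alpha_i(1-\alpha_i)$ for every $i$.

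Finally, the Lipschitz propositions that invoke this lemma need softmax to be globally $\tfrac12$-Lipschitz, not just to have a bounded Jacobian at each point. That follows from your pointwise bound by the mean-value inequality along segments in the convex domain $\mathbb{R}^k$. One sentence saying so would make the lemma deliver exactly what is used downstream.
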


\begin{proof}
Result directly follows from \cite{martins2016softmax}.
\end{proof}

\ref{prop:app_summary_lipschitz} showcases that the attention-style depth summary changes smoothly when the block query or previous block states are perturbed, as long as the active routing structure does not switch abruptly. Small changes in the computation trajectory lead to controlled changes in the retrieved historical context.

\begin{proposition}[Local Lipschitz stability of the attention-style summary]
\label{prop:app_summary_lipschitz}
Fix a block $b\ge 2$ and suppose the completed block means $\bar{\mathbf{s}}_1,\bar{\mathbf{s}}_2, \bar{\mathbf{s}}_3,\ldots,\bar{\mathbf{s}}_{b-1}$ are fixed. On any region satisfying Assumption~\ref{ass:app_rms_region}, the attention-style summary is locally Lipschitz in the instruction embedding. One admissible bound is
\begin{equation}
\left\|\mathbf{u}^{\mathrm{att}}_{b-1}(\mathbf{e})-\mathbf{u}^{\mathrm{att}}_{b-1}(\mathbf{e}')\right\| \le \frac{(b-1)R_s\,\|\mathbf{Q}_{\mathrm{dep},q}\|\lambda_{\mathrm{ctx}}\|\mathbf{Q}_{\mathrm{ctx}}\|}{2T_{\mathrm{dep}}\rho_{\min}}
\,\|\mathbf{e}-\mathbf{e}'\|
\label{eq:app_summary_lipschitz}
\end{equation}
Consequently,
\begin{equation}
\|\mathbf{q}_b(\mathbf{e})-\mathbf{q}_b(\mathbf{e}')\|
\le
L_{q,b}\|\mathbf{e}-\mathbf{e}'\|
\label{eq:app_query_lipschitz}
\end{equation}
where
\begin{equation}
L_{q,b} := \lambda_{\mathrm{ctx}}\|\mathbf{Q}_{\mathrm{ctx}}\| \left( 1 + \frac{(b-1)R_s\,\|\mathbf{Q}_{\mathrm{dep}}\|\|\mathbf{Q}_{\mathrm{dep},q}\|}{2T_{\mathrm{dep}}\rho_{\min}} \right)
\label{eq:app_query_lipschitz_constant}
\end{equation}
\end{proposition}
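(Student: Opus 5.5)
The plan is to treat $\mathbf{e}\mapsto\mathbf{u}^{\mathrm{att}}_{b-1}(\mathbf{e})$ as a composition of maps, bound the Jacobian of each factor, and multiply the bounds. The block means $\bar{\mathbf{s}}_i$ are fixed, so the keys $\widehat{\boldsymbol{\kappa}}_i=\mathrm{RMSNorm}(\mathbf{Q}_{\mathrm{dep},q}\ldots)$ on the key side, i.e.\ $\widehat{\boldsymbol{\kappa}}_i=\mathrm{RMSNorm}(\mathbf{Q}_{\mathrm{dep},k}\bar{\mathbf{s}}_i)$, are constants. The chain is then:

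\[
\mathbf{e}\mapsto \mathbf{q}^{(0)}_b\mapsto \mathbf{y}:=\mathbf{Q}_{\mathrm{dep},q}\mathbf{q}^{(0)}_b\mapsto \widehat{\mathbf{y}}=\mathrm{RMSNorm}(\mathbf{y})\mapsto \boldsymbol{\xi}=(\xi_{i\to b})_{i<b}\mapsto \boldsymbol{\beta}=\mathrm{softmax}(\boldsymbol{\xi})\mapsto \mathbf{u}^{\mathrm{att}}_{b-1}=\textstyle\sum_i\beta_{i\to b}\bar{\mathbf{s}}_i .
\]

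By \eqref{eq:app_prequery}, the first map is affine in $\mathbf{e}$ with Jacobian $\lambda_{\mathrm{ctx}}\mathbf{Q}_{\mathrm{ctx}}$. The second map has Jacobian norm $\|\mathbf{Q}_{\mathrm{dep},q}\|$. On the region of Assumption~\ref{ass:app_rms_region}, Lemma~\ref{lem:app_rmsnorm_jacobian} bounds the RMSNorm Jacobian by $1/\rho_{\min}$. Lemma~\ref{lem:app_softmax_jacobian} bounds the softmax Jacobian by $1/2$.

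The two remaining factors need explicit bounds.

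\emph{Logit map.} Here $\xi_{i\to b}=\langle \widehat{\mathbf{y}},\widehat{\boldsymbol{\kappa}}_i\rangle/(\sqrt{d_k}T_{\mathrm{dep}})$ is linear in $\widehat{\mathbf{y}}$, and row $i$ of its Jacobian is $\widehat{\boldsymbol{\kappa}}_i^{\top}/(\sqrt{d_k}T_{\mathrm{dep}})$. Since $\mathrm{RMS}_\varepsilon(\mathbf{x})\ge \|\mathbf{x}\|/\sqrt{d_k}$, every RMS-normalized vector in $\mathbb{R}^{d_k}$ satisfies $\|\widehat{\boldsymbol{\kappa}}_i\|\le\sqrt{d_k}$. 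Hence each row has norm at most $1/T_{\mathrm{dep}}$, and the whole $(b-1)$-row Jacobian has norm at most $\sqrt{b-1}/T_{\mathrm{dep}}$.

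\emph{Readout map.} The map $\boldsymbol{\beta}\mapsto\mathbf{u}$ is linear. For it I would use
\[
\Bigl\|\sum_i\delta\beta_i\bar{\mathbf{s}}_i\Bigr\|\le R_s\|\delta\boldsymbol{\beta}\|_1\le \sqrt{b-1}\,R_s\|\delta\boldsymbol{\beta}\|_2,
\]
with $\|\bar{\mathbf{s}}_i\|\le R_s$ from Assumption~\ref{ass:app_bounded_states_instructions}.

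Multiplying all the factors gives a Jacobian bound
\[
\lambda_{\mathrm{ctx}}\|\mathbf{Q}_{\mathrm{ctx}}\|\cdot\|\mathbf{Q}_{\mathrm{dep},q}\|\cdot\frac{1}{\rho_{\min}}\cdot\frac{\sqrt{b-1}}{T_{\mathrm{dep}}}\cdot\frac12\cdot\sqrt{b-1}\,R_s,
\]
which is exactly the constant in \eqref{eq:app_summary_lipschitz}. The two $\sqrt{b-1}$ factors combine into the $(b-1)$.

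To pass from a Jacobian bound to the finite-difference inequality, I would apply the mean value inequality along the segment $\mathbf{e}_t=\mathbf{e}'+t(\mathbf{e}-\mathbf{e}')$. Every point of this segment must keep the RMSNorm input inside the region where $\mathrm{RMS}_\varepsilon\ge\rho_{\min}$. This is why the statement is local: I would phrase it on a convex region of embeddings satisfying Assumption~\ref{ass:app_rms_region}.

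For the query bound, I would write
\[
\mathbf{q}_b(\mathbf{e})-\mathbf{q}_b(\mathbf{e}')=\lambda_{\mathrm{ctx}}\mathbf{Q}_{\mathrm{ctx}}(\mathbf{e}-\mathbf{e}')+\mathbf{Q}_{\mathrm{dep}}\bigl(\mathbf{u}^{\mathrm{att}}_{b-1}(\mathbf{e})-\mathbf{u}^{\mathrm{att}}_{b-1}(\mathbf{e}')\bigr),
\]
using \eqref{eq:app_final_query}; the terms $\mathbf{w}_{\ell_b}$ and $\mathbf{Q}_{\mathrm{cur}}\mathbf{s}^{\mathrm{entry}}_{\ell_b}$ cancel because they do not depend on $\mathbf{e}$. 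The triangle inequality together with \eqref{eq:app_summary_lipschitz} then gives $L_{q,b}$ as in \eqref{eq:app_query_lipschitz_constant}.

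The main obstacle I expect is the logit step. The clean factor $1/T_{\mathrm{dep}}$, with no stray $\sqrt{d}/\sqrt{d_k}$, requires the RMSNorm dimension on the key side to equal $d_k$, the dimension appearing in the $\sqrt{d_k}$ scaling. I would state this identification explicitly. If the dimensions differ, the constant picks up a factor $\sqrt{d/d_k}$.

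A secondary concern is avoiding a cruder route that bounds $\|\widehat{\boldsymbol{\kappa}}_i\|$ by $\sqrt{d}$ and loses the cancellation. Another is not double-counting $\sqrt{b-1}$ by mixing $\ell_1$ and $\ell_2$ norms. To match the stated constant, I would keep $\ell_2$ norms through the softmax and convert to $\ell_1$ only at the final readout.
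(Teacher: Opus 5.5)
Your proposal is correct and follows the same decomposition as the paper's proof. It uses the affine pre-query step and the $1/\rho_{\min}$ RMSNorm bound. It bounds each logit by Cauchy--Schwarz with $\|\widehat{\boldsymbol{\kappa}}_i\|\le\sqrt{d_k}$, which gives the factor $\sqrt{b-1}/T_{\mathrm{dep}}$, then applies the $\tfrac12$ softmax bound and the $\ell_1$-to-$\ell_2$ conversion at the readout, and finishes with the triangle inequality for $\mathbf{q}_b$.

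The only difference is that you chain Jacobian norms and integrate along a segment, whereas the paper composes finite-difference Lipschitz bounds. Your version makes explicit the segment-in-region (convexity) requirement that the paper leaves implicit when it asserts RMSNorm is $1/\rho_{\min}$-Lipschitz on the non-degenerate region.
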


\begin{proof}
The only part of $\mathbf{q}_b^{(0)}$ that depends on the instruction embedding is $\lambda_{\mathrm{ctx}}\mathbf{Q}_{\mathrm{ctx}}\mathbf{e}$. Hence,
\begin{equation}
\|\mathbf{q}^{(0)}_b(\mathbf{e})-\mathbf{q}^{(0)}_b(\mathbf{e}')\| \le \lambda_{\mathrm{ctx}}\|\mathbf{Q}_{\mathrm{ctx}}\|\,\|\mathbf{e}-\mathbf{e}'\|
\label{eq:app_prequery_lipschitz}
\end{equation}
For each completed block $i<b$, define $\xi_{i\to b}(\mathbf{e})
=
\frac{\langle \widehat{\mathbf{q}}^{(0)}_b(\mathbf{e}),\widehat{\boldsymbol{\kappa}}_i\rangle}{\sqrt{d_k}T_{\mathrm{dep}}}$. $\widehat{\boldsymbol{\kappa}}_i$ is fixed because $\bar{\mathbf{s}}_i$ is fixed. RMS normalization gives $\|\widehat{\boldsymbol{\kappa}}_i\|_2\le\sqrt{d_k}$. Hence, by Cauchy-Schwarz inequality,
\begin{multline*}
|\xi_{i\to b}(\mathbf{e})-\xi_{i\to b}(\mathbf{e}')| \le \frac{1}{\sqrt{d_k}T_{\mathrm{dep}}} \left|\left\langle \widehat{\mathbf{q}}^{(0)}_b(\mathbf{e})-\widehat{\mathbf{q}}^{(0)}_b(\mathbf{e}'), \widehat{\boldsymbol{\kappa}}_i \right\rangle\right| \\ \le \frac{1}{\sqrt{d_k}T_{\mathrm{dep}}} \left\|\widehat{\mathbf{q}}^{(0)}_b(\mathbf{e})-\widehat{\mathbf{q}}^{(0)}_b(\mathbf{e}')\right\|_2 \|\widehat{\boldsymbol{\kappa}}_i\|_2 \le \frac{1}{T_{\mathrm{dep}}} \left\|\widehat{\mathbf{q}}^{(0)}_b(\mathbf{e})-\widehat{\mathbf{q}}^{(0)}_b(\mathbf{e}')\right\|_2
\end{multline*}
By Lemma~\ref{lem:app_rmsnorm_jacobian}, RMS normalization is $\frac{1}{\rho_{\min}}$-Lipschitz on the non-degenerate region. Hence,
\begin{equation*}
\left\|\widehat{\mathbf{q}}^{(0)}_b(\mathbf{e})-\widehat{\mathbf{q}}^{(0)}_b(\mathbf{e}')\right\|_2 \le \frac{1}{\rho_{\min}} \left\|\mathbf{Q}_{\mathrm{dep},q}\bigl(\mathbf{q}^{(0)}_b(\mathbf{e})-\mathbf{q}^{(0)}_b(\mathbf{e}')\bigr)\right\|_2 \le \frac{\|\mathbf{Q}_{\mathrm{dep},q}\|\lambda_{\mathrm{ctx}}\|\mathbf{Q}_{\mathrm{ctx}}\|}{\rho_{\min}} \|\mathbf{e}-\mathbf{e}'\|
\end{equation*}
where the final inequality uses~\eqref{eq:app_prequery_lipschitz}. Thus each depth logit satisfies the inequality,
\[
|\xi_{i\to b}(\mathbf{e})-\xi_{i\to b}(\mathbf{e}')| \le \frac{\|\mathbf{Q}_{\mathrm{dep},q}\|\lambda_{\mathrm{ctx}}\|\mathbf{Q}_{\mathrm{ctx}}\|}{T_{\mathrm{dep}}\rho_{\min}} \|\mathbf{e}-\mathbf{e}'\|
\]
Since there are $b-1$ depth logits,
\begin{equation}
\|\boldsymbol{\xi}_b(\mathbf{e})-\boldsymbol{\xi}_b(\mathbf{e}')\|_2 \le \sqrt{b-1}\, \frac{\|\mathbf{Q}_{\mathrm{dep},q}\|\lambda_{\mathrm{ctx}}\|\mathbf{Q}_{\mathrm{ctx}}\|}{T_{\mathrm{dep}}\rho_{\min}} \|\mathbf{e}-\mathbf{e}'\|
\label{eq:app_depth_logit_vector_bound}
\end{equation}
The depth weights satisfy $\boldsymbol{\beta}_b(\mathbf{e})=\text{softmax}(\boldsymbol{\xi}_b(\mathbf{e}))$. Lemma~\ref{lem:app_softmax_jacobian} gives
\begin{equation}
\|\boldsymbol{\beta}_b(\mathbf{e})-\boldsymbol{\beta}_b(\mathbf{e}')\|_2 \le \frac{1}{2}\|\boldsymbol{\xi}_b(\mathbf{e})-\boldsymbol{\xi}_b(\mathbf{e}')\|_2
\label{eq:app_depth_weight_bound}
\end{equation}
As,
\[
\mathbf{u}^{\mathrm{att}}_{b-1}(\mathbf{e})-\mathbf{u}^{\mathrm{att}}_{b-1}(\mathbf{e}') = \sum_{i=1}^{b-1}(\beta_{i\to b}(\mathbf{e})-\beta_{i\to b}(\mathbf{e}'))\bar{\mathbf{s}}_i
\]
Hence,
\begin{multline} \label{eq:21}
\left\|\mathbf{u}^{\mathrm{att}}_{b-1}(\mathbf{e})-\mathbf{u}^{\mathrm{att}}_{b-1}(\mathbf{e}')\right\| \le \sum_{i=1}^{b-1}|\beta_{i\to b}(\mathbf{e})-\beta_{i\to b}(\mathbf{e}')|\,\|\bar{\mathbf{s}}_i\| \\ \le R_s\|\boldsymbol{\beta}_b(\mathbf{e})-\boldsymbol{\beta}_b(\mathbf{e}')\|_1 \le R_s\sqrt{b-1}\|\boldsymbol{\beta}_b(\mathbf{e})-\boldsymbol{\beta}_b(\mathbf{e}')\|_2
\end{multline}
Combining ~\eqref{eq:21}, ~\eqref{eq:app_depth_logit_vector_bound} and~\eqref{eq:app_depth_weight_bound} proves~\eqref{eq:app_summary_lipschitz}. Finally,
\[
\mathbf{q}_b(\mathbf{e})-\mathbf{q}_b(\mathbf{e}') = \mathbf{q}^{(0)}_b(\mathbf{e})-\mathbf{q}^{(0)}_b(\mathbf{e}') + \mathbf{Q}_{\mathrm{dep}}\left(\mathbf{u}^{\mathrm{att}}_{b-1}(\mathbf{e})-\mathbf{u}^{\mathrm{att}}_{b-1}(\mathbf{e}')\right)
\]
Using~\eqref{eq:app_prequery_lipschitz}, and ~\eqref{eq:app_summary_lipschitz} yields,
\begin{align*}
\|\mathbf{q}_b(\mathbf{e})-\mathbf{q}_b(\mathbf{e}')\| &\le \lambda_{\mathrm{ctx}}\|\mathbf{Q}_{\mathrm{ctx}}\|\|\mathbf{e}-\mathbf{e}'\| + \|\mathbf{Q}_{\mathrm{dep}}\| \left\|\mathbf{u}^{\mathrm{att}}_{b-1}(\mathbf{e})-\mathbf{u}^{\mathrm{att}}_{b-1}(\mathbf{e}')\right\|\\ &\le \lambda_{\mathrm{ctx}}\|\mathbf{Q}_{\mathrm{ctx}}\| \left( 1 + \frac{(b-1)R_s\,\|\mathbf{Q}_{\mathrm{dep}}\|\|\mathbf{Q}_{\mathrm{dep},q}\|}{2T_{\mathrm{dep}}\rho_{\min}}
\right)
\|\mathbf{e}-\mathbf{e}'\|
\end{align*}
\end{proof}

\ref{prop:app_router_operator_lipschitz} extends the same stability idea from the depth summary to the router itself. It demonstrates that, away from top-$k$ switching boundaries, small changes in the state query or instruction prior lead to controlled changes in both the routing weights and the final retrieved operator, hence, the dynamic adapter behaves locally smoothly.

\begin{proposition}[Local Lipschitz stability of the router and routed operator]
\label{prop:app_router_operator_lipschitz}
Let $I$ be a fixed active set of size $k$. Under Assumptions~\ref{ass:app_bounded_atoms_factors}-\ref{ass:app_fixed_active_set},
\begin{equation}
\|\boldsymbol{\alpha}_{b,I}(\mathbf{e})-\boldsymbol{\alpha}_{b,I}(\mathbf{e}')\|_2 \le L_{\alpha,b}\|\mathbf{e}-\mathbf{e}'\|,
\label{eq:app_router_lipschitz}
\end{equation}
where one admissible constant is
\begin{equation}
L_{\alpha,b} := \frac{\sqrt{k}}{2\rho_{\min}} \left( \frac{L_{q,b}}{T_{\mathrm{attn}}} + \frac{\tau_{\mathrm{lang}}\|\mathbf{R}_{\mathrm{ctx}}\|}{T_{\mathrm{lang}}} \right)
\label{eq:app_router_lipschitz_constant}
\end{equation}
Moreover,
\begin{equation}
\|\mathbf{S}_b(\mathbf{e})-\mathbf{S}_b(\mathbf{e}')\| \le R_C\sqrt{k}\,L_{\alpha,b}\|\mathbf{e}-\mathbf{e}'\|
\label{eq:app_routed_operator_lipschitz}
\end{equation}
and $\forall \ \ell\in\mathcal{B}_b$ and $\forall$ input vector $ \mathbf{x}$,
\begin{equation}
\|\Delta\mathbf{W}^{\ell}(\mathbf{e})\mathbf{x}-\Delta\mathbf{W}^{\ell}(\mathbf{e}')\mathbf{x}\| \le \frac{\alpha_{\mathrm{L}}}{r}\|\mathbf{B}_{\ell}\|\,|g_{\ell}|\,\|\mathbf{A}_{\ell}\|R_C\sqrt{k}\,L_{\alpha,b}\|\mathbf{e}-\mathbf{e}'\|\,\|\mathbf{x}\|
\label{eq:app_layer_update_lipschitz}
\end{equation}
\end{proposition}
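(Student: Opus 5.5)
The plan is to propagate the perturbation $\mathbf{e}\mapsto\mathbf{e}'$ through the router in three layers: logits, softmax weights, and then the routed operator and layer update. By Lemma~\ref{lem:app_equivalent_raw_logit_fusion}, on the fixed active set $I$ (Assumption~\ref{ass:app_fixed_active_set}) the routing weights are $\boldsymbol{\alpha}_{b,I}=\mathrm{softmax}(\mathbf{z}_{b,I})$ with $z_{b,m}=\zeta_{b,m}+\tau_{\mathrm{lang}}\rho_m(c)$. This lets us work with the raw language logits and avoid differentiating $\log p_m$. Since $I$ does not change, Lemma~\ref{lem:app_softmax_jacobian} together with the mean value theorem along the segment from $\mathbf{e}$ to $\mathbf{e}'$ gives
\[
\|\boldsymbol{\alpha}_{b,I}(\mathbf{e})-\boldsymbol{\alpha}_{b,I}(\mathbf{e}')\|_2\le \tfrac12\|\mathbf{z}_{b,I}(\mathbf{e})-\mathbf{z}_{b,I}(\mathbf{e}')\|_2 .
\]
This step needs the region to be convex, or at least segment-connected, inside the Assumption~\ref{ass:app_rms_region} and \ref{ass:app_fixed_active_set} region; I will take that as part of "local".

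\textbf{Bounding each logit coordinate.} Next I bound $|z_{b,m}(\mathbf{e})-z_{b,m}(\mathbf{e}')|$ uniformly in $m$.

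For the state logit I follow the proof of Proposition~\ref{prop:app_summary_lipschitz}. Cauchy--Schwarz with $\|\widehat{\mathbf{k}}_m\|_2\le\sqrt{d_k}$ gives $|\Delta\zeta_{b,m}|\le T_{\mathrm{attn}}^{-1}\|\Delta\,\mathrm{RMSNorm}(\mathbf{q}_b)\|$. Lemma~\ref{lem:app_rmsnorm_jacobian} gives the factor $1/\rho_{\min}$, and Proposition~\ref{prop:app_summary_lipschitz} then gives $\|\Delta\mathbf{q}_b\|\le L_{q,b}\|\mathbf{e}-\mathbf{e}'\|$. The result is $|\Delta\zeta_{b,m}|\le \frac{L_{q,b}}{T_{\mathrm{attn}}\rho_{\min}}\|\mathbf{e}-\mathbf{e}'\|$.

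The language logit is handled the same way, with $\mathbf{R}_{\mathrm{ctx}}\mathbf{e}$ in place of $\mathbf{q}_b$. This gives $|\Delta\rho_m|\le \frac{\|\mathbf{R}_{\mathrm{ctx}}\|}{T_{\mathrm{lang}}\rho_{\min}}\|\mathbf{e}-\mathbf{e}'\|$.

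The triangle inequality combines the two bounds, and passing from the coordinatewise bound to the $\ell_2$ norm over $|I|=k$ coordinates costs a factor $\sqrt{k}$. Multiplying by the $\tfrac12$ from the softmax bound yields exactly $L_{\alpha,b}$ in \eqref{eq:app_router_lipschitz_constant}.

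\textbf{Routed operator and layer update.} For the operator bound I write $\mathbf{S}_b(\mathbf{e})-\mathbf{S}_b(\mathbf{e}')=\sum_{m\in I}\Delta\alpha_{b,m}\mathbf{C}_m$. The triangle inequality and Assumption~\ref{ass:app_bounded_atoms_factors} bound its norm by $R_C\|\Delta\boldsymbol{\alpha}\|_1\le R_C\sqrt{k}\|\Delta\boldsymbol{\alpha}\|_2$, which is \eqref{eq:app_routed_operator_lipschitz}. For the layer update, the identity terms in $\Delta\mathbf{W}^\ell(\mathbf{e})$ and $\Delta\mathbf{W}^\ell(\mathbf{e}')$ cancel, leaving $\frac{\alpha_{\mathrm L}}{r}g_\ell\mathbf{B}_\ell(\mathbf{S}_b(\mathbf{e})-\mathbf{S}_b(\mathbf{e}'))\mathbf{A}_\ell\mathbf{x}$. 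This holds because $\mathbf{A}_\ell$, $\mathbf{B}_\ell$ and $g_\ell$ do not depend on $\mathbf{e}$. Submultiplicativity of the operator norm then gives \eqref{eq:app_layer_update_lipschitz}.

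\textbf{Main obstacle.} The delicate step is justifying that $\mathbf{q}_b$ depends on $\mathbf{e}$ only as in Proposition~\ref{prop:app_summary_lipschitz}, that is, with $\mathbf{s}^{\mathrm{entry}}_{\ell_b}$ and the block means $\bar{\mathbf{s}}_i$ held fixed. Otherwise earlier blocks' routing would feed back into the later states. I will state explicitly that, as in that proposition, the upstream states are frozen. A secondary check is that Lemma~\ref{lem:app_rmsnorm_jacobian} is applied along the entire segment, where Assumption~\ref{ass:app_rms_region} guarantees $r(\mathbf{x})\ge\rho_{\min}$.
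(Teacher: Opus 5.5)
Your proposal follows the paper's proof step for step: you rewrite the router as a softmax of the raw fused logits via Lemma~\ref{lem:app_equivalent_raw_logit_fusion}, bound each state and language logit coordinatewise (Cauchy--Schwarz, the $1/\rho_{\min}$ RMSNorm Lipschitz constant, and $L_{q,b}$), and pick up a $\sqrt{k}$ factor together with the $\tfrac12$-Lipschitz softmax; you then bound $\mathbf{S}_b$ and $\Delta\mathbf{W}^\ell$ via $\|\cdot\|_1\le\sqrt{k}\|\cdot\|_2$ and submultiplicativity, with the identity terms cancelling. Your caveats are conventions the paper leaves implicit: the frozen upstream states $\mathbf{s}^{\mathrm{entry}}_{\ell_b},\bar{\mathbf{s}}_i$, and $\rho_{\min}$ holding along the segment for the RMSNorm step. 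The softmax step needs no convexity assumption on the region, since softmax is $\tfrac12$-Lipschitz on all of $\mathbb{R}^k$.
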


\begin{proof}
From Lemma~\ref{lem:app_equivalent_raw_logit_fusion}, the router on a fixed active set can be written as $\boldsymbol{\alpha}_{b,I}(\mathbf{e})=\text{softmax}(\mathbf{z}_{b,I}(\mathbf{e}))$ and $z_{b,m}(\mathbf{e})=\zeta_{b,m}(\mathbf{e})+\tau_{\mathrm{lang}}\rho_m(\mathbf{e})$. Since, $\zeta_{b,m}(\mathbf{e}) = \frac{\langle \text{RMSNorm}(\mathbf{q}_b(\mathbf{e})),\widehat{\mathbf{k}}_m\rangle}{\sqrt{d_k}T_{\mathrm{attn}}}$, Cauchy-Schwarz inequality, and $\|\widehat{\mathbf{k}}_m\|_2\le\sqrt{d_k}$ imply,
\begin{multline*}
|\zeta_{b,m}(\mathbf{e})-\zeta_{b,m}(\mathbf{e}')| \le \frac{1}{\sqrt{d_k}T_{\mathrm{attn}}} \left|\left\langle \text{RMSNorm}(\mathbf{q}_b(\mathbf{e}))-\text{RMSNorm}(\mathbf{q}_b(\mathbf{e}')), \widehat{\mathbf{k}}_m \right\rangle\right|\\  \le \frac{1}{T_{\mathrm{attn}}} \|\text{RMSNorm}(\mathbf{q}_b(\mathbf{e}))-\text{RMSNorm}(\mathbf{q}_b(\mathbf{e}'))\|_2
\end{multline*}
From Lemma~\ref{lem:app_rmsnorm_jacobian} and Proposition~\ref{prop:app_summary_lipschitz},
\[
\|\text{RMSNorm}(\mathbf{q}_b(\mathbf{e}))-\text{RMSNorm}(\mathbf{q}_b(\mathbf{e}'))\|_2 \le \frac{1}{\rho_{\min}}\|\mathbf{q}_b(\mathbf{e})-\mathbf{q}_b(\mathbf{e}')\|_2 \le \frac{L_{q,b}}{\rho_{\min}}\|\mathbf{e}-\mathbf{e}'\|.
\]
Hence,
\begin{equation}
|\zeta_{b,m}(\mathbf{e})-\zeta_{b,m}(\mathbf{e}')| \le \frac{L_{q,b}}{T_{\mathrm{attn}}\rho_{\min}}\|\mathbf{e}-\mathbf{e}'\|
\label{eq:app_state_logit_lipschitz}
\end{equation}
For the language logit,
\[
\rho_m(\mathbf{e}) = \frac{\langle \text{RMSNorm}(\mathbf{R}_{\mathrm{ctx}}\mathbf{e}),\widehat{\mathbf{k}}_m\rangle}{\sqrt{d_k}T_{\mathrm{lang}}}
\]
\begin{equation}
\implies |\rho_m(\mathbf{e})-\rho_m(\mathbf{e}')| \le \frac{\|\mathbf{R}_{\mathrm{ctx}}\|}{T_{\mathrm{lang}}\rho_{\min}}\|\mathbf{e}-\mathbf{e}'\|
\label{eq:app_language_logit_lipschitz}
\end{equation}
Combining~\eqref{eq:app_state_logit_lipschitz} and~\eqref{eq:app_language_logit_lipschitz},  $\forall \ m\in I$,
\[
|z_{b,m}(\mathbf{e})-z_{b,m}(\mathbf{e}')| \le \frac{1}{\rho_{\min}} \left( \frac{L_{q,b}}{T_{\mathrm{attn}}} + \frac{\tau_{\mathrm{lang}}\|\mathbf{R}_{\mathrm{ctx}}\|}{T_{\mathrm{lang}}} \right) \|\mathbf{e}-\mathbf{e}'\|
\]
Since there are $k$ active logits,
\[
\|\mathbf{z}_{b,I}(\mathbf{e})-\mathbf{z}_{b,I}(\mathbf{e}')\|_2 \le \frac{\sqrt{k}}{\rho_{\min}} \left( \frac{L_{q,b}}{T_{\mathrm{attn}}} + \frac{\tau_{\mathrm{lang}}\|\mathbf{R}_{\mathrm{ctx}}\|}{T_{\mathrm{lang}}} \right) \|\mathbf{e}-\mathbf{e}'\|
\]
By Lemma~\ref{lem:app_softmax_jacobian}, softmax is $\frac{1}{2}$-Lipschitz in Euclidean norm. Hence,
\[
\|\boldsymbol{\alpha}_{b,I}(\mathbf{e})-\boldsymbol{\alpha}_{b,I}(\mathbf{e}')\|_2 \le \frac{1}{2} \|\mathbf{z}_{b,I}(\mathbf{e})-\mathbf{z}_{b,I}(\mathbf{e}')\|_2
\]
which proves~\eqref{eq:app_router_lipschitz} with the constant in~\eqref{eq:app_router_lipschitz_constant}. For the routed operator, since the active set is fixed,
\[
\mathbf{S}_b(\mathbf{e})-\mathbf{S}_b(\mathbf{e}') = \sum_{m\in I}(\alpha_{b,m}(\mathbf{e})-\alpha_{b,m}(\mathbf{e}'))\mathbf{C}_m
\]
\begin{align} \label{eq:22}
\implies \|\mathbf{S}_b(\mathbf{e})-\mathbf{S}_b(\mathbf{e}')\| &\le \sum_{m\in I}|\alpha_{b,m}(\mathbf{e})-\alpha_{b,m}(\mathbf{e}')|\,\|\mathbf{C}_m\| \le R_C\|\boldsymbol{\alpha}_{b,I}(\mathbf{e})-\boldsymbol{\alpha}_{b,I}(\mathbf{e}')\|_1\\
&\le R_C\sqrt{k}\|\boldsymbol{\alpha}_{b,I}(\mathbf{e})-\boldsymbol{\alpha}_{b,I}(\mathbf{e}')\|_2
\end{align}
Combining ~\eqref{eq:22} and ~\eqref{eq:app_router_lipschitz} proves~\eqref{eq:app_routed_operator_lipschitz}. Since,
\[
\Delta\mathbf{W}^{\ell}(\mathbf{e})-\Delta\mathbf{W}^{\ell}(\mathbf{e}') = \frac{\alpha_{\mathrm{L}}}{r}\mathbf{B}_{\ell}g_{\ell} \bigl(\mathbf{S}_b(\mathbf{e})-\mathbf{S}_b(\mathbf{e}')\bigr) \mathbf{A}_{\ell}
\]
By submultiplicativity,
\[
\|\Delta\mathbf{W}^{\ell}(\mathbf{e})-\Delta\mathbf{W}^{\ell}(\mathbf{e}')\| \le \frac{\alpha_{\mathrm{L}}}{r}\|\mathbf{B}_{\ell}\|\,|g_{\ell}|\, \|\mathbf{S}_b(\mathbf{e})-\mathbf{S}_b(\mathbf{e}')\|\,\|\mathbf{A}_{\ell}\|
\]
Using~\eqref{eq:app_routed_operator_lipschitz} and then applying the resulting operator to $\mathbf{x}$ proves~\eqref{eq:app_layer_update_lipschitz}.
\end{proof}
\ref{prop:app_compressed_gradient_identity} connects the full adapter update to the lower-dimensional rank-space object that the router controls and shows that the first-order effect of a routed update is captured by a compressed gradient. This leads to learning in the shared atom space being aligned with the directions of the original layer on the loss landscape.

\begin{proposition}[Exact compressed-gradient identity]
\label{prop:app_compressed_gradient_identity}
Fix an adapted layer $\ell\in\mathcal{B}_b$. Let $\mathbf{G}_{\ell}$ be the local dense gradient with respect to the effective weight update at layer $\ell$, to ensure that the first-order loss variation induced by a dense update $\mathbf{U}$ is $\langle \mathbf{G}_{\ell},\mathbf{U}\rangle_F$. Then the dynamic routed part satisfies:
\begin{equation}
\left\langle \mathbf{G}_{\ell}, \frac{\alpha_{\mathrm{L}}}{r}g_{\ell}\mathbf{B}_{\ell}\mathbf{S}_b(c)\mathbf{A}_{\ell} \right\rangle_F = \frac{\alpha_{\mathrm{L}}}{r}g_{\ell} \left\langle \mathbf{B}_{\ell}^{\top}\mathbf{G}_{\ell}\mathbf{A}_{\ell}^{\top}, \mathbf{S}_b(c) \right\rangle_F
\label{eq:app_compressed_gradient_identity}
\end{equation}
Hence, the first-order optimization signal for the routed operator is the compressed rank-space gradient $\mathbf{B}_{\ell}^{\top}\mathbf{G}_{\ell}\mathbf{A}_{\ell}^{\top}$.
\end{proposition}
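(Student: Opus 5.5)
The identity is a trace identity: move the fixed factors $\mathbf{B}_\ell$ and $\mathbf{A}_\ell$ from one side of the Frobenius inner product to the other. I would first pull the scalar $\frac{\alpha_{\mathrm{L}}}{r}g_\ell$ out of the inner product on the left-hand side of~\eqref{eq:app_compressed_gradient_identity}, using bilinearity of $\langle\cdot,\cdot\rangle_F$. Since $g_\ell=\sigma(\eta_\ell)$ and $\alpha_{\mathrm{L}}/r$ are scalars, what remains is to show
\[
\langle \mathbf{G}_\ell,\mathbf{B}_\ell\mathbf{S}_b(c)\mathbf{A}_\ell\rangle_F=\langle \mathbf{B}_\ell^{\top}\mathbf{G}_\ell\mathbf{A}_\ell^{\top},\mathbf{S}_b(c)\rangle_F .
\]

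I would write the left side as $\mathrm{tr}(\mathbf{G}_\ell^{\top}\mathbf{B}_\ell\mathbf{S}_b(c)\mathbf{A}_\ell)$, using the convention $\langle\mathbf{U},\mathbf{V}\rangle_F=\mathrm{tr}(\mathbf{U}^\top\mathbf{V})$ fixed in the setup. Cyclic invariance of the trace then gives $\mathrm{tr}(\mathbf{A}_\ell\mathbf{G}_\ell^{\top}\mathbf{B}_\ell\mathbf{S}_b(c))$. I would recognize $\mathbf{A}_\ell\mathbf{G}_\ell^{\top}\mathbf{B}_\ell=(\mathbf{B}_\ell^{\top}\mathbf{G}_\ell\mathbf{A}_\ell^{\top})^{\top}$, which turns this into $\langle \mathbf{B}_\ell^{\top}\mathbf{G}_\ell\mathbf{A}_\ell^{\top},\mathbf{S}_b(c)\rangle_F$.

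Along the way I would check dimensions. With $\mathbf{A}_\ell\in\mathbb{R}^{r\times d_{\rm in}}$, $\mathbf{B}_\ell\in\mathbb{R}^{d_{\rm out}\times r}$ and $\mathbf{G}_\ell\in\mathbb{R}^{d_{\rm out}\times d_{\rm in}}$ (the shape of the dense update), the product $\mathbf{B}_\ell^\top\mathbf{G}_\ell\mathbf{A}_\ell^\top$ is $r\times r$, matching $\mathbf{S}_b(c)$. Every product in the cyclic rearrangement is also well defined.

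For the concluding sentence, I would note that in the directional derivative along perturbations $\delta\mathbf{S}$ of the rank-space operator, the induced dense perturbation is $\frac{\alpha_{\mathrm{L}}}{r}g_\ell\mathbf{B}_\ell\,\delta\mathbf{S}\,\mathbf{A}_\ell$. By the definition of $\mathbf{G}_\ell$ and the identity just shown, its first-order loss change is $\langle \frac{\alpha_{\mathrm{L}}}{r}g_\ell\mathbf{B}_\ell^\top\mathbf{G}_\ell\mathbf{A}_\ell^\top,\delta\mathbf{S}\rangle_F$. This identifies the rank-space gradient as the compressed matrix, up to the scalar factor.

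There is no real obstacle here. The only points needing care are keeping the transposes consistent with the chosen Frobenius convention and stating explicitly that $\mathbf{A}_\ell$, $\mathbf{B}_\ell$ and $g_\ell$ are held fixed. The latter ensures that the dense variation attributable to $\mathbf{S}_b(c)$ is exactly the linear map above.
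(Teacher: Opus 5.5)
Your proposal is correct and follows the paper's proof: writing the Frobenius inner product as $\mathrm{tr}(\mathbf{G}_\ell^{\top}\mathbf{B}_\ell\mathbf{S}_b(c)\mathbf{A}_\ell)$, cycling to $\mathrm{tr}(\mathbf{A}_\ell\mathbf{G}_\ell^{\top}\mathbf{B}_\ell\mathbf{S}_b(c))$, and recognizing $\mathbf{A}_\ell\mathbf{G}_\ell^{\top}\mathbf{B}_\ell=(\mathbf{B}_\ell^{\top}\mathbf{G}_\ell\mathbf{A}_\ell^{\top})^{\top}$ is exactly the paper's argument. Your dimension check and the directional-derivative reading of the concluding sentence (with $\mathbf{A}_\ell$, $\mathbf{B}_\ell$, $g_\ell$ held fixed) are sound additions the paper leaves implicit.
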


\begin{proof}
Via definition of the Frobenius inner product, $\left\langle \mathbf{G}_{\ell}, \mathbf{B}_{\ell}\mathbf{S}_b(c)\mathbf{A}_{\ell} \right\rangle_F = \text{tr}\left(\mathbf{G}_{\ell}^{\top}\mathbf{B}_{\ell}\mathbf{S}_b(c)\mathbf{A}_{\ell}\right)$. From cyclic invariance of the trace, $\text{tr}\left(\mathbf{G}_{\ell}^{\top}\mathbf{B}_{\ell}\mathbf{S}_b(c)\mathbf{A}_{\ell}\right) = \text{tr}\left(\mathbf{A}_{\ell}\mathbf{G}_{\ell}^{\top}\mathbf{B}_{\ell}\mathbf{S}_b(c)\right)$. Observe that,
\begin{align*}
\text{tr}\left(\mathbf{A}_{\ell}\mathbf{G}_{\ell}^{\top}\mathbf{B}_{\ell}\mathbf{S}_b(c)\right) = \text{tr}\left(\left(\mathbf{B}_{\ell}^{\top}\mathbf{G}_{\ell}\mathbf{A}_{\ell}^{\top}\right)^{\top}\mathbf{S}_b(c)\right) &= \left\langle \mathbf{B}_{\ell}^{\top}\mathbf{G}_{\ell}\mathbf{A}_{\ell}^{\top},\mathbf{S}_b(c)\right\rangle_F \\ \implies \left\langle \mathbf{G}_{\ell}, \frac{\alpha_{\mathrm{L}}}{r}g_{\ell}\mathbf{B}_{\ell}\mathbf{S}_b(c)\mathbf{A}_{\ell} \right\rangle_F &= \frac{\alpha_{\mathrm{L}}}{r}g_{\ell} \left\langle \mathbf{B}_{\ell}^{\top}\mathbf{G}_{\ell}\mathbf{A}_{\ell}^{\top}, \mathbf{S}_b(c) \right\rangle_F
\end{align*}
\end{proof}

\ref{thm:app_attention_projection_norm_stability} lifts the norm-control argument to the Transformer attention projections. When the queryable operator is inserted into the $Q, K, V$ and $O$ projection adapters, the resulting projection updates remain uniformly bounded since they are built from a bounded atom memory. The instruction-conditioned router can make the attention adaptation bounded and dynamic, and specific to the projections.

\begin{theorem}[Norm bound and instruction stability for attention-projection updates]
\label{thm:app_attention_projection_norm_stability}
Consider a Transformer attention block $\ell\in\mathcal{B}_b$ with projection-specific updates
\begin{equation}
\Delta\mathbf{W}^{\ell,p}(\mathbf{h}_{\ell};b,c) = \frac{\alpha_{\mathrm{L}}}{r}\, \mathbf{B}^{p}_{\ell} \bigl(\mathbf{I}_r+g^{p}_{\ell}\mathbf{S}_b(c)\bigr) \mathbf{A}^{p}_{\ell}
\label{eq:app_attention_projection_update}
\end{equation}
where, $p\in\{Q,K,V,O\}$. Assume $\|\mathbf{A}^{p}_{\ell}\|\le R_A^p$ and $\|\mathbf{B}^{p}_{\ell}\|\le R_B^p$. Then, $\forall \ p\in\{Q,K,V,O\}$,
\begin{equation}
\|\Delta\mathbf{W}^{\ell,p}(\mathbf{h}_{\ell};b,c)\| \le \frac{\alpha_{\mathrm{L}}}{r}R_B^p\bigl(1+|g^{p}_{\ell}|R_C\bigr)R_A^p
\label{eq:app_attention_projection_norm_bound}
\end{equation}
Moreover, on a fixed active-set region,
\begin{equation}
\|\Delta\mathbf{W}^{\ell,p}(\mathbf{e})-\Delta\mathbf{W}^{\ell,p}(\mathbf{e}')\| \le \frac{\alpha_{\mathrm{L}}}{r}R_B^p|g^{p}_{\ell}|R_A^p R_C\sqrt{k}\,L_{\alpha,b}\|\mathbf{e}-\mathbf{e}'\|
\label{eq:app_attention_projection_stability_bound}
\end{equation}
\end{theorem}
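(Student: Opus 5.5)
The plan is to reduce Theorem~\ref{thm:app_attention_projection_norm_stability} to the layerwise results already established, since each projection update has the same algebraic form as the generic update \eqref{eq:app_layer_update} with $(\mathbf{A}_\ell,\mathbf{B}_\ell,g_\ell)$ replaced by $(\mathbf{A}^p_\ell,\mathbf{B}^p_\ell,g^p_\ell)$. The key structural point is that the shared operator $\mathbf{S}_b(c)$ is the same object for all four projections. Its bounds from Proposition~\ref{prop:app_convex_hull_bound} and Proposition~\ref{prop:app_router_operator_lipschitz} therefore depend only on the router and atoms, not on $p$.

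For the norm bound \eqref{eq:app_attention_projection_norm_bound}, I fix $p$ and apply submultiplicativity of the operator norm to \eqref{eq:app_attention_projection_update}. This gives
\[
\|\Delta\mathbf{W}^{\ell,p}\|\le \frac{\alpha_{\mathrm{L}}}{r}\|\mathbf{B}^p_\ell\|\,\|\mathbf{I}_r+g^p_\ell\mathbf{S}_b(c)\|\,\|\mathbf{A}^p_\ell\|.
\]
The middle factor is then bounded by $1+|g^p_\ell|R_C$, using the triangle inequality, $\|\mathbf{I}_r\|=1$, and Proposition~\ref{prop:app_convex_hull_bound}. That proposition requires only that the routing weights lie in the simplex, so it covers the top-$k$ router as well. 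Substituting the assumed bounds $R_A^p$ and $R_B^p$ finishes this part; it is Corollary~\ref{cor:app_update_norm_bound} with projection-specific constants.

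For the stability bound \eqref{eq:app_attention_projection_stability_bound}, I work on a region where Assumptions~\ref{ass:app_bounded_atoms_factors}--\ref{ass:app_fixed_active_set} hold, with a fixed active set $I$ of size $k$. The identity parts cancel in the difference, which leaves
\[
\Delta\mathbf{W}^{\ell,p}(\mathbf{e})-\Delta\mathbf{W}^{\ell,p}(\mathbf{e}')=\frac{\alpha_{\mathrm{L}}}{r}g^p_\ell\mathbf{B}^p_\ell\bigl(\mathbf{S}_b(\mathbf{e})-\mathbf{S}_b(\mathbf{e}')\bigr)\mathbf{A}^p_\ell.
\]
This uses that $\mathbf{A}^p_\ell$, $\mathbf{B}^p_\ell$ and $g^p_\ell$ do not depend on $\mathbf{e}$, which holds because they are trained parameters. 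Applying submultiplicativity and then \eqref{eq:app_routed_operator_lipschitz} yields the stated constant.

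The main obstacle is checking that \eqref{eq:app_routed_operator_lipschitz} still holds in the attention setting. Here the router state $\mathbf{s}^{\mathrm{entry}}_{\ell_b}$ is replaced by the averaged summary $\bar{\mathbf{r}}^{\mathrm{attn}}_\ell$, and the $Q,K,V$ projections themselves depend on the adapted weights. I resolve this as Proposition~\ref{prop:app_summary_lipschitz} does, by holding the state inputs and completed block summaries fixed and varying only the instruction embedding $\mathbf{e}$. The averaged summary is computed from $\mathbf{H}_\ell(\mathbf{A}^p_\ell)^\top$, before any routed update is applied, so it carries no circular dependence on $\mathbf{S}_b$. It enters $\mathbf{q}_b^{(0)}$ exactly as $\mathbf{s}^{\mathrm{entry}}_{\ell_b}$ does, so the Lipschitz chain through $L_{q,b}$ and $L_{\alpha,b}$ carries over unchanged.
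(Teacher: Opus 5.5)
Your proposal is correct and follows the paper's proof essentially step for step. The norm bound is Corollary~\ref{cor:app_update_norm_bound} applied to $(\mathbf{A}^p_\ell,\mathbf{B}^p_\ell,g^p_\ell)$, and the stability bound comes from cancelling the identity term, using submultiplicativity, and invoking \eqref{eq:app_routed_operator_lipschitz} from Proposition~\ref{prop:app_router_operator_lipschitz}; your remark that $\bar{\boldsymbol{r}}^{\mathrm{attn}}_\ell$ enters the query exactly as $\mathbf{s}^{\mathrm{entry}}_{\ell_b}$ does, and is held fixed as in Proposition~\ref{prop:app_summary_lipschitz}, makes explicit a point the paper leaves implicit without changing the argument.
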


\begin{proof}
Equation~\eqref{eq:app_attention_projection_norm_bound} is Corollary~\ref{cor:app_update_norm_bound} applied to $(\mathbf{A}^{p}_{\ell},\mathbf{B}^{p}_{\ell})$. For the stability bound,
\[
\Delta\mathbf{W}^{\ell,p}(\mathbf{e})-\Delta\mathbf{W}^{\ell,p}(\mathbf{e}') = \frac{\alpha_{\mathrm{L}}}{r}\mathbf{B}^{p}_{\ell}g^{p}_{\ell} \bigl(\mathbf{S}_b(\mathbf{e})-\mathbf{S}_b(\mathbf{e}')\bigr) \mathbf{A}^{p}_{\ell}
\]
Using submultiplicativity gives,
\begin{equation} \label{eq:24}
\|\Delta\mathbf{W}^{\ell,p}(\mathbf{e})-\Delta\mathbf{W}^{\ell,p}(\mathbf{e}')\| \le \frac{\alpha_{\mathrm{L}}}{r}R_B^p|g^{p}_{\ell}|R_A^p \|\mathbf{S}_b(\mathbf{e})-\mathbf{S}_b(\mathbf{e}')\|
\end{equation}
From proposition~\ref{prop:app_router_operator_lipschitz},
\begin{equation} \label{eq:23}
\|\mathbf{S}_b(\mathbf{e})-\mathbf{S}_b(\mathbf{e}')\| \le R_C\sqrt{k}\,L_{\alpha,b}\|\mathbf{e}-\mathbf{e}'\|
\end{equation}
From \ref{eq:23} and \ref{eq:24}, we get ~\eqref{eq:app_attention_projection_stability_bound}.
\end{proof}

This explains the mechanism of the shared atom memory receiving the learning signal from all layers in a block. As the same routed operator is reused across the block, the gradient with respect to that operator accumulates the low-rank contributions from each adapted layer. Hence, each atom is updated in proportion to its routing weight and its alignment with this blockwise gradient signal. This makes the global memory trainable through structured supervision across depth. For each layer $\ell\in\mathcal{B}_b$, define the rank-space forward quantities:
\begin{align}
\mathbf{s}_{\ell}&:=\mathbf{A}_{\ell}\mathbf{x}_{\ell} \\
\mathbf{d}_{\ell}&:=\mathbf{S}_b(c)\mathbf{s}_{\ell} \\
\mathbf{t}_{\ell}&:=\mathbf{s}_{\ell}+g_{\ell}\mathbf{d}_{\ell} \\
\delta\mathbf{h}_{\ell}&:=\frac{\alpha_{\mathrm{L}}}{r}\mathbf{B}_{\ell}\mathbf{t}_{\ell}
\label{eq:app_rank_forward_quantities}
\end{align}
where, $\mathbf{x}_{\ell}$ denotes the input vector to the adapted projection. Let $\mathcal{L}$ be the scalar training loss. Define:
\begin{equation}
\mathbf{r}_{\ell} := \frac{\partial\mathcal{L}}{\partial \mathbf{t}_{\ell}} = \frac{\alpha_{\mathrm{L}}}{r}\mathbf{B}_{\ell}^{\top} \frac{\partial\mathcal{L}}{\partial\delta\mathbf{h}_{\ell}} \in\mathbb{R}^r
\label{eq:app_rank_backprop_vector}
\end{equation}

\begin{theorem}[Exact blockwise gradient factorization]
\label{thm:app_blockwise_gradient_factorization}
For the shared routed operator reused across block $\mathcal{B}_b$,
\begin{equation}
\nabla_{\mathbf{S}_b(c)}\mathcal{L} = \sum_{\ell\in\mathcal{B}_b}g_{\ell}\mathbf{r}_{\ell}\mathbf{s}_{\ell}^{\top}
\label{eq:app_grad_S_exact}
\end{equation}
Consequently,
\begin{equation}
\nabla_{\mathbf{C}_m}\mathcal{L} = \alpha_{b,m}(c)\nabla_{\mathbf{S}_b(c)}\mathcal{L}
\label{eq:app_grad_atom_exact}
\end{equation}
and
\begin{equation}
\frac{\partial\mathcal{L}}{\partial\alpha_{b,m}(c)} = \left\langle \nabla_{\mathbf{S}_b(c)}\mathcal{L},\mathbf{C}_m\right\rangle_F
\label{eq:app_grad_alpha_exact}
\end{equation}
$m = 1,2,3,4\ldots, M$. Moreover, the gate gradient is,
\begin{equation}
\frac{\partial\mathcal{L}}{\partial\eta_{\ell}} = \sigma(\eta_{\ell})(1-\sigma(\eta_{\ell}))\langle \mathbf{r}_{\ell},\mathbf{d}_{\ell}\rangle
\label{eq:app_grad_gate_exact}
\end{equation}
for $\ell\in\mathcal{B}_b$.
\end{theorem}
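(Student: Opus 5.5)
The whole argument is reverse-mode chain rule on the rank-space forward quantities in \eqref{eq:app_rank_forward_quantities}, with the active set $I_b$ held fixed as in Assumption~\ref{ass:app_fixed_active_set}. The plan is to treat $\mathbf{S}_b(c)$ as an intermediate variable that feeds the loss only through the maps $\mathbf{S}\mapsto \mathbf{t}_\ell=\mathbf{s}_\ell+g_\ell\mathbf{S}\mathbf{s}_\ell$ for $\ell\in\mathcal{B}_b$. In computing $\nabla_{\mathbf{S}_b(c)}\mathcal{L}$, each $\mathbf{s}_\ell$ is held fixed. This is the "direct" partial derivative through the block's own uses of $\mathbf{S}_b$; any downstream dependence of later $\mathbf{s}_{\ell'}$ on $\mathbf{S}_b$ is already accounted for inside the backpropagated $\mathbf{r}_\ell=\partial\mathcal{L}/\partial\mathbf{t}_\ell$, as defined in \eqref{eq:app_rank_backprop_vector}. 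I would state explicitly that $\mathbf{r}_\ell$ is the total derivative, so that the decomposition is exact and involves no double counting.

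\textbf{Step 1.} For a single layer, take a perturbation $\mathbf{S}\to\mathbf{S}+\mathbf{E}$. Then $\delta\mathbf{t}_\ell=g_\ell\mathbf{E}\mathbf{s}_\ell$, so
\[
\delta\mathcal{L}=\langle\mathbf{r}_\ell,g_\ell\mathbf{E}\mathbf{s}_\ell\rangle=\langle g_\ell\mathbf{r}_\ell\mathbf{s}_\ell^\top,\mathbf{E}\rangle_F
\]
by the trace identity $\mathbf{a}^\top\mathbf{E}\mathbf{b}=\mathrm{tr}((\mathbf{a}\mathbf{b}^\top)^\top\mathbf{E})$. Summing over the layers of $\mathcal{B}_b$, all of which reuse the same $\mathbf{S}_b$ by Algorithm~\ref{alg:6}, gives \eqref{eq:app_grad_S_exact} through the multivariable chain rule. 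For attention blocks the same computation applies projection-wise, with $g_\ell^p$, $\mathbf{r}_\ell^p$ and $\mathbf{s}_\ell^p$ summed over $p$.

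\textbf{Step 2.} Use the linearity of $\mathbf{S}_b=\sum_m\alpha_{b,m}\mathbf{C}_m$. Holding the routing weights fixed, which is the hypothesis that values are independent of keys, we have $\partial\mathbf{S}_b/\partial\mathbf{C}_m=\alpha_{b,m}\,\mathrm{Id}$, which yields \eqref{eq:app_grad_atom_exact}. Likewise $\partial\mathbf{S}_b/\partial\alpha_{b,m}=\mathbf{C}_m$, and pairing with the gradient gives \eqref{eq:app_grad_alpha_exact}. For $m\notin I$ the weight vanishes, so the atom gradient is zero, consistent with the formula.

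\textbf{Step 3.} For the gate, $\mathbf{t}_\ell$ depends on $g_\ell$ with $\partial\mathbf{t}_\ell/\partial g_\ell=\mathbf{d}_\ell$. Combining this with $\sigma'=\sigma(1-\sigma)$ gives \eqref{eq:app_grad_gate_exact}.

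The main obstacle is not computational but definitional. It concerns whether the stated identity refers to the partial gradient with the router inputs held fixed: $\mathbf{s}_\ell$ for later layers depends on earlier outputs, and $\boldsymbol\alpha$ depends on $\mathbf{s}^{\mathrm{entry}}$ and the keys. I would resolve this by declaring the computational graph explicitly. $\mathbf{S}_b$ is a node whose only children are the $\mathbf{t}_\ell$, $\ell\in\mathcal{B}_b$, and the $\mathbf{r}_\ell$ are total adjoints. The identities are then exact statements of the chain rule at that node.
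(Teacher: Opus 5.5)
Your proposal is correct and matches the paper's proof step for step: the perturbation $\mathrm{d}\mathbf{t}_\ell = g_\ell(\mathrm{d}\mathbf{S}_b)\mathbf{s}_\ell$ is paired with $\mathbf{r}_\ell$ via the trace identity and summed over $\ell\in\mathcal{B}_b$; linearity of $\mathbf{S}_b=\sum_m\alpha_{b,m}\mathbf{C}_m$ (holding either the weights or the atoms fixed) gives the atom and weight gradients; and the chain rule with $\sigma'=\sigma(1-\sigma)$ gives the gate gradient. You also say explicitly that $\mathbf{r}_\ell$ is the total adjoint, so any dependence of later $\mathbf{s}_{\ell'}$ in the block on $\mathbf{S}_b$ is counted exactly once; the paper leaves this implicit when it writes $\mathrm{d}\mathcal{L}=\sum_{\ell}\mathrm{d}\mathcal{L}_\ell$.
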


\begin{proof}
For a fixed layer $\ell$, the only quantity in~\eqref{eq:app_rank_forward_quantities} that depends directly on $\mathbf{S}_b(c)$ is $\mathbf{d}_{\ell}=\mathbf{S}_b(c)\mathbf{s}_{\ell}$. Hence, if $\mathrm{d}\mathbf{S}_b$ is a perturbation of $\mathbf{S}_b(c)$, then $\mathrm{d}\mathbf{d}_{\ell}=(\mathrm{d}\mathbf{S}_b)\mathbf{s}_{\ell}$. Since, $\mathbf{t}_{\ell}=\mathbf{s}_{\ell}+g_{\ell}\mathbf{d}_{\ell}$, we have: $\mathrm{d}\mathbf{t}_{\ell} =g_{\ell}\mathrm{d}\mathbf{d}_{\ell} =g_{\ell}(\mathrm{d}\mathbf{S}_b)\mathbf{s}_{\ell}$. By definition of $\mathbf{r}_{\ell}$, $\mathrm{d}\mathcal{L}_{\ell} = \langle \mathbf{r}_{\ell},\mathrm{d}\mathbf{t}_{\ell}\rangle = \left\langle \mathbf{r}_{\ell},g_{\ell}(\mathrm{d}\mathbf{S}_b)\mathbf{s}_{\ell}\right\rangle$. As $g_{\ell}$ is scalar, $\mathrm{d}\mathcal{L}_{\ell} =g_{\ell}\mathbf{r}_{\ell}^{\top}(\mathrm{d}\mathbf{S}_b)\mathbf{s}_{\ell}$. Since,
\[
\left\langle \mathbf{r}_{\ell}\mathbf{s}_{\ell}^{\top},\mathrm{d}\mathbf{S}_b\right\rangle_F =\text{tr}\left((\mathbf{r}_{\ell}\mathbf{s}_{\ell}^{\top})^{\top}\mathrm{d}\mathbf{S}_b\right) =\text{tr}\left(\mathbf{s}_{\ell}\mathbf{r}_{\ell}^{\top}\mathrm{d}\mathbf{S}_b\right) =\mathbf{r}_{\ell}^{\top}(\mathrm{d}\mathbf{S}_b)\mathbf{s}_{\ell}
\]
\[
\implies \mathrm{d}\mathcal{L}_{\ell} =\left\langle g_{\ell}\mathbf{r}_{\ell}\mathbf{s}_{\ell}^{\top},\mathrm{d}\mathbf{S}_b\right\rangle_F
\]
\begin{align} \label{eq:26}
\implies \mathrm{d}\mathcal{L} &=\sum_{\ell\in\mathcal{B}_b}\mathrm{d}\mathcal{L}_{\ell} =\sum_{\ell\in\mathcal{B}_b} \left\langle g_{\ell}\mathbf{r}_{\ell}\mathbf{s}_{\ell}^{\top},\mathrm{d}\mathbf{S}_b\right\rangle_F =\left\langle \sum_{\ell\in\mathcal{B}_b}g_{\ell}\mathbf{r}_{\ell}\mathbf{s}_{\ell}^{\top}, \mathrm{d}\mathbf{S}_b \right\rangle_F
\end{align}
From the definition of the Frobenius gradient, this proves~\eqref{eq:app_grad_S_exact}. Next, use the linear decomposition $\mathbf{S}_b(c)=\sum_{m=1}^{M}\alpha_{b,m}(c)\mathbf{C}_m$. If the routing weights are held fixed and the atom $\mathbf{C}_m$ is perturbed, then,
\begin{equation} \label{eq:25}
\mathrm{d}\mathbf{S}_b=
\sum_{m=1}^{M}\alpha_{b,m}(c)\mathrm{d}\mathbf{C}_m
\end{equation}
Substituting \ref{eq:25} in \ref{eq:26},
\begin{align*}
\mathrm{d}\mathcal{L} =\left\langle \nabla_{\mathbf{S}_b(c)}\mathcal{L},\mathrm{d}\mathbf{S}_b\right\rangle_F =\left\langle \nabla_{\mathbf{S}_b(c)}\mathcal{L}, \sum_{m=1}^{M}\alpha_{b,m}(c)\mathrm{d}\mathbf{C}_m \right\rangle_F =\sum_{m=1}^{M}\alpha_{b,m}(c) \left\langle \nabla_{\mathbf{S}_b(c)}\mathcal{L},\mathrm{d}\mathbf{C}_m\right\rangle_F
\end{align*}

\[
\implies \nabla_{\mathbf{C}_m}\mathcal{L}=\alpha_{b,m}(c)\nabla_{\mathbf{S}_b(c)}\mathcal{L}
\]
which proves~\eqref{eq:app_grad_atom_exact}. If, instead, the atoms are held fixed and the routing weights are perturbed, then
\[
\mathrm{d}\mathbf{S}_b=
\sum_{m=1}^{M}(\mathrm{d}\alpha_{b,m}(c))\mathbf{C}_m
\]
\begin{align*}
\implies \mathrm{d}\mathcal{L} &=\left\langle \nabla_{\mathbf{S}_b(c)}\mathcal{L},\mathrm{d}\mathbf{S}_b\right\rangle_F =\sum_{m=1}^{M}(\mathrm{d}\alpha_{b,m}(c)) \left\langle \nabla_{\mathbf{S}_b(c)}\mathcal{L},\mathbf{C}_m\right\rangle_F
\end{align*}
The coefficient of $\mathrm{d}\alpha_{b,m}(c)$ is the partial derivative with respect to $\alpha_{b,m}(c)$, proving~\eqref{eq:app_grad_alpha_exact}. Finally, as, $g_{\ell}=\sigma(\eta_{\ell})$ and $\mathbf{t}_{\ell}=\mathbf{s}_{\ell}+g_{\ell}\mathbf{d}_{\ell}$,
\[
\frac{\partial \mathbf{t}_{\ell}}{\partial \eta_{\ell}} =\sigma(\eta_{\ell})(1-\sigma(\eta_{\ell}))\mathbf{d}_{\ell}
\]
Using the chain rule, we get:
\[
\frac{\partial\mathcal{L}}{\partial\eta_{\ell}} = \left\langle \frac{\partial\mathcal{L}}{\partial\mathbf{t}_{\ell}}, \frac{\partial \mathbf{t}_{\ell}}{\partial \eta_{\ell}} \right\rangle = \sigma(\eta_{\ell})(1-\sigma(\eta_{\ell}))\langle \mathbf{r}_{\ell},\mathbf{d}_{\ell}\rangle
\]
This proves~\eqref{eq:app_grad_gate_exact}.
\end{proof}

A direct consequence of the blockwise gradient factorization is that the router logits receive an interpretable training signal. \ref{cor:app_exact_logit_gradients} presents the promotion of each atom relative to the currently routed average.

\begin{corollary}[Exact logit gradients on a fixed active set]
\label{cor:app_exact_logit_gradients}
Let $I$ be a fixed active set. Define $\psi_{b,m}:=\left\langle \nabla_{\mathbf{S}_b(c)}\mathcal{L},\mathbf{C}_m\right\rangle_F$ and $\bar\psi_b:=\sum_{j\in I}\alpha_{b,j}(c)\psi_{b,j}$. If $\boldsymbol{\alpha}_{b,I}(c)=\text{softmax}(\mathbf{z}_{b,I}(c))$, then $\forall m\in I$,
\begin{align}
\frac{\partial\mathcal{L}}{\partial z_{b,m}(c)} &= \alpha_{b,m}(c)(\psi_{b,m}-\bar\psi_b) \label{eq:app_joint_logit_gradient}\\
\frac{\partial\mathcal{L}}{\partial \zeta_{b,m}} &= \alpha_{b,m}(c)(\psi_{b,m}-\bar\psi_b)
\label{eq:app_state_logit_gradient}\\
\frac{\partial\mathcal{L}}{\partial \rho_m(c)} &= \tau_{\mathrm{lang}}\alpha_{b,m}(c)(\psi_{b,m}-\bar\psi_b)
\label{eq:app_language_logit_gradient}
\end{align}
\end{corollary}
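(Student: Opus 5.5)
The plan is to chain Theorem~\ref{thm:app_blockwise_gradient_factorization} with the softmax Jacobian of Lemma~\ref{lem:app_softmax_jacobian}, and then pass from the joint logits to the state and language logits.

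\textbf{Joint logits.} On the fixed active set $I$ (Assumption~\ref{ass:app_fixed_active_set}), the weights $\alpha_{b,j}$ with $j\notin I$ are identically zero. Hence $\mathbf{S}_b(c)$ depends on the logits only through $\boldsymbol{\alpha}_{b,I}=\operatorname{softmax}(\mathbf{z}_{b,I})$. By \eqref{eq:app_grad_alpha_exact}, $\partial\mathcal{L}/\partial\alpha_{b,j}=\psi_{b,j}$. The chain rule then gives
\[
\frac{\partial\mathcal{L}}{\partial z_{b,m}}=\sum_{j\in I}\psi_{b,j}\,\frac{\partial\alpha_{b,j}}{\partial z_{b,m}} .
\]
Substituting the Jacobian $\operatorname{diag}(\boldsymbol{\alpha})-\boldsymbol{\alpha}\boldsymbol{\alpha}^\top$ from \eqref{eq:app_softmax_jacobian}, the sum becomes $\alpha_{b,m}\psi_{b,m}-\alpha_{b,m}\sum_{j\in I}\alpha_{b,j}\psi_{b,j}=\alpha_{b,m}(\psi_{b,m}-\bar\psi_b)$. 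This is \eqref{eq:app_joint_logit_gradient}.

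\textbf{State logits.} I would use the fused form $z_{b,m}=\zeta_{b,m}+\tau_{\rm lang}\log p_m(c)$. By Lemma~\ref{lem:app_equivalent_raw_logit_fusion}, the same routing weights also arise from the raw form $\zeta_{b,m}+\tau_{\rm lang}\rho_m$. In either form $\partial z_{b,m}/\partial\zeta_{b,j}=\delta_{mj}$, so \eqref{eq:app_state_logit_gradient} follows immediately.

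\textbf{Language logits.} This is the step needing care, because $\log p_m$ depends on all $\rho_j$ through the normalizer, including $j\notin I$. I would take the derivative through the raw representation $z'_{b,m}=\zeta_{b,m}+\tau_{\rm lang}\rho_m$. By Lemma~\ref{lem:app_equivalent_raw_logit_fusion}, $z'$ differs from $z$ by a constant shift $\tau_{\rm lang}\log\sum_j e^{\rho_j}$ that is common to all $m\in I$. A common shift leaves the softmax on $I$ unchanged, and on the fixed active-set region it also leaves the top-$k$ selection unchanged. So the loss, viewed as a function of $\rho$, equals $\mathcal{L}(\operatorname{softmax}(\mathbf{z}'_{b,I}))$.

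Two cases then arise. For $m\in I$, the chain rule gives $\partial\mathcal{L}/\partial\rho_m=\tau_{\rm lang}\,\partial\mathcal{L}/\partial z'_{b,m}=\tau_{\rm lang}\alpha_{b,m}(\psi_{b,m}-\bar\psi_b)$, by the same computation as in the joint-logit step. For $m\notin I$, the derivative vanishes. I would remark that this case lies outside the statement's scope $m\in I$.

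Equivalently, one can differentiate $\log p_m$ directly. The normalizer contributes the term $-\tau_{\rm lang}p_m\sum_{j\in I}\partial\mathcal{L}/\partial z_{b,j}$, and this sum is zero because $\sum_{j\in I}\alpha_{b,j}(\psi_{b,j}-\bar\psi_b)=0$. This is the main obstacle I expect, and I would present this cancellation explicitly as the justification.
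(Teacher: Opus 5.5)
Your proposal is correct and follows the paper's proof: you combine $\partial\mathcal{L}/\partial\alpha_{b,j}=\psi_{b,j}$ with the softmax Jacobian of Lemma~\ref{lem:app_softmax_jacobian} to get the joint-logit formula, then apply the chain rule through the raw-logit form $z_{b,m}=\zeta_{b,m}+\tau_{\mathrm{lang}}\rho_m(c)$ of Lemma~\ref{lem:app_equivalent_raw_logit_fusion} to get the state- and language-logit formulas. The paper differentiates the raw form directly, whereas you also show that the normalizer of $\log p_m(c)$ contributes $-\tau_{\mathrm{lang}}p_m(c)\sum_{j\in I}\alpha_{b,j}(c)(\psi_{b,j}-\bar\psi_b)=0$, which makes explicit a step the paper leaves implicit.
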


\begin{proof}
From ~\ref{thm:app_blockwise_gradient_factorization}, $\frac{\partial\mathcal{L}}{\partial \alpha_{b,j}(c)} = \psi_{b,j}$. On the fixed active set, $\alpha_{b,j}(c)=\frac{\exp(z_{b,j}(c))}{\sum_{q\in I}\exp(z_{b,q}(c))}$. From ~\ref{lem:app_softmax_jacobian},
\[
\frac{\partial\alpha_{b,j}(c)}{\partial z_{b,m}(c)} = \alpha_{b,j}(c)(\delta_{jm}-\alpha_{b,m}(c))
\]
Using the chain rule,
\begin{multline*}
\frac{\partial\mathcal{L}}{\partial z_{b,m}(c)} = \sum_{j\in I} \frac{\partial\mathcal{L}}{\partial\alpha_{b,j}(c)} \frac{\partial\alpha_{b,j}(c)}{\partial z_{b,m}(c)} = \sum_{j\in I}\psi_{b,j}\alpha_{b,j}(c)(\delta_{jm}-\alpha_{b,m}(c))\\ = \psi_{b,m}\alpha_{b,m}(c) - \alpha_{b,m}(c)\sum_{j\in I}\psi_{b,j}\alpha_{b,j}(c) = \alpha_{b,m}(c)(\psi_{b,m}-\bar\psi_b)
\end{multline*}
Hence, this proves~\eqref{eq:app_joint_logit_gradient}. From Lemma~\ref{lem:app_equivalent_raw_logit_fusion},
\[
z_{b,m}(c)=\zeta_{b,m}+\tau_{\mathrm{lang}}\rho_m(c)
\]
\[
\implies \frac{\partial z_{b,m}(c)}{\partial \zeta_{b,m}}=1 \ \text{and} \ \frac{\partial z_{b,m}(c)}{\partial \rho_m(c)}=\tau_{\mathrm{lang}}
\]
Using the chain rule gives~\eqref{eq:app_state_logit_gradient} and~\eqref{eq:app_language_logit_gradient}.
\end{proof}

%% file: Chapters_Appendix/7x_other.tex
\subsection{Compute Resources}\label{s:Compute}

Experiments were conducted on a heterogeneous computing landscape comprising resources from the Texas Advanced Computing Center (TACC), Lambda, Inc., RunPod, and local NVIDIA RTX 4090 GPUs. The total runtime across this setup was approximately 720 A100-equivalent hours.

\subsection{Dataset Details}\label{s:Licenses}

\paragraph{Synthetic regression functions.}
The noisy non-convex regression tasks use analytic benchmark functions to Surjanovic and Bingham's \emph{Virtual Library of Simulation Experiments: Test Functions and Datasets}. In the paper, we generate synthetic samples ourselves from these analytic functions; we do not redistribute a third-party raw dataset from this source. See also: 
\begin{itemize}
    \item[] \url{https://www.sfu.ca/~ssurjano/optimization.html}
\end{itemize}

\paragraph{Pretrained model checkpoints.}
The LLM experiments use pretrained checkpoints as frozen backbones with parameter-efficient adapters. We do not redistribute the original model weights. Any released adapter should preserve the model-card citations and should be used
only with the corresponding upstream model license.
\begin{itemize}[leftmargin=*]
    \item \textbf{Qwen0.5B.} Version used: \texttt{Qwen/Qwen2.5-0.5B} base causal language model. Original owner/creator: Qwen Team. License: Apache License 2.0. See also: model card, \url{https://huggingface.co/Qwen/Qwen2.5-0.5B}; license text, \url{https://www.apache.org/licenses/LICENSE-2.0}; technical report, \url{https://arxiv.org/abs/2407.10671}.

    \item \textbf{Mistral7B.} Version used: \texttt{mistralai/Mistral-7B-v0.1}, original owner/creator: Mistral AI. License: Apache License 2.0. See also: model card, \url{https://huggingface.co/mistralai/Mistral-7B-v0.1}; license text, \url{https://www.apache.org/licenses/LICENSE-2.0}; paper, \url{https://arxiv.org/abs/2310.06825}.
        \item \textbf{SmolLM2-360M-Instruct.} Version used: \texttt{HuggingFaceTB/SmolLM2-360M-Instruct}. Original owner/creator: Hugging Face Smol Models Research. License: Apache License 2.0. See also: model card, \url{https://huggingface.co/HuggingFaceTB/SmolLM2-360M-Instruct}; license text, \url{https://www.apache.org/licenses/LICENSE-2.0}; paper, \url{https://arxiv.org/abs/2502.02737}.

    \item \textbf{LFM2-700M.} Version used: \texttt{LiquidAI/LFM2-700M}. Original owner/creator: Liquid AI, Inc. License: LFM Open License v1.0, identified on Hugging Face as \texttt{lfm1.0}; this is a custom license with commercial-use limitations. See also: model card, \url{https://huggingface.co/LiquidAI/LFM2-700M}; license text, \url{https://huggingface.co/LiquidAI/LFM2-700M/blob/main/LICENSE}; technical report, \url{https://arxiv.org/abs/2511.23404}.

    \item \textbf{LFM2.5-350M.} Version used: \texttt{LiquidAI/LFM2.5-350M}. Original owner/creator: Liquid AI, Inc. License: LFM Open License v1.0, identified on Hugging Face as \texttt{lfm1.0}; this is a custom license with commercial-use limitations. See also: model card, \url{https://huggingface.co/LiquidAI/LFM2.5-350M}; license text, \url{https://huggingface.co/LiquidAI/LFM2.5-350M/blob/main/LICENSE}; technical report, \url{https://arxiv.org/abs/2511.23404}.

    \item \textbf{Qwen2.5-0.5B-Instruct.} Version used: \texttt{Qwen/Qwen2.5-0.5B-Instruct}. Original owner/creator: Qwen Team. License: Apache License 2.0. See also: model card, \url{https://huggingface.co/Qwen/Qwen2.5-0.5B-Instruct}; license text, \url{https://www.apache.org/licenses/LICENSE-2.0}; technical report, \url{https://arxiv.org/abs/2407.10671}.

    \item \textbf{Qwen2.5-Coder-0.5B-Instruct.} Version used: \texttt{Qwen/Qwen2.5-Coder-0.5B-Instruct}. Original owner/creator: Qwen Team. License: Apache License 2.0. See also: model card, \url{https://huggingface.co/Qwen/Qwen2.5-Coder-0.5B-Instruct}; license text, \url{https://www.apache.org/licenses/LICENSE-2.0}; Qwen2.5-Coder technical report, \url{https://arxiv.org/abs/2409.12186}; Qwen2.5 technical report, \url{https://arxiv.org/abs/2407.10671}.

    \item \textbf{Qwen3-0.6B.} Version used: \texttt{Qwen/Qwen3-0.6B}. Original owner/creator: Qwen Team. License: Apache License 2.0. See also: model card, \url{https://huggingface.co/Qwen/Qwen3-0.6B}; license text, \url{https://www.apache.org/licenses/LICENSE-2.0}; technical report, \url{https://arxiv.org/abs/2505.09388}.

    \item \textbf{ReasonLite-0.6B.} Version used: \texttt{amd/ReasonLite-0.6B}. Original owner/creator: Advanced Micro Devices, Inc. (AMD) and the ReasonLite project contributors. License: ReasonLite Open RAIL-D / OpenRAIL, with responsible-use restrictions; the upstream model card identifies the model as distilled from \texttt{Qwen/Qwen3-0.6B}, so applicable upstream Qwen3 notices should also be preserved. See also: model card, \url{https://huggingface.co/amd/ReasonLite-0.6B}; license text, \url{https://huggingface.co/amd/ReasonLite-0.6B/blob/main/LICENSE-AMD-OpenRAIL-D}; project repository, \url{https://github.com/AMD-AGI/ReasonLite}; technical article, \url{https://www.amd.com/en/developer/resources/technical-articles/2026/introducing-reasonlite-0-6b.html}.

    \item \textbf{ReasonLite-0.6B-Turbo.} Version used: \texttt{amd/ReasonLite-0.6B-Turbo}. Original owner/creator: Advanced Micro Devices, Inc. (AMD) and the ReasonLite project contributors. License: ReasonLite Open RAIL-D / OpenRAIL, with responsible-use restrictions; the upstream model card identifies the model as distilled from \texttt{Qwen/Qwen3-0.6B}, so applicable upstream Qwen3 notices should also be preserved. See also: model card, \url{https://huggingface.co/amd/ReasonLite-0.6B-Turbo}; license text, \url{https://huggingface.co/amd/ReasonLite-0.6B-Turbo/blob/main/LICENSE-AMD-OpenRAIL-D}; project repository, \url{https://github.com/AMD-AGI/ReasonLite}; technical article, \url{https://www.amd.com/en/developer/resources/technical-articles/2026/introducing-reasonlite-0-6b.html}.

    \item \textbf{Granite-4.0-350M.} Version used: \texttt{ibm-granite/granite-4.0-350m}. Original owner/creator: Granite Team, IBM. License: Apache License 2.0. See also: model card, \url{https://huggingface.co/ibm-granite/granite-4.0-350m}; license text, \url{https://www.apache.org/licenses/LICENSE-2.0}; project repository, \url{https://github.com/ibm-granite/granite-4.0-nano-language-models}; documentation, \url{https://www.ibm.com/granite/docs/models/granite}.
\end{itemize}

\paragraph{Language-model benchmarks and datasets.}
For all benchmarks below, we use the standard public versions. We do not repackage these datasets as new datasets, and we do not quote or reproduce individual held-out examples in the paper or in release materials.
\begin{itemize}[leftmargin=*]
    \item \textbf{GSM8K.} Version used: \texttt{openai/gsm8k}, standard train/test split of grade-school math word problems. Original owners/creators: OpenAI; Cobbe et al. License: MIT License. See also: dataset card, \url{https://huggingface.co/datasets/openai/gsm8k}; paper, \url{https://arxiv.org/abs/2110.14168}; license text, \url{https://opensource.org/licenses/MIT}.

    \item \textbf{MATH.} Version used: the MATH benchmark from
    \texttt{hendrycks/math}, with the common Hugging Face mirror
    \texttt{EleutherAI/hendrycks\_math}. Original owners/creators: Hendrycks, Burns, Kadavath, Arora, Basart, Tang, Song, and Steinhardt. License: MIT License. See also: official repository, \url{https://github.com/hendrycks/math}; Hugging Face mirror, \url{https://huggingface.co/datasets/EleutherAI/hendrycks_math}; paper, \url{https://arxiv.org/abs/2103.03874}; license text, \url{https://opensource.org/licenses/MIT}.

    \item \textbf{Orca-Math.} Version used:
    \texttt{microsoft/orca-math-word-problems-200k}, default training split of approximately 200K grade-school math word problems. Original owner/creator: Microsoft; Mitra, Khanpour, Rosset, and Awadallah. License: MIT License. See also: dataset card, \url{https://huggingface.co/datasets/microsoft/orca-math-word-problems-200k}; paper, \url{https://arxiv.org/abs/2402.14830}; license text, \url{https://opensource.org/licenses/MIT}.

    \item \textbf{Omni-MATH.} Version used: \texttt{KbsdJames/Omni-MATH}, default test split. Original owners/creators: the Omni-MATH authors. License: Apache License 2.0. See also: dataset card, \url{https://huggingface.co/datasets/KbsdJames/Omni-MATH}; paper, \url{https://arxiv.org/abs/2410.07985}; license text, \url{https://www.apache.org/licenses/LICENSE-2.0}.

    \item \textbf{NuminaMath-CoT.} Version used:
    \texttt{AI-MO/NuminaMath-CoT}, default public split of approximately 860K mathematical problem-solution pairs with Chain-of-Thought (CoT) solutions. Original owners/creators: Numina / Project Numina; Jia Li and others. License: Apache License 2.0. Data as drawn from sources including Chinese high-school mathematics exercises, US and international mathematics olympiad problems, online exam-paper PDFs, and mathematics discussion forums, with processing steps including OCR, segmentation into problem-solution pairs, translation into English, realignment into a CoT reasoning format, and final-answer formatting. We use the public dataset for research/training and evaluation purposes, report only aggregate results, and do not reproduce individual examples. See also: dataset card,
    \url{https://huggingface.co/datasets/AI-MO/NuminaMath-CoT}; project repository,
    \url{https://github.com/project-numina/aimo-progress-prize}; license text,
    \url{https://www.apache.org/licenses/LICENSE-2.0}; dataset paper/report,
    \url{https://github.com/project-numina/aimo-progress-prize/blob/main/report/numina_dataset.pdf}.

    \item \textbf{GPQA-Diamond.} Version used: \texttt{Idavidrein/gpqa}, diamond split/file \texttt{gpqa\_diamond.csv}. Original owners/creators: Rein, Hou, Stickland, Petty, Pang, Dirani, Michael, and Bowman. License: Creative Commons Attribution 4.0 International (CC BY 4.0). Access terms: the upstream dataset is gated and requires agreeing not to reveal examples from the dataset in plain text or images online; we respect this by reporting only aggregate results and not reproducing examples. See also: dataset card, \url{https://huggingface.co/datasets/Idavidrein/gpqa}; source repository, \url{https://github.com/idavidrein/gpqa}; paper, \url{https://arxiv.org/abs/2311.12022}; license text, \url{https://creativecommons.org/licenses/by/4.0/}.

    \item \textbf{MBPP.} Version used:
    \texttt{google-research-datasets/mbpp}, full and/or sanitized splits as used by the evaluation loader. Original owners/creators: Google Research; Austin et al. License: Creative Commons Attribution 4.0 International (CC BY 4.0). See also: dataset card, \url{https://huggingface.co/datasets/google-research-datasets/mbpp}; original repository, \url{https://github.com/google-research/google-research/tree/master/mbpp}; paper \url{https://arxiv.org/abs/2108.07732}; license text, \url{https://creativecommons.org/licenses/by/4.0/}.

    \item \textbf{AI2 ARC.} Version used: \texttt{allenai/ai2\_arc}, ARC-Challenge and/or ARC-Easy splits used in evaluation. Original owners/creators: Allen Institute for AI; Clark et al. License: Creative Commons Attribution-ShareAlike 4.0 International (CC BY-SA 4.0). We report aggregate scores only and do not redistribute modified dataset contents; any redistributed derivative of the dataset would need to preserve the CC BY-SA 4.0 attribution and share-alike requirements. See also: dataset card, \url{https://huggingface.co/datasets/allenai/ai2_arc}; paper, \url{https://arxiv.org/abs/1803.05457}; license text,
    \url{https://creativecommons.org/licenses/by-sa/4.0/}.

    \item \textbf{SuperGLUE.} Version used: \texttt{aps/super\_glue}, the standard SuperGLUE benchmark tasks used by the evaluation loader. Original owners/creators: Wang et al. and the original creators of each component task. License/terms: composite/other; the upstream dataset card states that the primary SuperGLUE tasks are built on or derived from existing datasets and refers users to the original licenses for each component dataset, while noting that the licenses permit use and redistribution in a research context. We use SuperGLUE for research evaluation only and do not redistribute component task data. See also: dataset card, \url{https://huggingface.co/datasets/aps/super_glue}; benchmark website, \url{https://super.gluebenchmark.com/}; paper,
    \url{https://arxiv.org/abs/1905.00537}.

    \item \textbf{OpenBookQA.} Version used: OpenBookQA v1.0, via the official AllenAI repository and/or the common Hugging Face mirror \texttt{allenai/openbookqa}. Original owners/creators: Mihaylov, Clark, Khot, and Sabharwal; Allen Institute for AI. License/terms: the official AllenAI repository that distributes the OpenBookQA code, download scripts, and release information is licensed under Apache License 2.0; the Hugging Face dataset metadata may not provide a separate SPDX dataset license, so we preserve the official repository notice, cite the paper, and do not redistribute question data. See also: official repository,
    \url{https://github.com/allenai/OpenBookQA}; Hugging Face mirror,
    \url{https://huggingface.co/datasets/allenai/openbookqa}; paper,
    \url{https://arxiv.org/abs/1809.02789}; license text,
\url{https://www.apache.org/licenses/LICENSE-2.0}.

    \item \textbf{RACE.} Version used: RACE reading-comprehension dataset, via the official CMU page and/or Hugging Face mirror \texttt{ehovy/race}. Original owners/creators: Lai, Xie, Liu, Yang, and Hovy. License and terms: custom/other; the upstream terms state that RACE is available for non-commercial research only, that passages were obtained from the Internet and are not property of Carnegie Mellon University, and that users may not reproduce, duplicate, copy, sell, trade, resell, or exploit the contexts or derived data for any commercial purpose. We use RACE only for non-commercial research evaluation, report aggregate scores, and do not reproduce passages. See also: official dataset page,
    \url{http://www.cs.cmu.edu/~glai1/data/race/}; Hugging Face card,
    \url{https://huggingface.co/datasets/ehovy/race}; paper,
    \url{https://arxiv.org/abs/1704.04683}.
\end{itemize}

\section{Acknowledgements}
This research was supported in part by Lambda, Inc.